\def\eqref#1{equation~\ref{#1}}
\def\1{\bm{1}}
\DeclareMathAlphabet{\mathsfit}{\encodingdefault}{\sfdefault}{m}{sl}
\SetMathAlphabet{\mathsfit}{bold}{\encodingdefault}{\sfdefault}{bx}{n}
\newtheorem{theorem}{Theorem}
\newtheorem{lemma}{Lemma}
\newtheorem{definition}{Definition}
\xpretocmd{\algorithm}{\hrule height 0.8pt\vspace{-2pt}}{}{}
\xapptocmd{\endalgorithm}{\vspace{-2pt}\hrule height 0.8pt}{}{}
\title{Quasi Random Physics-informed Neural Networks}
\let\@fnsymbol\@arabic
\author{Tianchi Yu \thanks{Skolkovo Institute of Science and Technology, corresponding author: \textit{tianchi.yu@skoltech.ru}.} \\
\And
Ivan Oseledets \thanks{Skolkovo Institute of Science and Technology; AIRI.} \\
}
\begin{document}

\maketitle
\begin{abstract}
Physics-informed neural networks have shown promise in solving partial differential equations (PDEs) by integrating physical constraints into neural network training, but their performance is sensitive to the sampling of points. Based on the impressive performance of quasi Monte-Carlo methods in high dimensional problems, this paper proposes Quasi-Random Physics-Informed Neural Networks (QRPINNs), which use low-discrepancy sequences for sampling instead of random points directly from the domain. Theoretically, QRPINNs have been proven to have a better convergence rate than PINNs. Empirically, experiments demonstrate that QRPINNs significantly outperform PINNs and some representative adaptive sampling methods, especially in high-dimensional PDEs. Furthermore, combining QRPINNs with adaptive sampling can further improve the performance.
\end{abstract}
\section{Introduction}\label{section: introduction}
Within scientific computing, solving partial differential equations (PDEs) is central to a wide range of applications, spanning across diverse fields such as fluid dynamics \cite{jin2021nsfnets}, heat transfer \cite{ozicsik2017finite, reddy2022finite}, and climate prediction \cite{saha2014ncep}. Traditional numerical methods \cite{thomas2013numerical,zienkiewicz2005finite,patankar2018numerical} have been the cornerstone for numerical PDEs in scientific computing since the inception of scientific computing. However, these methods face challenges when dealing with high-dimensional problems. In high-dimensional PDEs, the "curse of dimensionality" limits these traditional methods, leading to a drastic and catastrophic increase in computational cost and memory requirements \cite{constantine2015active,trefethen2017cubature}.

In recent years, Physics-Informed Neural Networks (PINNs) \cite{raissi2019physics} have emerged as a revolutionary approach for solving numerical PDEs which can overcome the curse of dimensionality \cite{wojtowytsch2020can,hu2024tackling}. PINNs integrate the power of deep learning with physical laws by incorporating PDE constraints, including governing equations and their conditions, into the loss function of a neural network with their integration form. To estimate the integration by discrete points the Monte-Carlo method (MC) \cite{kalos2009monte} is used in PINNs. However, the performance of MC is highly sensitive to the selection and distribution of sampled points. 

Many adaptive sampling strategies have been proposed to improve the performance of PINNs. \cite{wu2023comprehensive} developed the residual-based adaptive distribution (RAD) method, selecting high residual points from a candidate pool to update training sets. \cite{peng2022rang} introduced RANG, a residual-based adaptive node generation method for refining collocation points in high-residual regions. \cite{nabian2021efficient} proposed importance sampling for efficient PINN training, prioritizing collocation points with large residuals. \cite{gao2023failure} proposed FI-PINNs, defining failure probability based on residuals to enrich sampling in failure regions. \cite{wu2024ropinn} proposed RoPINN that extends PINN from sampling isolated points to their simply connected neighborhood regions. \cite{tang2023pinns} introduced Adversarial Adaptive Sampling (AAS), an adversarial framework unifying PINNs and optimal transport to push residual distributions toward uniformity via Wasserstein distance. \cite{lau2024pinnacle} presented ACLE, to optimize training points using neural tangent kernel (NTK) to computing the convergence degree for every point.

In addition to changing the random distribution to enhance the accuracy of MC, Quasi-Monte Carlo (QMC) methods \cite{morokoff1995quasi} are proposed to improve efficiency and convergence properties by introducing a quasi-random distribution.Based on aforementioned merits, QMC has received increasing attention in many fields, including optimization \cite{drew2006quas}, finance \cite{goncu2009monte} and particle physics \cite{kleiss2006error}. Furthermore, since QMC lacks randomness and statistical rigor, Randomized Quasi-Monte Carlo methods \cite{l2016randomized} are proposed to add random shifts or perturbations to QMC.

In this paper, we propose the Quasi-Random Physics-Informed Neural Networks (QRPINNs), which randomly sample points from low-discrepancy sequences rather than the input domain. Because of the more evenly distributed points, the QRPINNs have impressive performance in high-dimensional PDEs. Moreover, randomly sampling from deterministic points helps implement QMC successfully in machine learning tasks by improving convergence and efficiency.

Our specific contributions can be summarized as follows:
\begin{enumerate}
    \item We reveal that in high-dimensional PDEs, using low-discrepancy sequences is better than directly sampling from the high-dimensional domain.
    \item We propose the Quasi-Random Physics-Informed Neural Networks, and theoretically prove that its convergence rate is better than PINNs.
    \item We have conducted comprehensive experiments to provide convinced results that demonstrate the performance of Quasi-Random Physics-Informed Neural Networks.
\end{enumerate}

The paper is structured as follows: 
In \cref{Section: Methods}, we briefly introduce Physics-Informed Neural Networks, Monte-Carlo methods and Quasi-Monte Carlo methods; propose Quasi-Random Physics-Informed Neural Networks; and provide the corresponding convergence theorem and its proof. In \cref{Section: experiments}, we compare our QRPINNs with several representative and competitive sampling methods in several equations. In \cref{Section: conclusion}, we conclude the principal findings, acknowledge the study's limitations, and discuss subsequent exploration.

\section{Methods}\label{Section: Methods}
\subsection{Physics-informed neural networks (PINNs)}
We briefly review the physics-informed neural networks (PINNs) \cite{raissi2019physics} in the context of
inferring the solutions of PDEs. Generally, we consider time-dependent PDEs for $\boldsymbol{u}$ taking the form
\begin{equation}
\begin{aligned}
    & \partial_{t}\boldsymbol{u}+\mathcal{N}[\boldsymbol{u}]=0, \quad t \in[0, T],\ \boldsymbol{x} \in \Omega, \\
    & \boldsymbol{u}(0, \boldsymbol{x})=\boldsymbol{g}(\boldsymbol{x}), \quad \boldsymbol{x} \in \Omega, \\
    & \mathcal{B}[\boldsymbol{u}]=0, \quad t \in[0, T],\ \boldsymbol{x} \in \partial \Omega,
\end{aligned}\label{PDE}
\end{equation}
where $\mathcal{N}$ is the differential operator, $\Omega$ is the domain of grid points, and $\mathcal{B}$ is the boundary operator. 

The ambition of PINNs is to approximate the unknown solution $\boldsymbol{u}$ to the PDE system \cref{PDE}, by optimizing a neural network $\boldsymbol{u}^{\theta}$, where $\theta$ denotes the trainable parameters of the neural network. The constructed loss function is:
\begin{equation}\label{PINN}
\mathcal{L}(\theta)=\mathcal{L}_{i c}(\theta)+\mathcal{L}_{b c}(\theta)+\mathcal{L}_r(\theta) ,
\end{equation}
where
\begin{equation}\label{eq: split pinn loss}
\begin{aligned}
& \mathcal{L}_r(\theta)=\frac{1}{N_r} \sum_{i=1}^{N_r}\varepsilon_r(\theta,t_r^i, \boldsymbol{x}_r^i), \quad \varepsilon_r(\theta,t,\boldsymbol{x})=\left|\partial_{t}\boldsymbol{u}^\theta\left(t,\boldsymbol{x}\right)+\mathcal{N}\left[\boldsymbol{u}^\theta\right]\left(t,\boldsymbol{x}\right)\right|^2,\\
& \mathcal{L}_{i c}(\theta)=\frac{1}{N_{i c}} \sum_{i=1}^{N_{i c}}\varepsilon_{ic}\left(\boldsymbol{x}_{i c}^i\right), \quad \varepsilon_{ic}\left(\boldsymbol{x}\right)=\left|\boldsymbol{u}^\theta\left(0, \boldsymbol{x}\right)-\boldsymbol{g}\left(\boldsymbol{x}\right)\right|^2, \\
& \mathcal{L}_{b c}(\theta)=\frac{1}{N_{b c}} \sum_{i=1}^{N_{b c}}\varepsilon_{bc}\left(t_{b c}^i, \boldsymbol{x}_{b c}^i\right), \quad \varepsilon_{bc}\left(t, \boldsymbol{x}\right)=\left|\mathcal{B}\left[\boldsymbol{u}^\theta\right]\left(t, \boldsymbol{x}\right)\right|^2, \\
\end{aligned}
\end{equation}
corresponds to the three equations in \cref{PDE} individually; $\boldsymbol{x}_{i c},\boldsymbol{x}_{b c},\boldsymbol{x}_{r}$ are the sampled points from the initial constraint, boundary constraint, and residual constraint, respectively; $N_{i c},N_{b c},N_{r}$ are the total number of sampled points for each constraint, correspondingly.

\subsection{Monte-Carlo methods}
The Monte Carlo method is a numerical method that uses random sampling to obtain numerical results. It's often used to estimate quantities that are difficult or impossible to compute exactly, especially when dealing with complex systems or high-dimensional problems. Given function $f: 
\mathbb{R}^{d}\rightarrow \mathbb{R}$ , where $d$ is the dimensionality, and the randomly sampling set $\mathcal{X}=\{\boldsymbol{x}_i\}_{i=1}^N$, the integration of $f$ by Monte Carlo methods is:
\begin{equation}
    I\left(f\right):=\int_\Omega f(\boldsymbol{x})\mathrm{d}\boldsymbol{x}\approx I_{\text{MC}}(f) :=\frac{1}{N}\sum_{i=1}^N f(\boldsymbol{x}_i),
\end{equation}
where $\Omega \subset \mathbb{R}^d$ is a definite set and $\mathcal{X}\subset \Omega$. 

Monte Carlo methods have wildly applications in several fields. For example, solving high-dimensional distribution functions in gas dynamics \cite{moss2005direct}, modeling uncertainty in financial markets \cite{glasserman2004monte}, and simulating systems with many coupled degrees of freedom in cellular structures \cite{graner1992simulation}. 

In machine learning, especially in the context of PINNs, Monte Carlo methods are used to approximation the integration of the target equations, because its convergence rate is $\mathcal{O}\left(N^{-1/2}\right)$ (the proof is provided in \cref{Appendix: MC convergence}) which is independent of the dimensionality $d$. 

\subsubsection{Monte Carlo Methods in PINNs}
For a PDE system \cref{PDE}, suppose the analytic solution is $u^*$ (Without loss of generality, here we consider $u\in \mathcal{H}$ that $u:\mathbb{R}^d\rightarrow \mathbb{R}$, where $\mathcal{H}$ is the Hilbert space.) and the output solution from the network is $u^{\theta^*}$ defined as follows:
\begin{equation}
    \theta^* := \arg\min_\theta \mathcal{L}(\theta).
\end{equation}
Note that, every single equation in \cref{eq: split pinn loss} can be regarded as a Monte Carlo approximation, \textit{i.e.} 
\begin{equation}\label{eq: mc trend pinn loss} 
\begin{aligned}
& \frac{1}{N_r} \sum_{i=1}^{N_r}\left|\partial_{t}u\left(t_r^i, \boldsymbol{x}_r^i\right)+\mathcal{N}\left[u\right]\left(t_r^i, \boldsymbol{x}_r^i\right)\right|^2 \rightarrow \int_t\int_{\Omega} \left|\partial_{t}u\left(t, \boldsymbol{x}\right)+\mathcal{N}\left[u\right]\left(t, \boldsymbol{x}\right)\right|^2\mathrm{d}\boldsymbol{x}\mathrm{d}t,\\
& \frac{1}{N_{i c}} \sum_{i=1}^{N_{i c}}\left|u\left(0, \boldsymbol{x}_{i c}^i\right)-g\left(\boldsymbol{x}_{i c}^i\right)\right|^2 \rightarrow \int_\Omega \left|u\left(0, \boldsymbol{x}\right)-g\left(\boldsymbol{x}\right)\right|^2\mathrm{d}\boldsymbol{x}, \\
& \frac{1}{N_{b c}} \sum_{i=1}^{N_{b c}}\left|\mathcal{B}\left[u\right]\left(t_{b c}^i, \boldsymbol{x}_{b c}^i\right)\right|^2\rightarrow \int_t\int_{\partial\Omega}\left|\mathcal{B}\left[u\right]\left(t, \boldsymbol{x}\right)\right|^2\mathrm{d}\boldsymbol{x}\mathrm{d}t, \\
\end{aligned}
\end{equation}
with $N_r, N_{ic}, N_{bc} \rightarrow \infty$, if the integrand is Riemann-integrable functions in $[0,T]\times\Omega$ (see \cite{kuipers2012uniform}).  Suppose that
\begin{equation}\label{eq: loss_int}
\begin{aligned}
        \mathcal{L}_{int}(\theta):= & \int_t\int_{\Omega} \left|\partial_{t}u^\theta\left(t, \boldsymbol{x}\right)+\mathcal{N}\left[u^\theta\right]\left(t, \boldsymbol{x}\right)\right|^2\mathrm{d}\boldsymbol{x}\mathrm{d}t \\
        & +\int_\Omega \left|u^\theta\left(0, \boldsymbol{x}\right)-g\left(\boldsymbol{x}\right)\right|^2\mathrm{d}\boldsymbol{x}\\
        & + \int_t\int_{\partial\Omega}\left|\mathcal{B}\left[u^\theta\right]\left(t, \boldsymbol{x}\right)\right|^2\mathrm{d}\boldsymbol{x}\mathrm{d}t. \\
        \theta^*_\infty := & \arg \min_\theta \mathcal{L}_{int}(\theta).
\end{aligned}
\end{equation}
Then,
\begin{equation}
    \|u^*-u^{\theta^*}\|_{L^2(\boldsymbol{\Omega})}\leq \|u^*-u^{\theta^*_\infty}\|_{L^2(\boldsymbol{\Omega})} +\|u^{\theta^*_\infty}-u^{\theta^*}\|_{L^2(\boldsymbol{\Omega})},
\end{equation}
where $\boldsymbol{\Omega}=[0,T]\times\Omega$. However, as $u^{\theta^*}$ is dependent on the randomly sampled points. Like Monte Carlo methods, the error analysis of the above equation requires an expectation:
\begin{equation}
    \mathbb{E}\left[\|u^*-u^{\theta^*}\|_{L^2(\boldsymbol{\Omega})}\right]\leq \|u^*-u^{\theta^*_\infty}\|_{L^2(\boldsymbol{\Omega})} +\mathbb{E}\left[\|u^{\theta^*_\infty}-u^{\theta^*}\|_{L^2(\boldsymbol{\Omega})}\right].
\end{equation}

The first term $\|u^*-u^{\theta^*_\infty}\|_{L(\boldsymbol{\Omega})}$ depends on the size of the neural network, and the smoothness of $u$; the second term $\mathbb{E}\left[\|u^{\theta^*_\infty}-u^{\theta^*}\|_{L(\boldsymbol{\Omega})}\right]$ depends on how well the summation represents the integration, \textit{i.e.} the error of the Monte Carlo methods. If $u$ is smooth and the neural network can hold that $\|u^*-u^{\theta^*_\infty}\|_{L^2(\boldsymbol{\Omega})} \ll \mathbb{E}\left[\|u^{\theta^*_\infty}-u^{\theta^*}\|_{L^2(\boldsymbol{\Omega})}\right]$, then $    \mathbb{E}\left[\|u^*-u^{\theta^*}\|_{L^2(\boldsymbol{\Omega})}\right]\leq \mathbb{E}\left[\|u^{\theta^*_\infty}-u^{\theta^*}\|_{L^2(\boldsymbol{\Omega})}\right]$. Combining with \cref{theorem: convergence}, we can claim that, under specific requirements of the neural networks, the convergence rate of PINN is the same as the order of the quadrature. Thus, the PINNs can suffer from the curse of dimensionality theoretically.

\begin{theorem}\label{theorem: convergence}
    If the PDE system is well defined and the solution $u \in \mathcal{H}$ is a smooth function, then there exists a constant r, such that for $u^\theta \in B_r(u^{\theta^*})$ where $B_r(x)$ is a compact ball with radius $r$,  $\mathbb{E}\left[\|u^{\theta^*_\infty}-u^{\theta^*}\|_{L^2(\boldsymbol{\Omega})}\right] = \mathcal{O}\left(\mathbb{E}\left[\left|\mathcal{L}_{int}(\theta)-\mathcal{L}(\theta)\right|\right]\right)$. 
\end{theorem}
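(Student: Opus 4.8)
The plan is to reduce the statement to a perturbation estimate for the minimizers of two nearby smooth objectives — the integral loss $\mathcal{L}_{int}$ and the sampled loss $\mathcal{L}$ — and then to transport that estimate from parameter space to $L^2(\boldsymbol{\Omega})$ through the parametrization $\theta \mapsto u^\theta$. Throughout I would read the hypothesis ``$u^\theta \in B_r(u^{\theta^*})$'' as restricting attention to the local picture in which $\theta^*$ and $\theta^*_\infty$ are the minimizers sitting in a common small neighbourhood, so that a second-order expansion around either one is legitimate.

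First I would use ``well-posed PDE'' and ``$u$ smooth'' to obtain local quadratic growth of $\mathcal{L}_{int}$ around $\theta^*_\infty$: one picks $r$ small enough that on $\{\theta : u^\theta \in B_r(u^{\theta^*})\}$ the map $\theta \mapsto \mathcal{L}_{int}(\theta)$ is $C^2$ and the smallest eigenvalue of its Hessian is bounded below by some $\mu > 0$. This is where well-posedness enters: a stability estimate for the linearized operator $\partial_t + \mathcal{N}'[u^*]$ together with the initial and boundary operators gives a coercivity inequality $\|v\|_{L^2(\boldsymbol{\Omega})} \le C\big(\|\partial_t v + \mathcal{N}'[u^*]v\|_{L^2} + \|v(0,\cdot)\|_{L^2} + \|\mathcal{B}'[u^*]v\|_{L^2}\big)$, and composing with the Jacobian of $\theta\mapsto u^\theta$ turns this (via the Gauss--Newton form of the Hessian, using that $\mathcal{L}_{int}(\theta^*_\infty)$ is small) into quadratic growth of $\mathcal{L}_{int}$ near its minimizer.

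Second, writing $\Delta(\theta) := \mathcal{L}_{int}(\theta) - \mathcal{L}(\theta)$ and using the first-order conditions $\nabla\mathcal{L}_{int}(\theta^*_\infty) = 0$ and $\nabla\mathcal{L}(\theta^*) = 0$, I would apply the mean-value form of Taylor's theorem to $\nabla\mathcal{L}_{int}$ along the segment joining $\theta^*_\infty$ and $\theta^*$:
\begin{equation*}
\nabla\mathcal{L}_{int}(\theta^*) = \Big(\int_0^1 \nabla^2\mathcal{L}_{int}\big(\theta^*_\infty + s(\theta^* - \theta^*_\infty)\big)\,\mathrm{d}s\Big)(\theta^* - \theta^*_\infty).
\end{equation*}
The Hessian lower bound then yields $\|\theta^* - \theta^*_\infty\| \le \mu^{-1}\|\nabla\mathcal{L}_{int}(\theta^*)\| = \mu^{-1}\|\nabla\mathcal{L}_{int}(\theta^*) - \nabla\mathcal{L}(\theta^*)\| = \mu^{-1}\|\nabla\Delta(\theta^*)\|$, and since a smooth network makes $\theta\mapsto u^\theta$ Lipschitz on the compact ball, with some constant $L$, we get $\|u^{\theta^*_\infty} - u^{\theta^*}\|_{L^2(\boldsymbol{\Omega})} \le (L/\mu)\,\|\nabla\Delta(\theta^*)\|$; taking expectations gives $\mathbb{E}\big[\|u^{\theta^*_\infty} - u^{\theta^*}\|_{L^2(\boldsymbol{\Omega})}\big] = \mathcal{O}\big(\mathbb{E}[\|\nabla\Delta(\theta^*)\|]\big)$.

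Finally I would identify $\mathbb{E}[\|\nabla\Delta(\theta^*)\|]$ with $\mathbb{E}[|\mathcal{L}_{int}(\theta) - \mathcal{L}(\theta)|]$ up to constants: $\Delta$ is the quadrature error for the three integrands while $\nabla_\theta\Delta$ is the quadrature error for their $\theta$-derivatives, so under the smoothness assumptions the two are of the same order in $N$ (equivalently, since $\Delta$ is $C^2$ with bounded Hessian on $B_r$, the size of $\nabla\Delta(\theta^*)$ is controlled by the oscillation of $\Delta$ over $B_r$ up to a term of order $r$), and over the small ball the order of $|\mathcal{L}_{int}(\theta)-\mathcal{L}(\theta)|$ does not depend on which $\theta$ is chosen. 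I expect the main obstacle to be the first step — securing the uniform positive lower bound $\mu$ on the Hessian of $\mathcal{L}_{int}$, which requires an honest PDE stability estimate plus a non-degeneracy assumption on the Jacobian of the network parametrization, so that flat directions of $\theta\mapsto u^\theta$ do not destroy quadratic growth. The step relating $\|\nabla\Delta\|$ to $|\Delta|$ is the second delicate point, and it is the reason the conclusion is best read as an order-of-magnitude identity rather than a pointwise inequality.
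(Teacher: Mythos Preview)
Your proposal is correct in spirit but takes a different and somewhat heavier route than the paper. The paper argues directly in \emph{function space}: it posits strong convexity of $J(f)=\mathcal{L}_{int}$ on a ball $B_r(f^*)$ in $\mathcal{H}$, uses $\delta J(f^*)=0$ to get $\tfrac{\mu}{2}\|f^*_N-f^*\|^2\le J(f^*_N)-J(f^*)$, and then applies the standard three-point decomposition
\[
J(f^*_N)-J(f^*)=\big(J(f^*_N)-J_N(f^*_N)\big)+\big(J_N(f^*_N)-J_N(f^*)\big)+\big(J_N(f^*)-J(f^*)\big),
\]
dropping the middle term by optimality of $f^*_N$ for $J_N$. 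This lands immediately on a bound by $|J-J_N|$ at two particular functions, and the paper finishes by noting (as you also do) that the quadrature error has the same order regardless of which smooth $f$ is plugged in.

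By contrast you work in \emph{parameter space}, compare gradients ($\nabla\mathcal{L}_{int}(\theta^*)=\nabla\Delta(\theta^*)$), and then need two extra steps the paper avoids: passing from $\|\theta^*-\theta^*_\infty\|$ to $\|u^{\theta^*}-u^{\theta^*_\infty}\|_{L^2}$ via Lipschitzness of the parametrization, and relating $\|\nabla\Delta\|$ back to $|\Delta|$. Your first justification for the latter (both are quadrature errors for smooth integrands, hence same rate in $N$) is the right one and matches the paper's closing remark; the alternative ``bounded Hessian controls gradient by oscillation'' is not reliable on its own. The three-point trick with $J_N(f^*_N)\le J_N(f^*)$ is the simplification you are missing: it lets one bypass gradients altogether and stay at the level of loss values, which also sidesteps the non-degeneracy assumption on the Jacobian of $\theta\mapsto u^\theta$ that you correctly flagged as the main obstacle in your approach.
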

\cref{theorem: convergence} reveals that with enough accurate approximation, the error convergence in PINNs is dominated by the error convergence of the quadrature. We provide the proof of \cref{theorem: convergence} in \cref{appendix: convergence rate}. 

Therefore, some strategies of adaptive sampling \cite{wu2023comprehensive,tang2023pinns,gao2023failure} are proposed to enhance the convergence rate of the MC as well as the performance of PINNs by changing either the distribution of the sampling points or decreasing the variance of the target function which is another principle element of Monte Carlo methods (see \cref{eq:mc error}).

However, a fundamental difficulty of MC stems from the requirement that the points should be independently random samples. But in practice, there is no method to generate independent random samples concretely. On the other hand, although $P_{\mathcal{D}}(x=x_i| x\in[0,1])=0 \ \forall x_i \in [0,1] $ under any possibility distribution functions $\mathcal{D}(x)$, $P_{\mathcal{D}}(\|x-x_i|<\epsilon|x\in[0,1])>0$ damages the confidence and accuracy of the Monte Carlo quadrature.

\subsection{Quasi Monte-Carlo methods}
Quasi Monte Carlo (QMC) method is proposed to accelerate the convergence rate of Monte-Carlo methods in high-dimensional integrations with low-discrepancy sequences which are chosen deterministically and
methodically. QMC methods have a convergence rate $\mathcal{O}(N^{-(1-\epsilon)})$ where $\epsilon$ depends on the specific sequence and in general $\epsilon\in (0,1)$ (see \cite{asmussen2007stochastic, hung2024review}). Furthermore, \cite{sloan1998quasi} showed that when dealing with functions where the behavior varies across different dimensions with distinct weights, the convergence rate of quasi-Monte Carlo sampling can be reduced to $\mathcal{O}(N^{-1/p})$ for some $p\in [1,2]$, which corresponds to $\epsilon=(p-1)/p\in [0,0.5]$.

Notably, except the theoretical convergence rate, the performance of MC and QMC is significant different in practical: the experimental error of MC is always close to the claimed convergence rate \cite{quarteroni2006numerical} while the practical convergence rate of QMC always larger than that of MC \cite{Paskov1995,gurov2016energy,berblinger1991monte}.
\begin{figure}[ht]
  \centering
  \subfigure[\label{qmc-sin-d2} $\sin d=2$]{\includegraphics[width=0.23\linewidth]{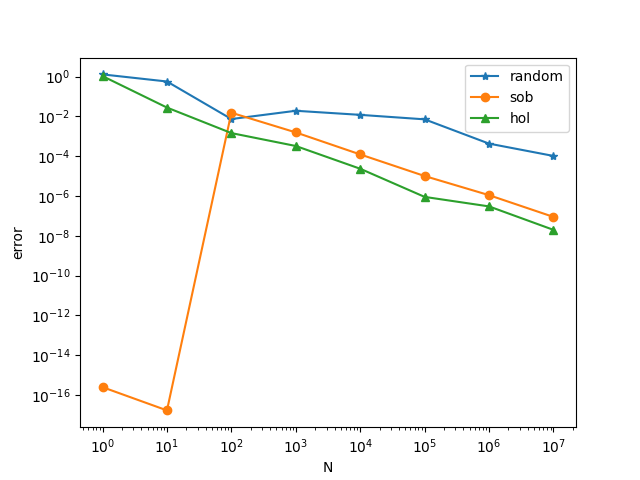}} 
  \subfigure[\label{qmc-sin-d10} $\sin d=10$]{\includegraphics[width=0.23\linewidth]{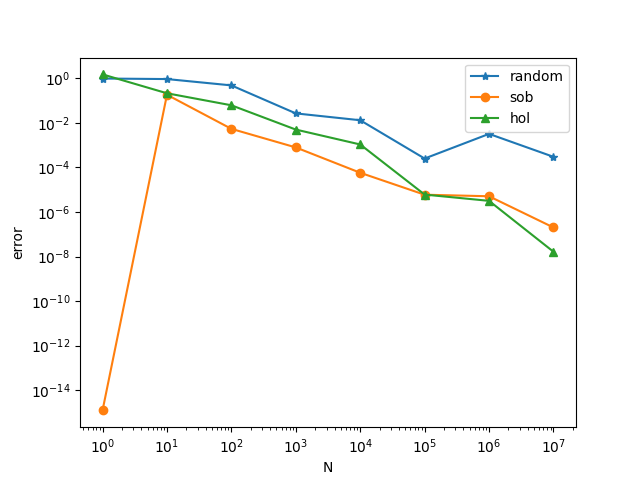}}
  \subfigure[\label{qmc-sin-d20} $\sin d=20$]{\includegraphics[width=0.23\linewidth]{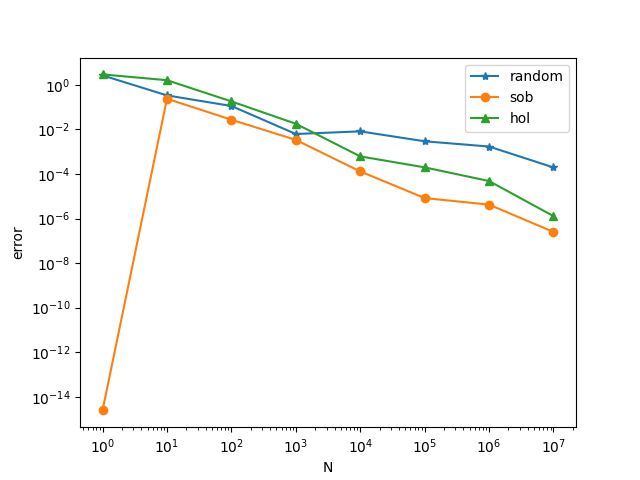}}
  \subfigure[\label{qmc-sin-d100} $\sin d=100$]{\includegraphics[width=0.23\linewidth]{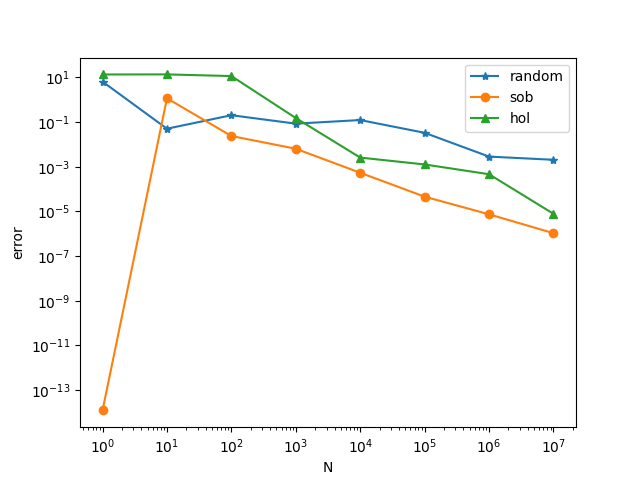}}\\
  \subfigure[\label{qmc-exp-d2} $\exp d=2$]{\includegraphics[width=0.23\linewidth]{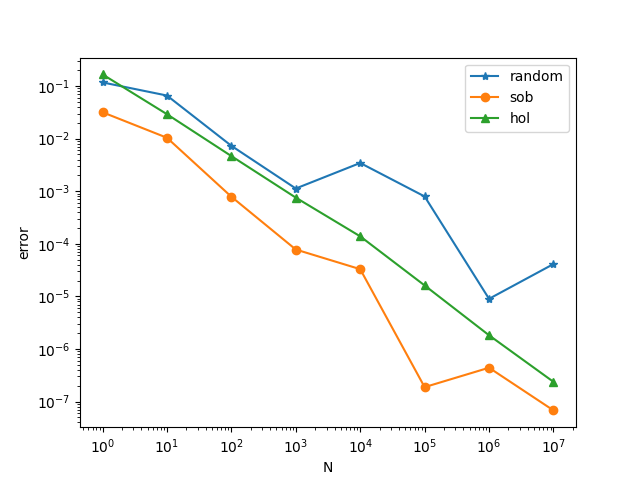}} 
  \subfigure[\label{qmc-exp-d10} $\exp d=10$]{\includegraphics[width=0.23\linewidth]{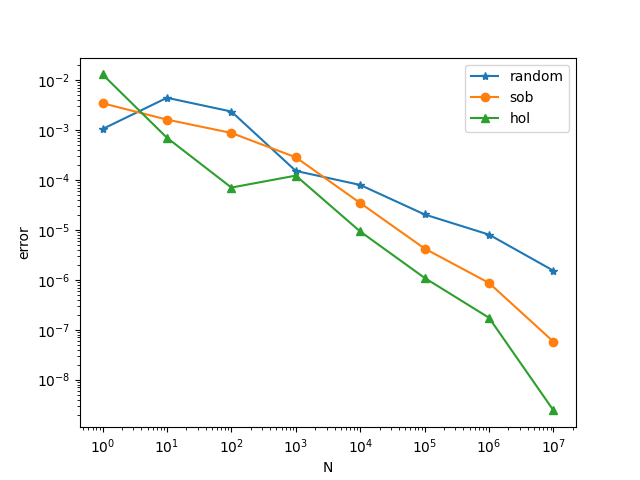}}
  \subfigure[\label{qmc-exp-d20} $\exp d=20$]{\includegraphics[width=0.23\linewidth]{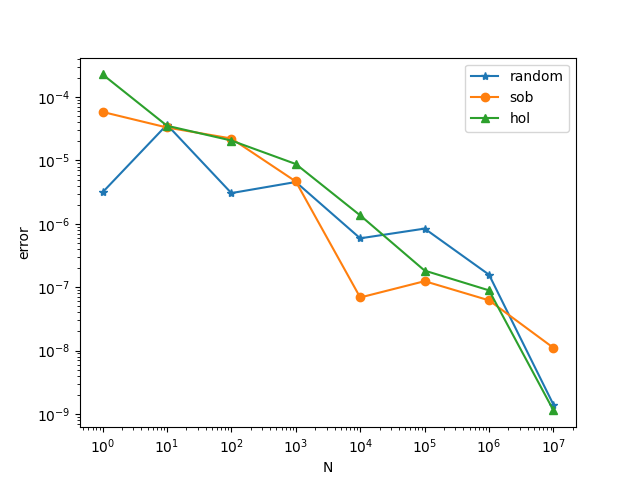}}
  \subfigure[\label{qmc-exp-d100} $\exp d=100$]{\includegraphics[width=0.23\linewidth]{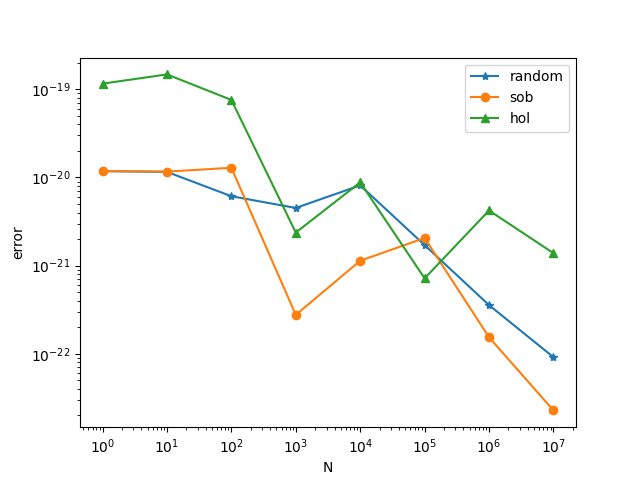}}
  \caption{QMC integration for different dimensionality with different $N$.}
  \label{fig:QMC_quadrature}
\end{figure}
\subsubsection{Quasi Monte Carlo in PINNs}\label{section: qmc_in_pinns}
In the context of PINNs, QMC has already been investigated to replace the MC. However, although comprehensive experiments in \cite{wu2023comprehensive} show that QMC methods perform better than MC methods in PINNs, those experiments also show that the QMC is worse than those adaptive sampling methods. We question the performance of QMC methods for the following reasons:

\paragraph{1. high-dimensional cases} QMC is proposed to accelerate high-dimensional integrations, and has been widely applied in high-dimensional problems rather than low-dimensional problems \cite{fang2002some,l2009quasi,case2025comparative}. Thus, solving low-dimensional PDEs cannot exploit the capabilities of QMC and is unfair for QMC.

We conducted a series of experiments with $f(\boldsymbol{x})=\sum_{i=1}^d\sin(2\pi x_i), \ \boldsymbol{x}\in[0,1]^d$ and $f(\boldsymbol{x})=\prod_{i=1}^d \exp(-x_i), \ \boldsymbol{x} \in [0,1]^d$ for different $d$ in \cref{fig:QMC_quadrature}.  Although some advanced papers \cite{doerr2013constructing,steinerberger2019nonlocal,clement2024heuristic} provide novel approaches to generate advanced low-discrepancy sequences, the Helton sequences \cite{halton1960efficiency} and Sobol' sequences \cite{sobol1967distribution} are implemented in the QMC in our experiments due to the generalization and simplicity\footnote{The details about Helton sequences and Sobol' sequences are provided in \cref{appendix: Sequences}}. The theoretically convergence rate of both Helton and Sobol' sequences is $\mathcal{O}((\log N)^d /N)$ \cite{niederreiter1992random}. But the results of \cref{fig:QMC_quadrature} align the statement of aforementioned QMC researchers: although for some $N$ and $d$,  $\frac{\left(\log N\right) ^d}{N} > \frac{1}{\sqrt{N}}$, the error of QMC is still better than MC. Furthermore, \cref{fig:mc points}, which demonstrates the distribution of uniform grids, random sampling points, Halton sequences and Sobol' sequences, reveals that the points generated by random sampling are relatively scattered in a arbitrary manner, while Halton and Sobol' samplings produce point distributions that are more evenly spread out, enabling them to explore a larger portion of the domain. Obviously, this phenomenon becomes more pronounced in higher dimensions.

\paragraph{2. Random sampling}
Random sampling a batch of the total dataset is a principle strategy in machine learning to improve the performance \cite{bottou2010large} and efficiency \cite{el2019unrestricted}. The deterministic sequence pollutes these features and thus influences the performance. We argue that one should randomly sample points from the deterministic sequence \textit{i.e.} regard the deterministic sequence as the total dataset and randomly sample a fixed number of points every epoch. We call it the Random Quasi Monte Carlo method (RQMC)\footnote{Pay attention to the Randomized Quasi-Monte Carlo which put a random perturbation on the deterministic sequence.}.

However, different to MC, random sampling destroys the deterministic integration error of QMC. Herein, the error convergence of RQMC should be verified, especially when compared with MC. Notably, since every epoch will resample a batch of points, the number of untrained points is close to zero after a finite number of epochs with high possibility.

\begin{theorem}\label{theorem: RQMC}
    For a given deterministic sequence $P_{N_{total}}=\{\boldsymbol{x}_i\}_{i=1}^{N_{total}}$, if $P_{N}=\{\boldsymbol{\tilde{x}}_j\}_{j=1}^{N}$ is a subset sampled $N$ points from  $P_{N_{total}}$. Suppose  $E_{\text{QMC}} =\left|\int f(\boldsymbol{x})\mathrm{d}\boldsymbol{x}-\frac{1}{N_{total}}\sum_{i=1}^{N_{total}} f(\boldsymbol{x}_i)\right|= \mathcal{O}(N_{total}^{-(1-\epsilon)})$ for some $\epsilon \in (0,1)$ and $E_{\text{RQMC}} =\left|\int f(\boldsymbol{x})\mathrm{d}\boldsymbol{x}-\frac{1}{N}\sum_{i=j}^{N} f(\boldsymbol{\tilde{x}}_j)\right|$, then
    \begin{equation}\label{eq: E_rqmc}
        E_{\text{RQMC}}\leq \frac{dD}{2k N_{total}}+dD \left(1-k\right)+C N_{total}^{-(1-\epsilon)},
    \end{equation}
    for constant $C,D>0$,$k=N/N_{total}$ and $d$ is the dimensionality.
\end{theorem}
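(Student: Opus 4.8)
The plan is to control $E_{\text{RQMC}}$ by inserting the quadrature over the full deterministic sequence as an intermediate quantity and splitting with the triangle inequality:
\begin{equation*}
E_{\text{RQMC}} \;\le\; \underbrace{\left|\tfrac{1}{N}\sum_{j=1}^{N} f(\boldsymbol{\tilde{x}}_j) - \tfrac{1}{N_{total}}\sum_{i=1}^{N_{total}} f(\boldsymbol{x}_i)\right|}_{=:\,\Delta_{\text{sub}}} \;+\; \underbrace{\left|\tfrac{1}{N_{total}}\sum_{i=1}^{N_{total}} f(\boldsymbol{x}_i) - \int f(\boldsymbol{x})\,\mathrm{d}\boldsymbol{x}\right|}_{=\,E_{\text{QMC}}} .
\end{equation*}
The second term is $\le C\,N_{total}^{-(1-\epsilon)}$ by hypothesis, so it contributes exactly the last summand of \cref{eq: E_rqmc}. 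Everything then reduces to bounding the \emph{subsampling error} $\Delta_{\text{sub}}$.

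For $\Delta_{\text{sub}}$, the first step I would take is to write $\tfrac1N\sum_{j=1}^N f(\boldsymbol{\tilde{x}}_j) = \sum_{i=1}^{N_{total}} \tfrac{1}{N}\mathbbm{1}[\boldsymbol{x}_i\in P_N]\,f(\boldsymbol{x}_i)$, so that $\Delta_{\text{sub}} = \big|\sum_{i=1}^{N_{total}} w_i\,f(\boldsymbol{x}_i)\big|$ with signed weights $w_i = \tfrac1N\mathbbm{1}[\boldsymbol{x}_i\in P_N] - \tfrac1{N_{total}}$. A direct count gives $\sum_i w_i = 0$ and $\sum_i |w_i| = 2(1-k)$. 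Using $\sum_i w_i = 0$ to subtract the midrange constant $c = \tfrac12\big(\sup_{[0,1]^d} f + \inf_{[0,1]^d} f\big)$,
\begin{equation*}
\Delta_{\text{sub}} = \Big|\sum_i w_i\,\big(f(\boldsymbol{x}_i) - c\big)\Big| \;\le\; \Big(\max_i |f(\boldsymbol{x}_i) - c|\Big)\sum_i |w_i| \;\le\; \tfrac12\,\mathrm{osc}(f)\cdot 2(1-k),
\end{equation*}
and bounding the oscillation of $f$ on the unit cube by its coordinatewise gradient bound, $\mathrm{osc}(f)\le dD$ with $D = \sup\|\nabla f\|_\infty$, produces the $dD(1-k)$ term.

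The remaining $\tfrac{dD}{2kN_{total}} = \tfrac{dD}{2N}$ term I would recover by not discarding the intrinsic accuracy of the retained set $P_N$: since $P_N$ is an $N$-point subset of a low-discrepancy sequence, I would apply a Koksma--Hlawka / midpoint-rule estimate directly to $P_N$, comparing its empirical measure against the nearest near-optimal $N$-point configuration, which in each of the $d$ coordinates contributes a baseline $\tfrac{D}{2N}$ (the one-dimensional minimal star discrepancy $\tfrac1{2N}$ scaled by the variation of $f$), hence $\tfrac{dD}{2N}$ in total, with the deviation of $P_N$ from that configuration absorbed into the $(1-k)$ and $N_{total}^{-(1-\epsilon)}$ terms already accounted for. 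Summing the three contributions yields \cref{eq: E_rqmc}.

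I expect the main obstacle to be making the treatment of $P_N$ rigorous and getting the constants into the stated form. A generic $N$-subset of a low-discrepancy set need not itself be low-discrepancy, so one must either argue in expectation over the random choice of $P_N$ --- invoking a concentration bound for the hypergeometric box counts $\#\{j:\boldsymbol{\tilde{x}}_j\in B\}$ around $k\cdot\#\{i:\boldsymbol{x}_i\in B\}$ --- or restrict attention to structured subsamples. Deciding between a clean linear-in-$(1-k)$ slack (as claimed) versus the $\sqrt{(1-k)/N}$-type fluctuation that a naive concentration argument would give, and confirming the factor $\tfrac d2$ rather than $\tfrac{\sqrt d}{2}$ or $1$, is the delicate part, and it is exactly where one must pin down how $P_N$ is formed.
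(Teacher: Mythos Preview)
Your triangle-inequality split is correct and in fact already finishes the proof: since $\Delta_{\text{sub}}\le dD(1-k)$ holds deterministically for \emph{every} $N$-subset $P_N$ (your weight computation $\sum_i|w_i|=2(1-k)$ is exactly right), and $E_{\text{QMC}}\le C N_{total}^{-(1-\epsilon)}$ by hypothesis, you already have
\[
E_{\text{RQMC}} \;\le\; dD(1-k) + CN_{total}^{-(1-\epsilon)} \;\le\; \frac{dD}{2kN_{total}} + dD(1-k) + CN_{total}^{-(1-\epsilon)},
\]
the last step because the extra term is nonnegative. So the $\tfrac{dD}{2N}$ contribution you spend the second half of the proposal trying to manufacture is not needed at all --- your two-term bound is sharper than the stated one. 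Consequently the obstacle you anticipate in the final paragraph (discrepancy of a generic subset, hypergeometric concentration, structured subsamples) never arises: nothing in your argument requires $P_N$ to be low-discrepancy or even random.

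This is a genuinely different route from the paper. The paper does \emph{not} split off $E_{\text{QMC}}$; instead it bounds the star discrepancy $D_N^*(P_N)$ of the subsample directly and then invokes Koksma--Hlawka. In one dimension it uses the explicit formula $D_N^*(x_1,\dots,x_N)=\tfrac{1}{2N}+\max_j\bigl|x_j-\tfrac{2j-1}{2N}\bigr|$, transfers the full-sequence estimate $\bigl|x_n-\tfrac{2n-1}{2N_{total}}\bigr|\le CN_{total}^{-(1-\epsilon)}$ to the subset, and maximises over all index selections $k_1<\dots<k_N$ to identify the worst subset (the first or last $N$ consecutive points), which produces the $(1-k)$ slack; the $\tfrac{1}{2N}$ term is simply the leading piece of that formula. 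The paper's $D$ is the Hardy--Krause variation $V(f)$, whereas yours is $\sup\|\nabla f\|_\infty$; both are admissible since the statement only demands some constant. Your argument is shorter and avoids discrepancy theory entirely, at the price of assuming $f$ Lipschitz rather than of bounded variation; the paper's argument explains the provenance of the $\tfrac{1}{2N}$ term and makes the worst-case subset explicit.
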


\begin{proof}

Given a point set $P=\{\boldsymbol{x}_i\}$, the star discrepancy $D^*_N(P)$ of $P$ is defined as follows:
\begin{definition}
    \begin{equation}
        D^*_N(P)=\sup_{u_1,\cdots,u_d\in[0,1]}\left|\frac{A(J;P)}{N}-\lambda_d(J)\right|,
    \end{equation}
    where $J=\prod_{i=1}^d[0,u_i)$ and $\lambda_d(J)=\prod_{i=1}^d u_i$, and
    \begin{equation}
                A(J,P)=\sum_{n=1}^N \mathbbm{1}_{J}(\boldsymbol{x}_n), \quad \mathbbm{1}(\boldsymbol{x}) \text{ is the characteristic function}.
    \end{equation}
\end{definition}
According to the error analysis of QMC\footnote{The proof is in \cref{Appendix: QMC convergence}}, we have.
\begin{equation}\label{eq: inequality}
    E_{\text{QMC}}\leq V(f)D_{N_\text{total}}^*\left(P_{N_{total}}\right).
\end{equation}

In the following proof, without loss of generality, we consider the 1D case. The following lemma is useful and its proof can be found in \cite{kuipers2012uniform}.

\begin{lemma}\label{lamma:discrepancy} If $0 \leq x_1 \leq x_2 \leq \cdots \leq x_N \leq 1$, then 
    \begin{equation}
        D_N^*\left(x_1, \ldots, x_N\right)=\frac{1}{2 N}+\max _{1 \leq n \leq N}\left|x_n-\frac{2 n-1}{2 N}\right|.
    \end{equation}
\end{lemma}

\cref{lamma:discrepancy} shows that the star discrepancy $D_N^*(x_1, \dots, x_N)$ is determined by the maximum absolute deviation between the points $x_n$ and the ideal uniform grid positions $\frac{2n - 1}{2N}$.

Suppose $P_{N_{total}}=\{x_i\}_{i=1}^{N_{total}}$ is the set of a given deterministic sequence in QMC that satisfies $E_{\text{QMC}}=\mathcal{O}\left(N_{total}^{-(1-\epsilon)}\right)$ for a constant $\epsilon\in (0,1)$ dependent on the given deterministic sequence,  then $D_{N_{total}}^*(P_{N_{total}})= \mathcal{O}(N_{total}^{-(1-\epsilon)})$. 

Since $\frac{1}{2N} = \omega\left(N^{-(1-\epsilon)}\right), \quad \forall \epsilon \in (0,1)$, $\max _{1 \leq n \leq N_{total}}\left|x_n-\frac{2 n-1}{2 N_{total}}\right|\leq CN_{total}^{-(1-\epsilon)}$ \textit{i.e.} 
\begin{equation}\label{eq: ineq N_total}
\left|x_n-\frac{2 n-1}{2 N_{total}}\right|\leq CN_{total}^{-(1-\epsilon)}, \ \forall n \in [1,N_{total}] .    
\end{equation}

Now we consider the subset $P_{N}=\{\boldsymbol{\tilde{x}}_j\}_{j=1}^{N}$. At first, we analyze the dominant term $\max _{1 \leq j \leq N}\left|\tilde{x}_j-\frac{2 j-1}{2 N}\right|$. Since $P_N$ is randomly sampled from $P_{N_{total}}$, the term we actually consider is $\max_{P_N\subset P_{N_{total}}}\max _{1 \leq j \leq N}\left|\tilde{x}_j-\frac{2 j-1}{2 N}\right|$. 

Then, we want to construct the subset $P_N^*=\arg\max_{P_N\subset P_{N_\text{total}}}\max _{1 \leq j \leq N}\left|\tilde{x}_j-\frac{2 j-1}{2 N}\right|$. Let $k_j$ be the index of $\tilde{x}_j$ in the original sequence $\left(1 \leq k_1<k_2<\cdots<k_N \leq N_{total}\right)$. By \cref{eq: ineq N_total}, we have $x_{k_j} \in\left[\frac{2 k_j-1}{2 N_{total}}-C N_{total}^{-(1-\epsilon)}, \frac{2 k_j-1}{2 N_{total}}+C N_{total}^{-(1-\epsilon)}\right]$. So
\begin{equation}
\begin{aligned}
        \left|\tilde{x}_j-\frac{2 j-1}{2 N}\right| & = \left|x_{k_j}-\frac{2 j-1}{2 N}\right|\\
        &\leq \left|\frac{2 k_j-1}{2 N_{total}}-\frac{2 j-1}{2 N}\right|+C N_{total}^{-(1-\epsilon)}\\
        & = \frac{\left|(2 k_j-1)N-(2 j-1)N_{total}\right|}{2NN_{total}}+C N_{total}^{-(1-\epsilon)}
\end{aligned}
\end{equation}
Thus, constructing $P_N^*$ is equivalent to maximize the term $\left|(2 k_j-1)N-(2 j-1)N_{total}\right|$. Since $j \leq k_j\leq N_{total}-N+j$, the $P_N^*$ is either the $k_j=j$ or $k_j=N_{total}-N+j$ \textit{i.e.} either the first $N$ points or the last $N$ points. Herein, $\forall P_N\subset P_{N_{total}}$,
\begin{equation}
\begin{aligned}
        \max _{1 \leq j \leq N}\left|\tilde{x}_j-\frac{2 j-1}{2 N}\right| &\leq \frac{(2 N-1)\left(N_{total}-N\right)}{2 N N_{total}}+C N_{total}^{-(1-\epsilon)},\\
        & \leq 1-\frac{N}{N_{total}}+C N_{total}^{-(1-\epsilon)} .
\end{aligned}
\end{equation}

Then 
\begin{equation}
    D_N^*(P_N)=\frac{1}{2 N}+\max _{1 \leq j \leq N}\left|\tilde{x}_j-\frac{2 j-1}{2 N}\right|\leq \frac{1}{2 N}+ 1-\frac{N}{N_{total}}+C N_{total}^{-(1-\epsilon)}.
\end{equation}
Let $N=kN_{total}$, $k \in (0,1]$,
\begin{equation}
    D_N^*(P_N)\leq \frac{1}{2k N_{total}}+ 1-k+C N_{total}^{-(1-\epsilon)}.
\end{equation}
Then, for $d$ dimensional sequences,
\begin{equation}
    D_N^*(P_N)\leq \frac{d}{2k N_{total}}+d \left(1-k\right)+C N_{total}^{-(1-\epsilon)}.
\end{equation}
Recall \cref{eq: inequality}, we get that 
\begin{equation}
    E_\text{RQMC}\leq V(f)D_N^*(P_N)=\frac{dD}{2k N_{total}}+\frac{dD}{2} \left(1-k\right)+C N_{total}^{-(1-\epsilon)},
\end{equation}
for $D=V(f)$. If $k=1$, then $E_\text{RQMC}=\mathcal{O}\left(N_{total}^{-(1-\epsilon)}\right)$.

\end{proof}
\textbf{Remark}. Suppose $N$ (\textit{i.e.} $N_{total}$ in \cref{theorem: RQMC}) is the total number of the dataset, and $N_b$ is the batch size for every epoch, and $\epsilon=N-kN$, then after $s$ epochs, the possibility of $\epsilon$ points that have never been sampled is 
\begin{equation}
P(\mathcal{X}= \epsilon)=\frac{\binom{N}{\epsilon} \sum_{i=0}^{N-\epsilon}(-1)^i\binom{N-\epsilon}{i}\left(\binom{\left(N-\epsilon\right)-i}{N_b}\right)^s}{\left(\binom{N}{N_b}\right)^s}.
\end{equation}
So
\begin{equation}\label{eq: qmc_possibility}
P(\mathcal{X}= 0)=\frac{\sum_{i=0}^N(-1)^i\binom{N}{i}\left(\binom{N-i}{N_b}\right)^s}{\left(\binom{N}{N_b}\right)^s},
\end{equation}
which is almost 1, when $s=20$, $N_b=0.1N$. Thus, $k=1$ is practical in RQMC, so the error convergence of RQMC is almost the same as the the error convergence of QMC
\paragraph{3. Combining} Under the randomly sampling idea, QMC can also combine with aforementioned adaptive sampling methods. Therefore, it is unfair to only compare the performance between QMC and adaptive sampling methods. Here we provide the algorithm of combining RAD and RQMC in Algorithm \ref{algorithm:rad_quasi} as an example. Compared with the original RAD algorithm (Algorithm \ref{algorithm:rad}), The Algorithm \ref{algorithm:rad_quasi} calculates the probability distribution on the whole low-discrepancy sequences. 
\vspace{0.2cm}
\begin{algorithm}[H]
    \SetAlgoLined
    \KwIn{Dimensionality $d$, Number of epochs $s$.}
    \KwOut{the output of PINN $u$}
     \vspace{0.1cm}
    Generate the point pool $\mathcal{P}_{pool}$ by $d$ and $N_{total}$.
     
     \vspace{0.1cm}
    Generate $\mathcal{P}$ by uniformly sampling from $\mathcal{P}_{pool}$ 
        
    Train the PINN for a certain number of iterations;

    \vspace{0.1cm}
    \For{$i = 2,\cdots,s$}{
        Generate $\mathcal{P}$ by randomly sampling from $\mathcal{P}_{pool}$ based on the probability distribution function:
        \begin{equation}
            p(\mathbf{x}) \propto \frac{\varepsilon_r(\theta,\mathbf{x})}{\mathbb{E}\left[\varepsilon_r(\theta,\mathbf{x})\right]}+1,
        \end{equation}
        
        Train the PINN for a certain number of iterations;
    }
     \vspace{0.1cm}
    \Return the output of PINN
    \vspace{0.1cm}
    \caption{RAD for low-discrepancy points}\label{algorithm:rad_quasi}
\end{algorithm}

\begin{figure}[ht]
  \centering 
  \subfigure[\label{uniform} uniform]{\includegraphics[width=0.23\linewidth]{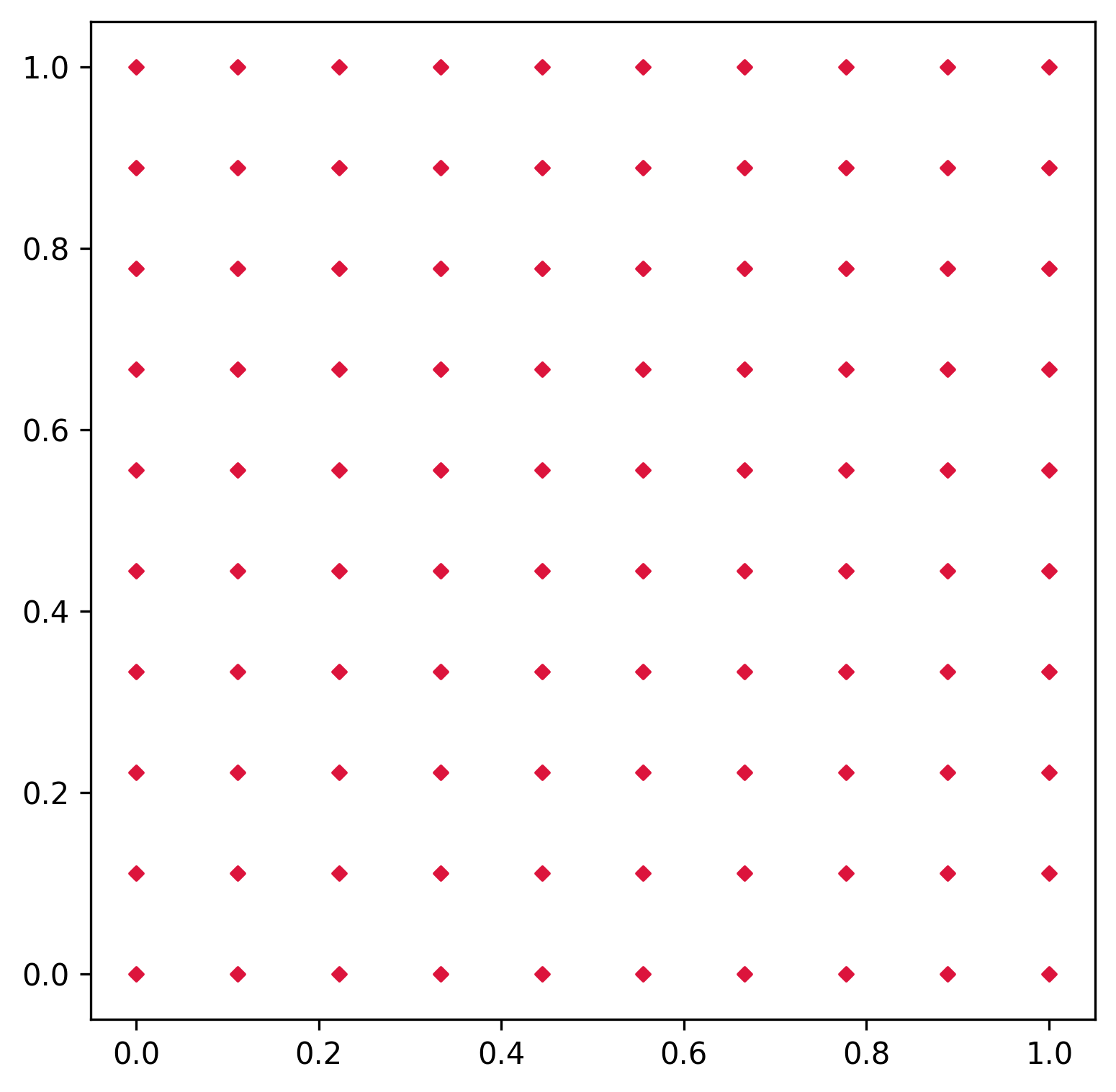}} 
  \subfigure[\label{random} random]{\includegraphics[width=0.23\linewidth]{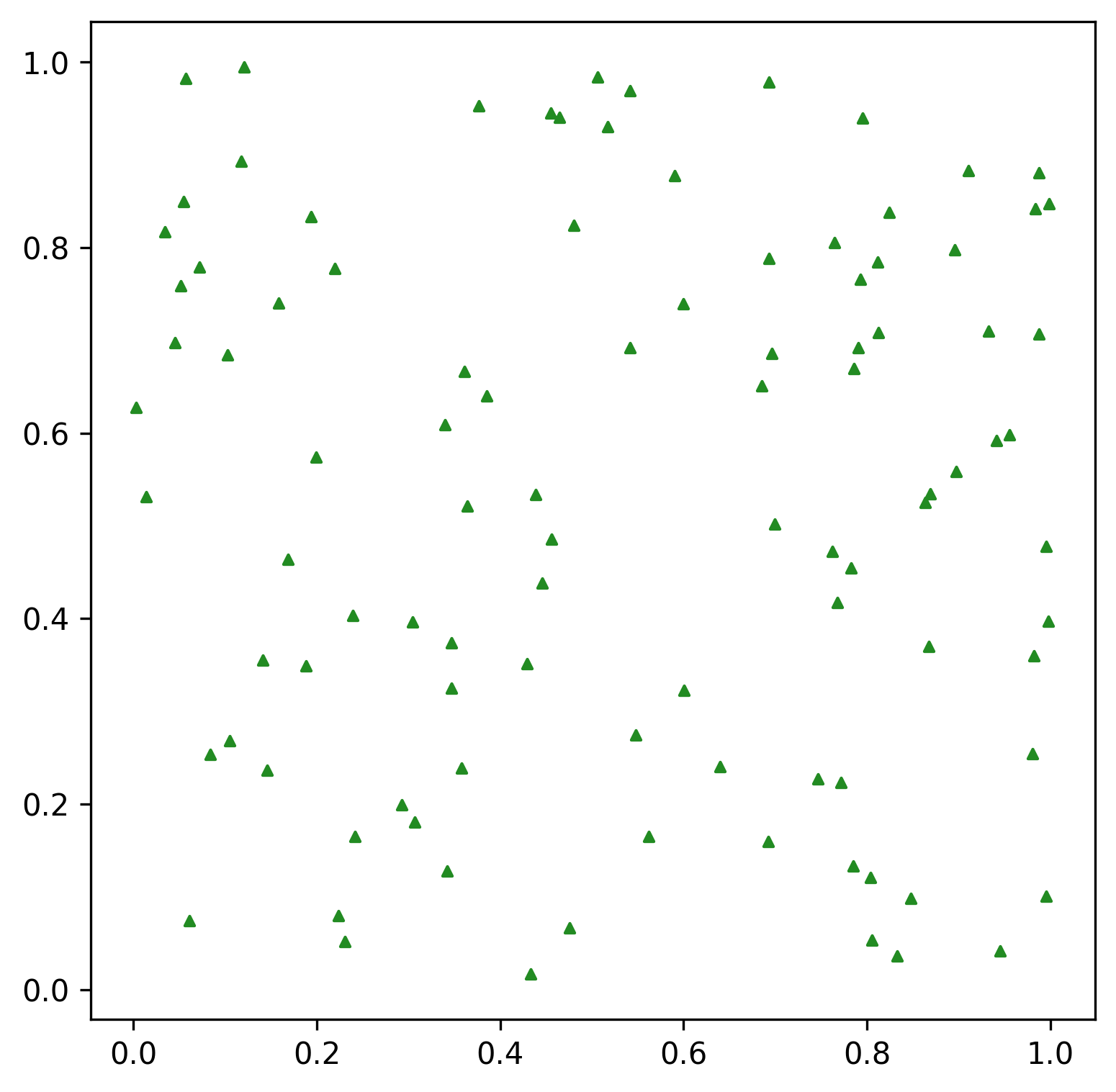}}
  \subfigure[\label{halton} Halton]{\includegraphics[width=0.23\linewidth]{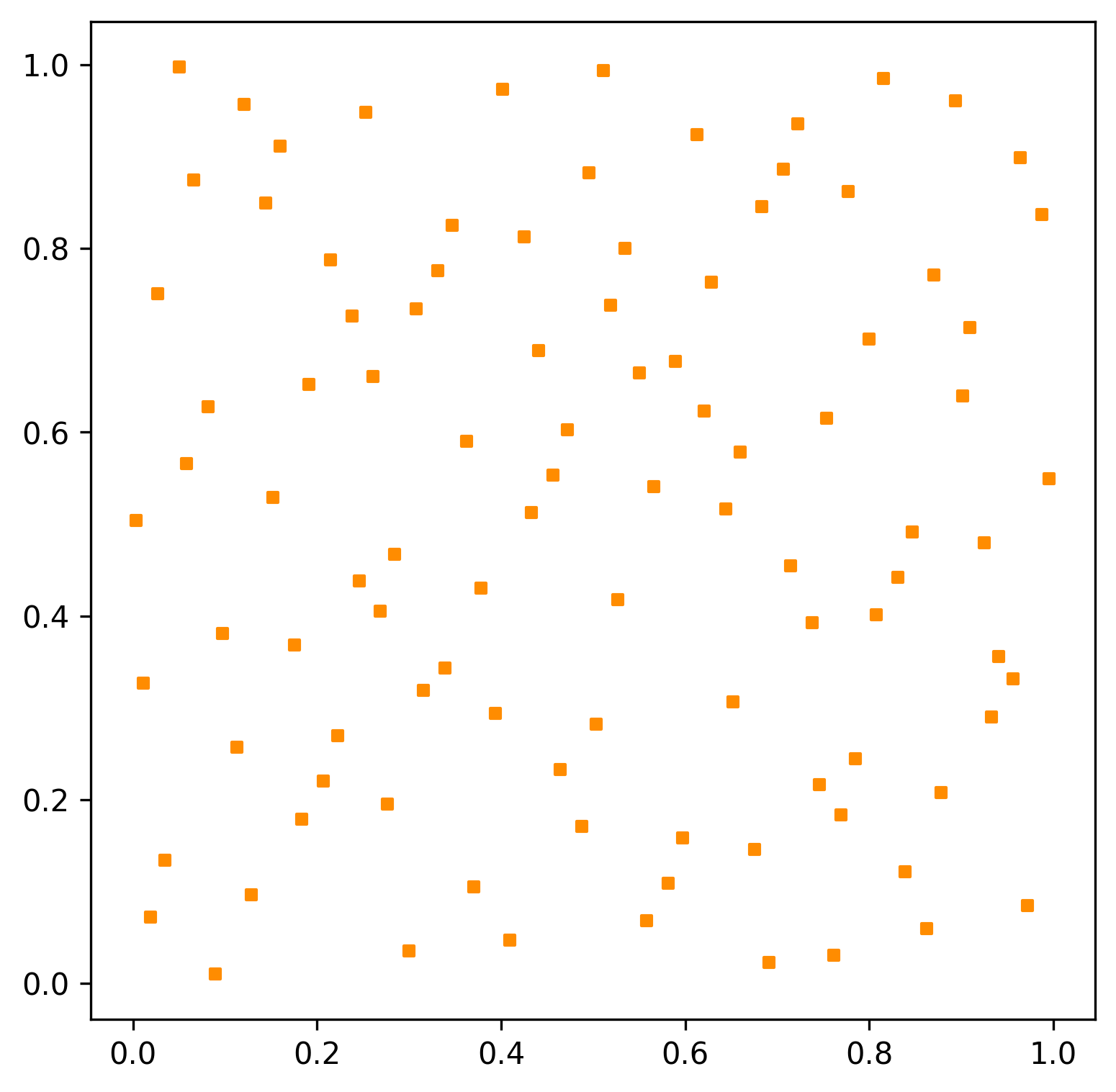}}
  \subfigure[\label{sobol} Sobol']{\includegraphics[width=0.23\linewidth]{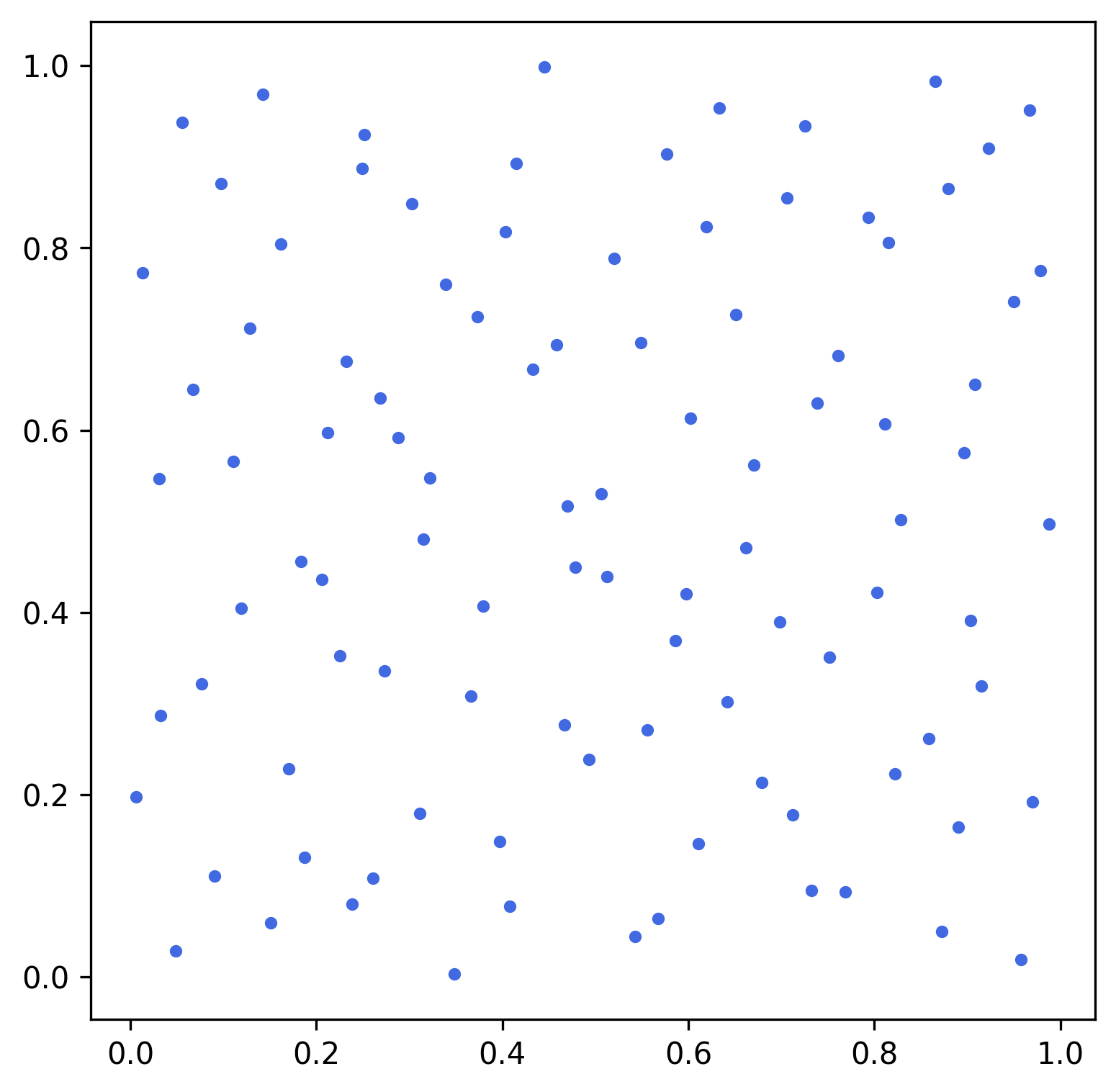}}\\
    \subfigure[\label{uniform10000} uniform]{\includegraphics[width=0.23\linewidth]{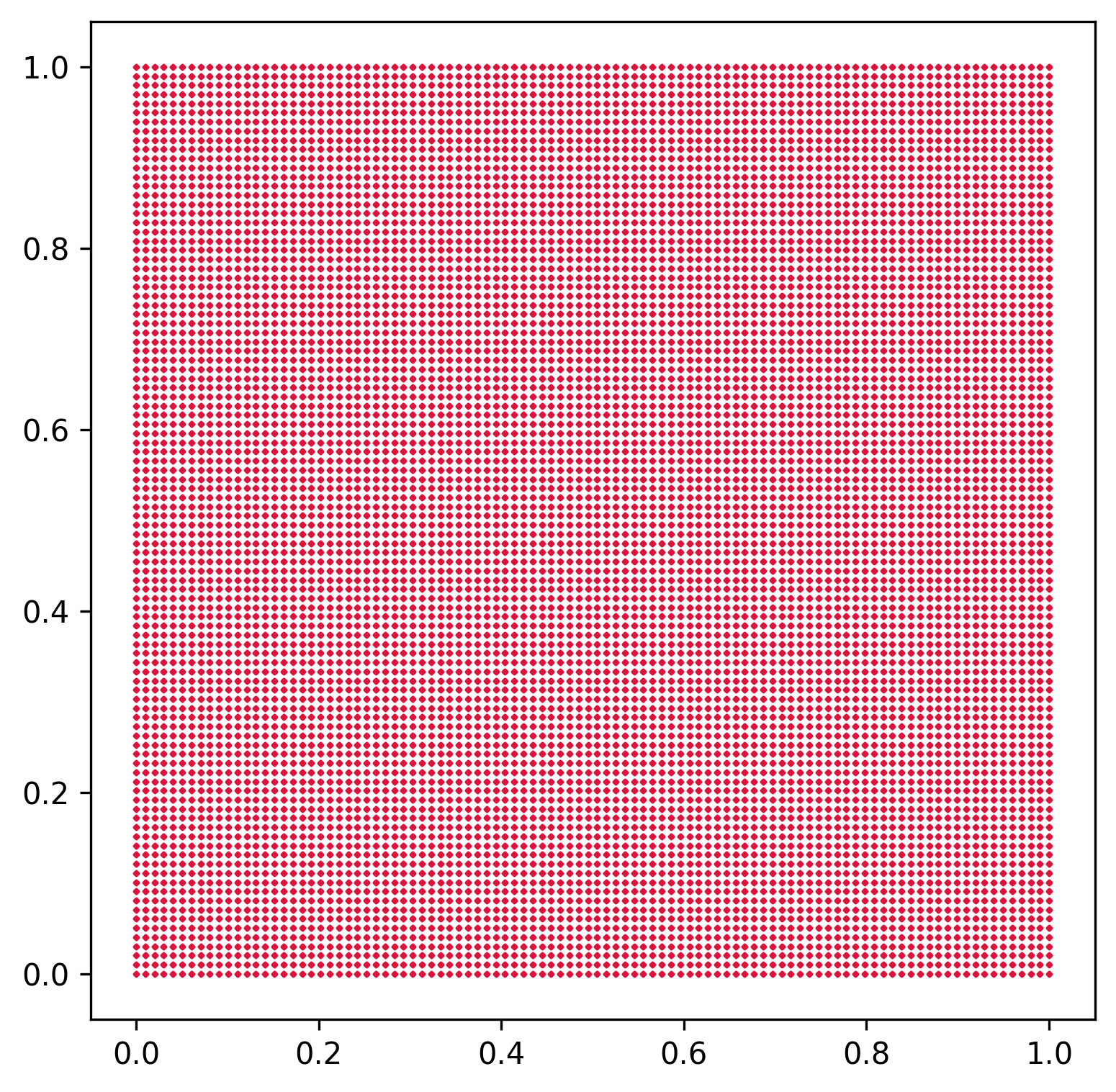}} 
  \subfigure[\label{random10000} random]{\includegraphics[width=0.23\linewidth]{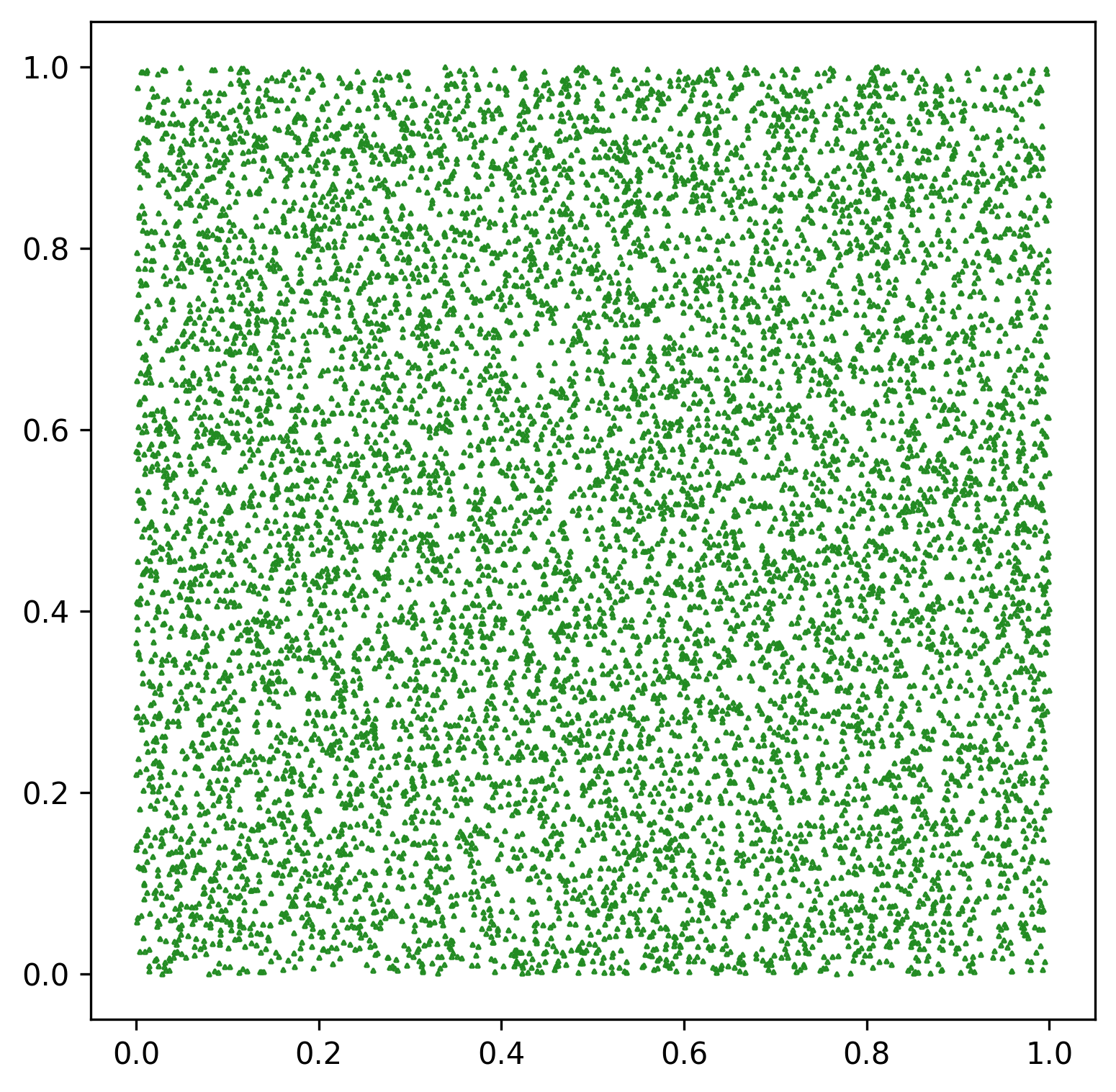}}
  \subfigure[\label{halton10000} Halton]{\includegraphics[width=0.23\linewidth]{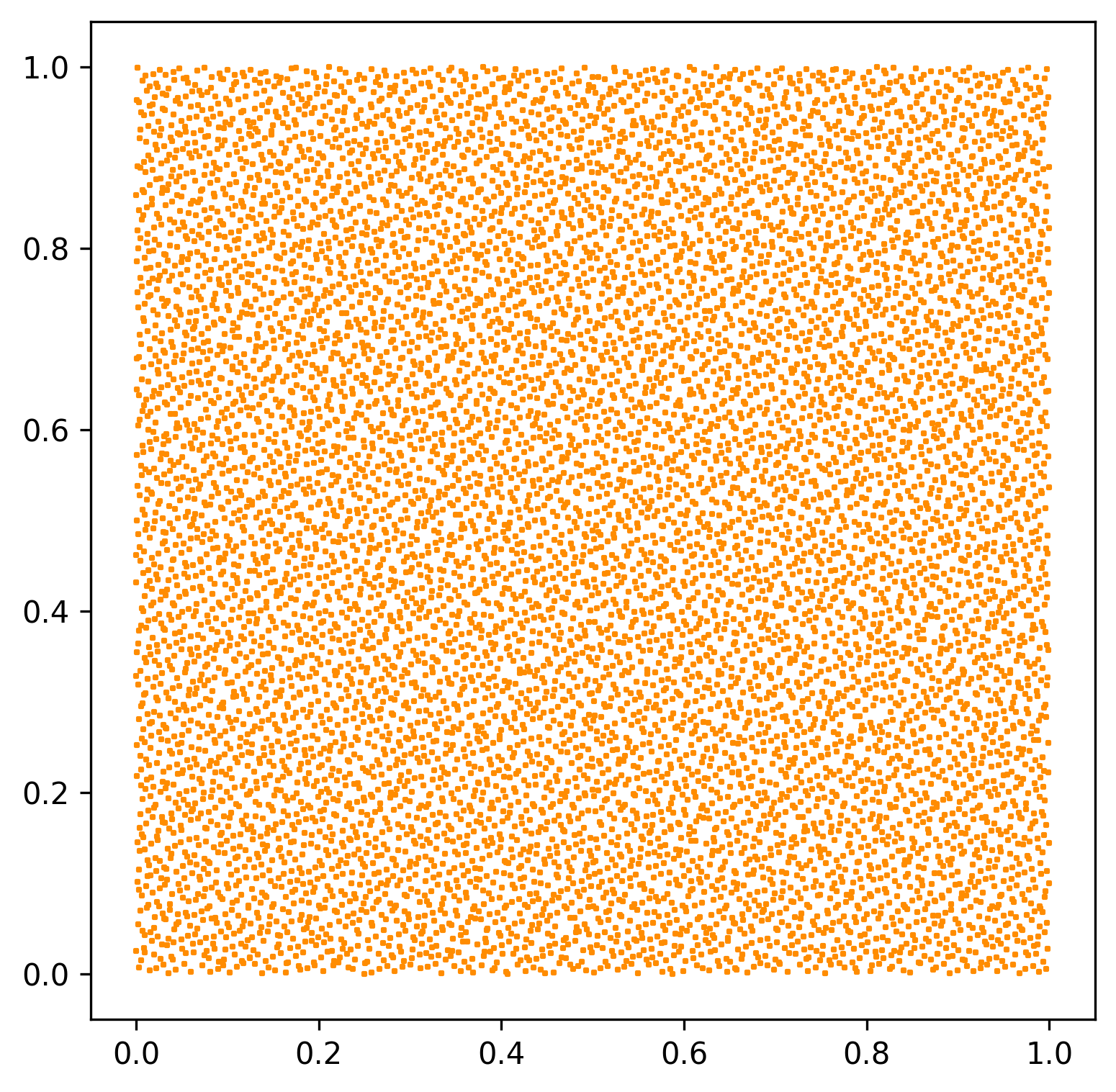}}
  \subfigure[\label{sobol10000} Sobol']{\includegraphics[width=0.23\linewidth]{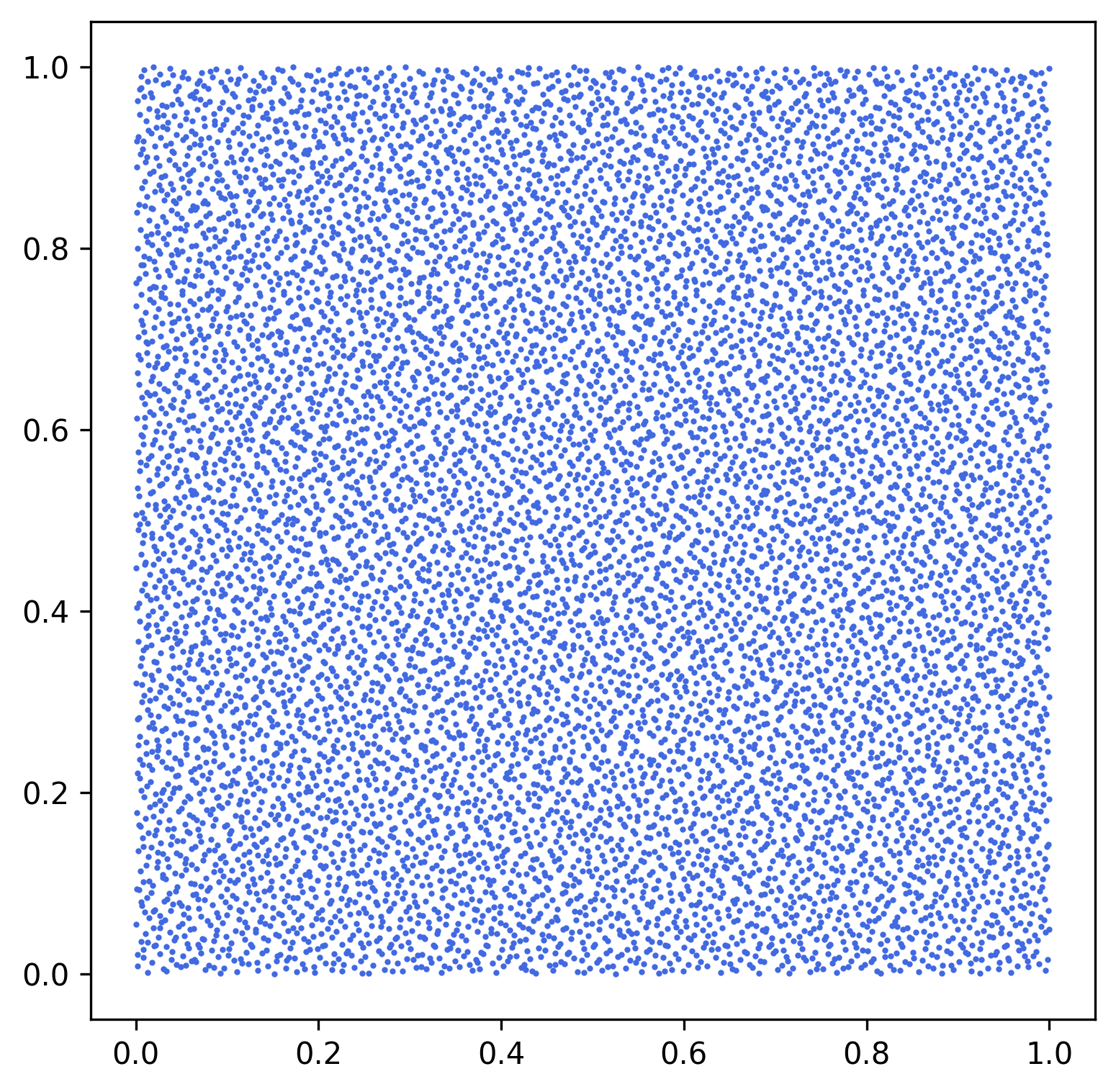}}
  \caption{Examples of 100 (first row) and 10000 (second row) points generated in $[0,1]^2$.}
  \label{fig:mc points}
\end{figure}
\section{Experiments}\label{Section: experiments}
In the experiments, except the vanilla way (\textit{i.e.} randomly sampling from the domain) to generate the input points, we introduce two adaptive sampling methods as baselines: 1) RAD \cite{wu2023comprehensive} is the classical adaptive sampling methods, and 2) ACLE \cite{lau2024pinnacle} is the advanced adaptive sampling methods. The details of the aforementioned methods are in \cref{Appendix: adaptive sampling methods}. Notably, due to the complexity of ACLE, ACLE cannot be implemented in $d=100$ even under the estimation proposed by \cite{lau2024pinnacle}.

Firstly, we consider three high-dimensional problems and conducted experiments with different dimensionality and some other hyperparameters in \cref{section: exp_high_pde}. Then we explore the cost of generating those sequences and the relationship between the accuracy in \cref{section: cost} and the scale of sampling pool in \cref{section: sacle}. Finally, to verify the performance of combining QRPINNs and adaptive sampling methods, we conducted experiments on ablation study in \cref{section: ablation}; to explore the limits of the dimensionality that can be effectively solved, we introduce a high-dimensional benchmark from \cite{shi2024stochastic} in \cref{section: cost}.

In tables, we mark out the minimum relative error by blue and we regard experiments with errors greater than $10^{-1}$ as failed results. The available code and hyperparameters are in GitHub at \url{https://github.com/DUCH714/quasi_sampling_PINN}.

\subsection{Solving high-dimensional PDEs}\label{section: exp_high_pde}
In this section, we verify three high-dimensional problems: the steady Poisson's equations \cref{eq: poisson}, the steady Allen-Cahn equations\cref{eq: ac}, and the steady Sine-Gordon equations \cref{eq: sg}
\paragraph{Poisson's equations}
    
We consider the classical high-dimensional Poisson's equation:
\begin{equation}\label{eq: poisson}
    \Delta u =f, \quad \boldsymbol{x}\in [-1,1]^d,
\end{equation}
If the solution $u(\boldsymbol{x})=e^{-\alpha \|\boldsymbol{x}\|^2_2}$, then $f=2\alpha(2\alpha \|\boldsymbol{x}\|^2_2-d) e^{-\alpha \|\boldsymbol{x}\|_2^2}$. Because of the distribution of the solution has locality, Poisson equations are a classical example in adaptive sampling to verify the performance of the sampling methods. 

\begin{table}[ht]
\caption{Poisson's equation}
\label{table:poisson}
\begin{center}
\begin{adjustbox}{width=\columnwidth, center}
\begin{tabular}{lllllll}
\multicolumn{1}{c}{\bf $d$ } &\multicolumn{1}{c}{\bf $\alpha$ } & \multicolumn{1}{c}{\bf Vanilla} & \multicolumn{1}{c}{\bf RAD} &\multicolumn{1}{c}{\bf ACLE} &\multicolumn{1}{c}{\bf Halton} & \multicolumn{1}{c}{\bf Sobol}
\\ \toprule 
\multirow{3}{*}{3}    
     & 1   & $3.96e-04\pm 2.27e-04$ & $2.00e-04\pm 9.83e-05$ & \cellcolor{blue!25}$9.90e-05\pm 3.41e-05$ & $6.27e-04\pm 4.03e-04$ & $2.85e-04\pm 1.24e-04$ \\
     & 10  & $3.24e-03\pm 3.39e-03$ & $3.11e-03\pm 1.84e-03$ & $3.44e-03\pm 2.67e-03$ & $1.04e-03\pm 3.30e-04$ & \cellcolor{blue!25}$8.92e-04\pm 2.25e-04$ \\
     & 100 & $1.28e-01\pm 2.98e-02$ & \cellcolor{blue!25}$5.51e-02\pm 1.22e-02$ & $7.68e-02\pm 2.70e-02$ & $2.35e-01\pm 5.19e-03$ & $1.48e-01\pm 4.67e-02$ \\
     \midrule
\multirow{3}{*}{10}    
   & 1   & $8.09e-04\pm 2.34e-05$ & $1.04e-03\pm 3.27e-04$ & \cellcolor{blue!25}$7.98e-04\pm 4.85e-05$ & $8.38e-04\pm 9.97e-05$ & $8.23e-04\pm 4.49e-05$ \\
   & 10  & \cellcolor{blue!25}$1.74e-02\pm 4.10e-03$ & $3.31e-02\pm 1.80e-02$ & $1.89e-02\pm 2.93e-03$ & $2.03e-02\pm 6.18e-03$ & $1.95e-02\pm 4.73e-03$ \\
   & 100 & $1.19e+00\pm 2.30e-01$ & $1.86e+00\pm 7.17e-01$ & $1.04e+00\pm 3.86e-02$ & $1.04e+00\pm 3.30e-02$ & $5.78e+00\pm 4.28e+00$ \\
   \midrule
\multirow{2}{*}{100}  
   & 0.1   & $3.15e-03\pm 6.13e-05$ & $3.19e-03\pm 2.16e-05$ & \multicolumn{1}{c}{-} & \cellcolor{blue!25}$3.11e-03\pm 9.43e-06$ & $3.13e-03\pm 1.89e-05$ \\
   & 1  & \cellcolor{blue!25}$3.10e-02\pm 1.70e-04$ & $3.13e-02\pm 2.49e-04$ & \multicolumn{1}{c}{-} & $3.12e-02\pm 8.16e-05$ & $3.11e-02\pm 1.41e-04$ \\
\bottomrule
\end{tabular}
\end{adjustbox}
\end{center}
\end{table}

\cref{table:poisson} validates the efficiency of adaptive sampling methods in solving low-dimensional Poisson's equations. On the contrary, when dealing with high-dimensional problems, the chosen methods become invalid, because of the complex structure in high-dimensional spaces. However, since directly randomly sampling from high-dimensional region is difficult to capture the features, combining QRPINNs with adaptive sampling methods can mitigate this challenge, resulting in better accuracy.. Those experiments are discussed in \cref{section: ablation}.

Furthermore, the adaptive sampling methods can improve the performance because it can capture the local features and make the sampling points concentrate to the locality. However, if there are enough points to capture those local features, the efficiency of adaptive sampling methods is under scrutiny. Herein, we conducted the experiments of different number of training points $N_{points}$. \cref{fig:poisson points} demonstrates the results and indicates that the effect of adaptive sampling methods can be replaced by increasing $N_{points}$. Furthermore, in \cref{poisson_n_points}, the relative error of RAD and ACLE increases as $N_{points}$ increases when $N_{points}$ greater than a specific value and reveals that an excessive concentration of points in a specific region can lead to overfitting.  

\begin{figure}[ht]
  \centering
\subfigure[\label{poisson_n_points}]{\includegraphics[width=0.23\linewidth]{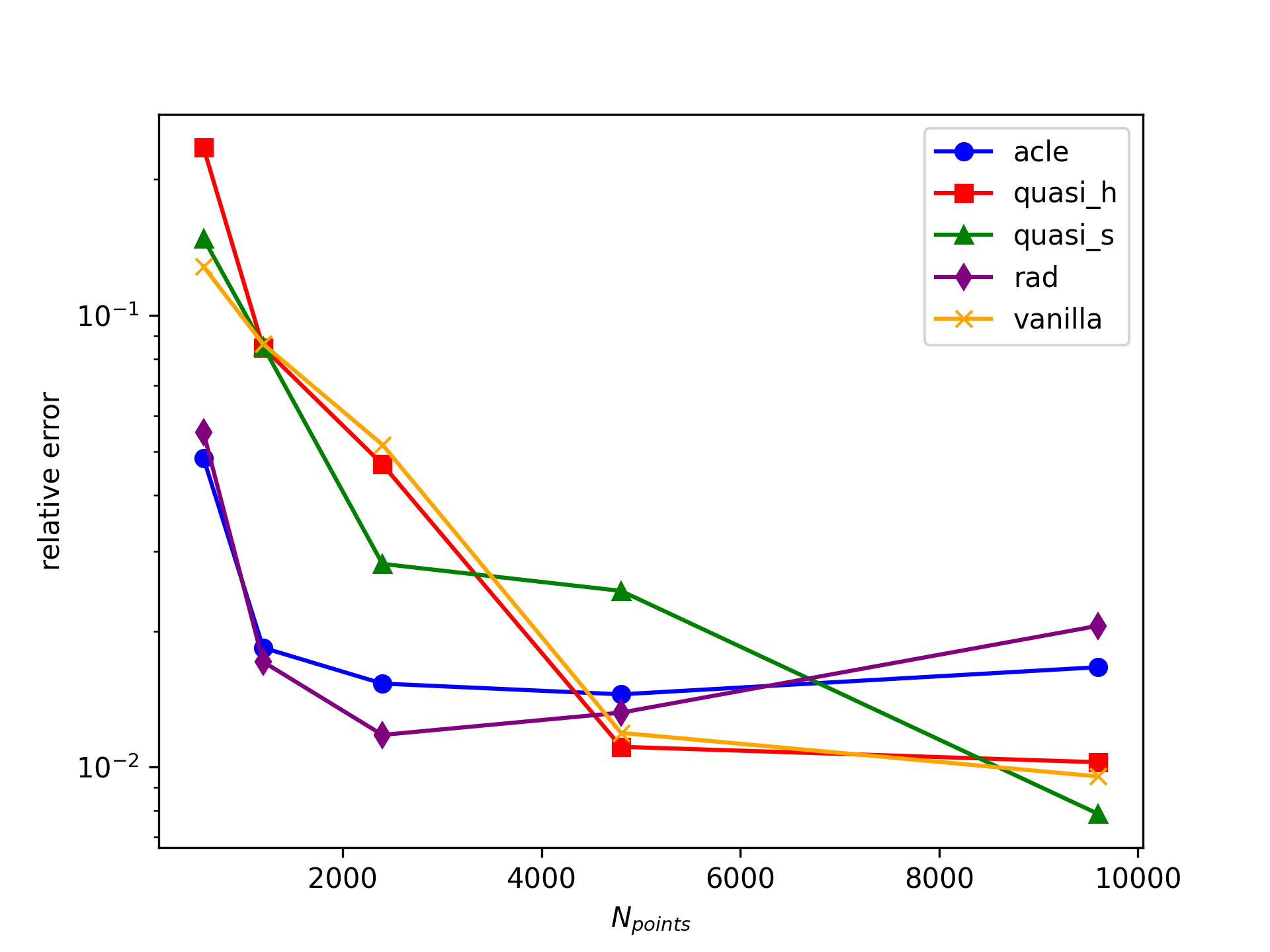}}
  \subfigure[\label{rad_alpha100} RAD]{\includegraphics[width=0.23\linewidth]{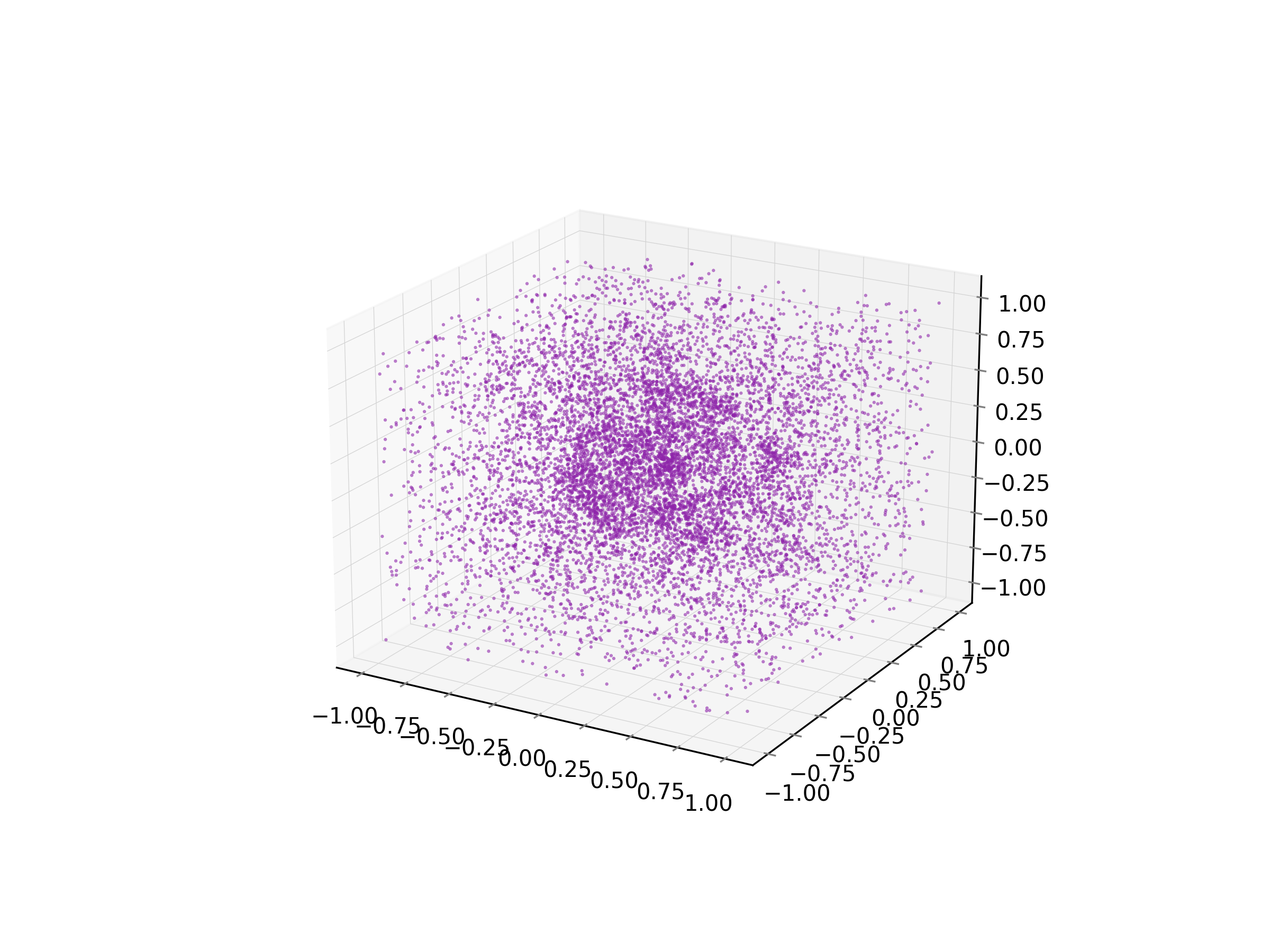}}
  \subfigure[\label{acle_alpha100} ACLE]{\includegraphics[width=0.23\linewidth]{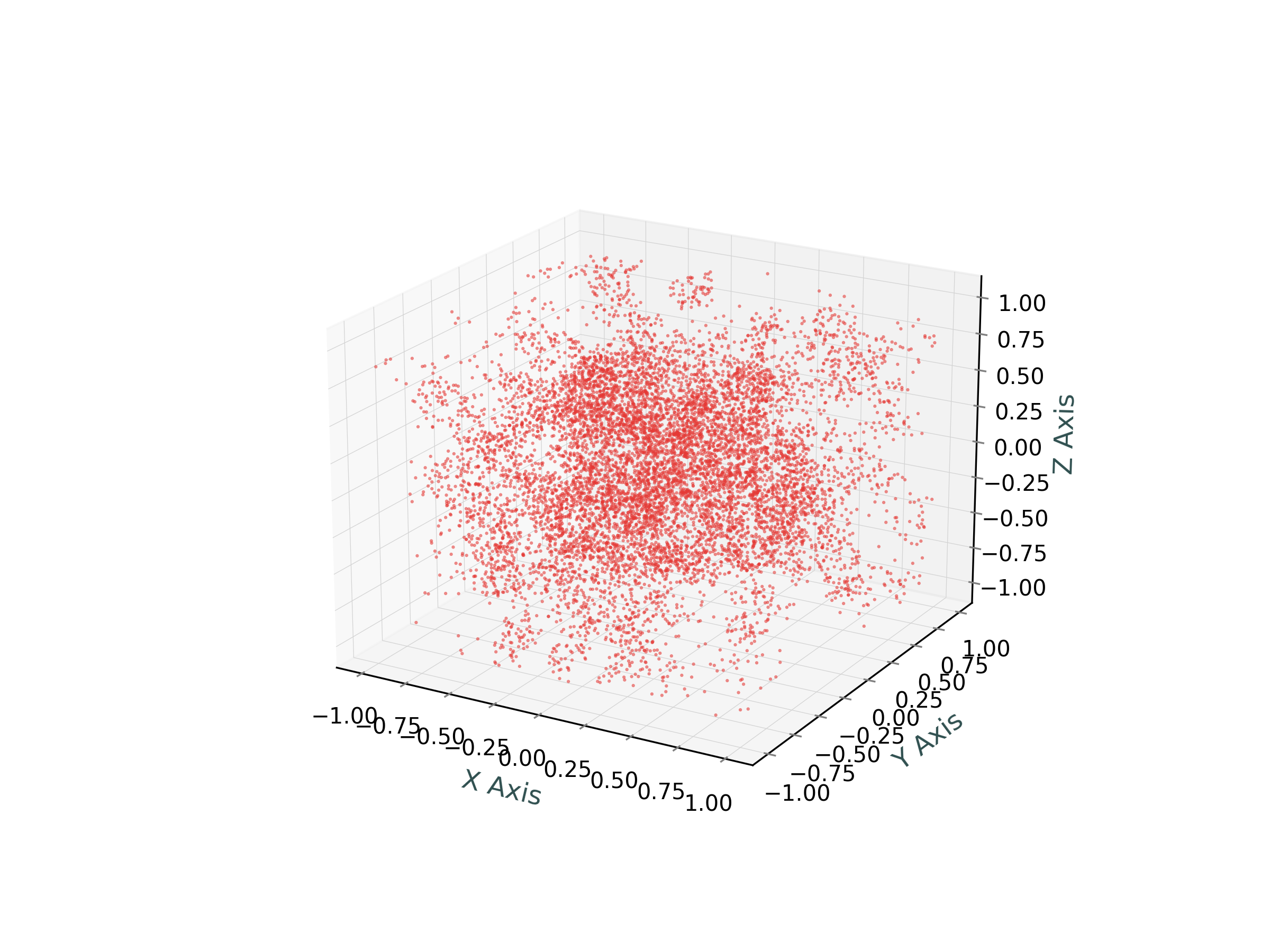}}
  \subfigure[\label{quasi_alpha100} Sobol]{\includegraphics[width=0.23\linewidth]{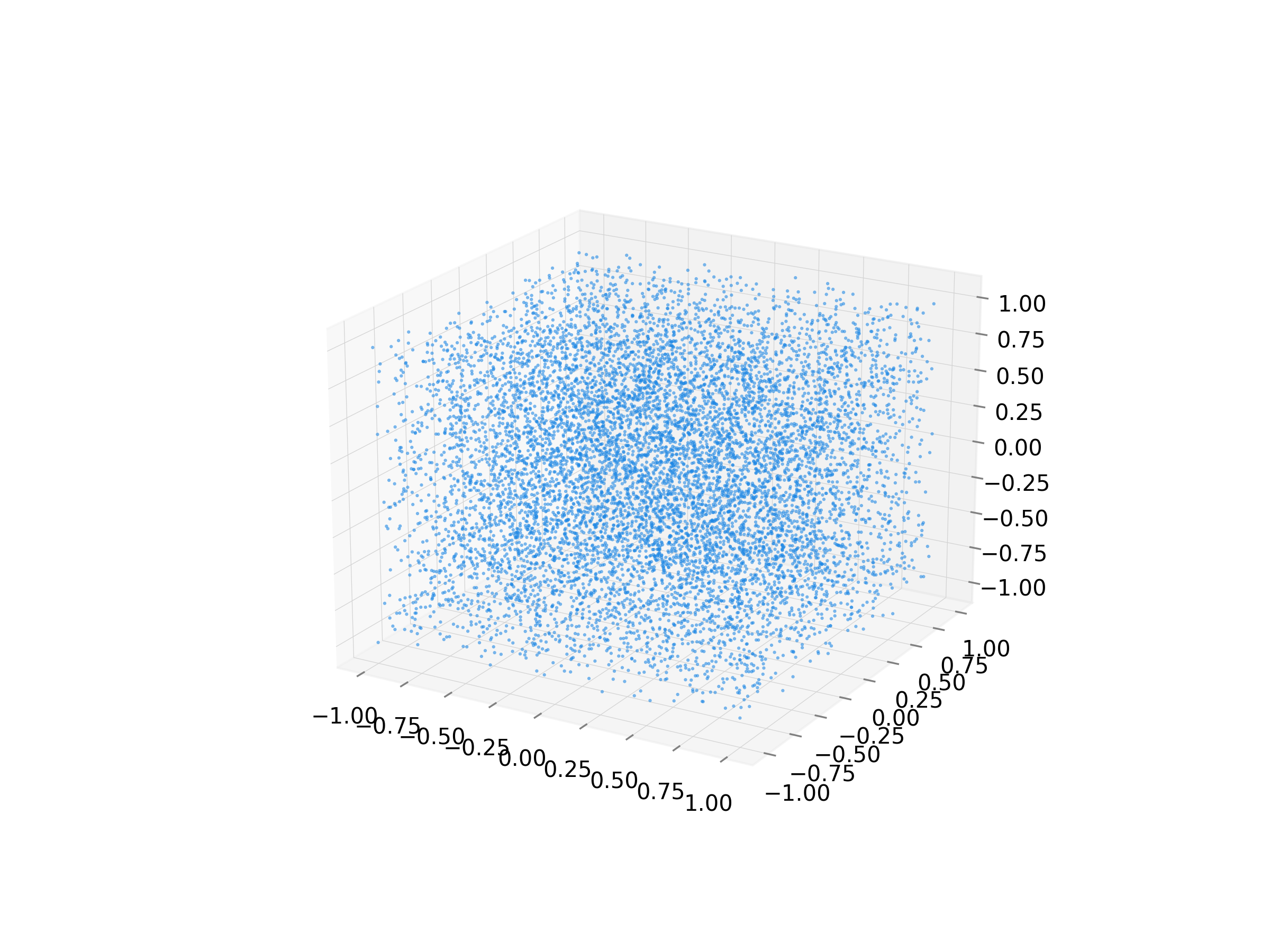}}
  \caption{Poisson's equations with $d=3$ and $\alpha=100$. \subref{poisson_n_points} presents the relationship between the relative error and $N_{points}$. \subref{rad_alpha100}, \subref{acle_alpha100}, and \subref{quasi_alpha100} shows the distribution of training points for RAD, ACLE, and Sobol respectively.}
  \label{fig:poisson points}
\end{figure}

\paragraph{Allen-Cahn equations}
We consider the steady Allen-Cahn equations:
\begin{equation}\label{eq: ac}
    \Delta u + u-u^3=f, \boldsymbol{x}\in [-1,1]^d,
\end{equation}
we use the two-body interaction exact solution with $c_i\sim \mathcal{N}(0, 1)$:
\begin{equation}
    u(\boldsymbol{x})=\left(1-\frac{1}{d}\|\boldsymbol{x}\|_2^2\right)\left(\sum_{k=1}^{d-1}c_k\sin\left(x_k+\cos\left(x_{k+1}\right)+x_{k+1}\sin\left(x_k\right))\right)\right)
\end{equation}
Let $A:=\sum_{k=1}^{d-1}c_k\sin\left(x_k+\cos\left(x_{k+1}\right)+x_{k+1}\sin\left(x_k\right))\right)$, $B:=1-\frac{1}{d} \|\boldsymbol{x}\|^2_2$,
then 
\begin{equation}
    f(\boldsymbol{x})=B\Delta A+A\Delta B+\nabla A^T\nabla B +AB-A^3B^3.
\end{equation}
\begin{figure}[ht]
  \centering
\subfigure[\label{Vanilla_ac3} Vanilla]{\includegraphics[width=0.19\linewidth]{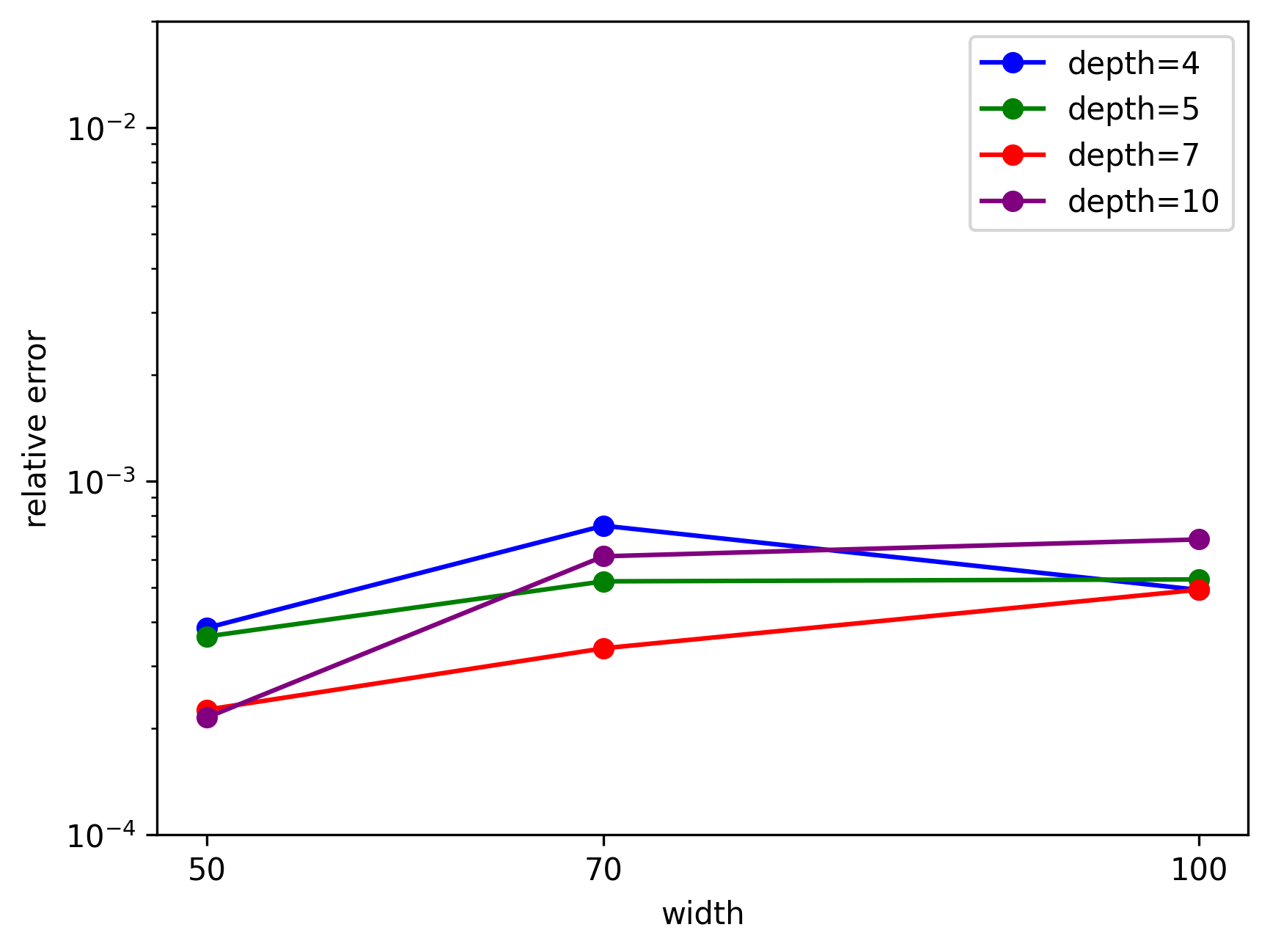}}
  \subfigure[\label{rad_ac3} RAD]{\includegraphics[width=0.19\linewidth]{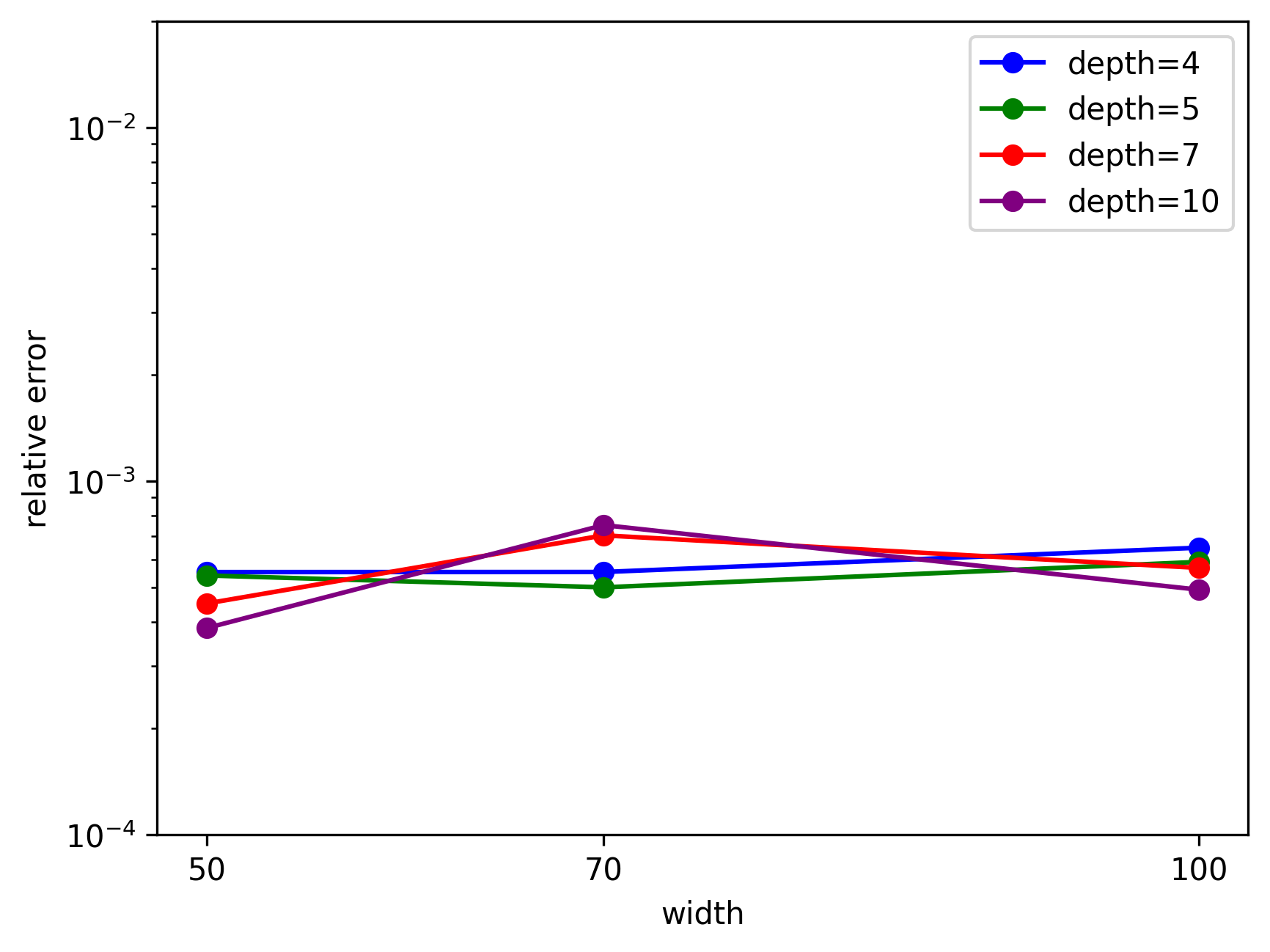}} 
  \subfigure[\label{acle_ac3} ACLE]{\includegraphics[width=0.19\linewidth]{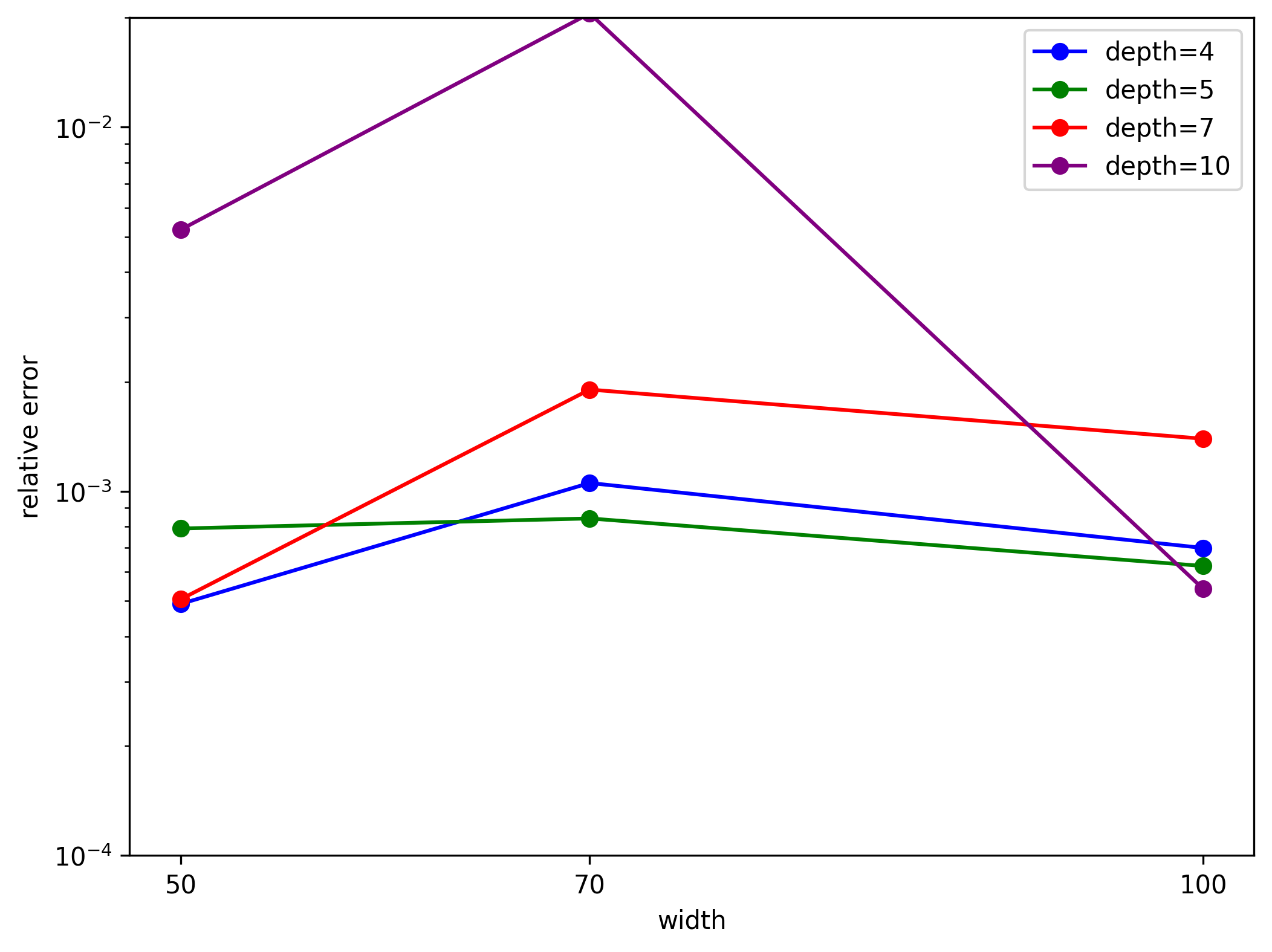}}
  \subfigure[\label{hac3} Helton]{\includegraphics[width=0.19\linewidth]{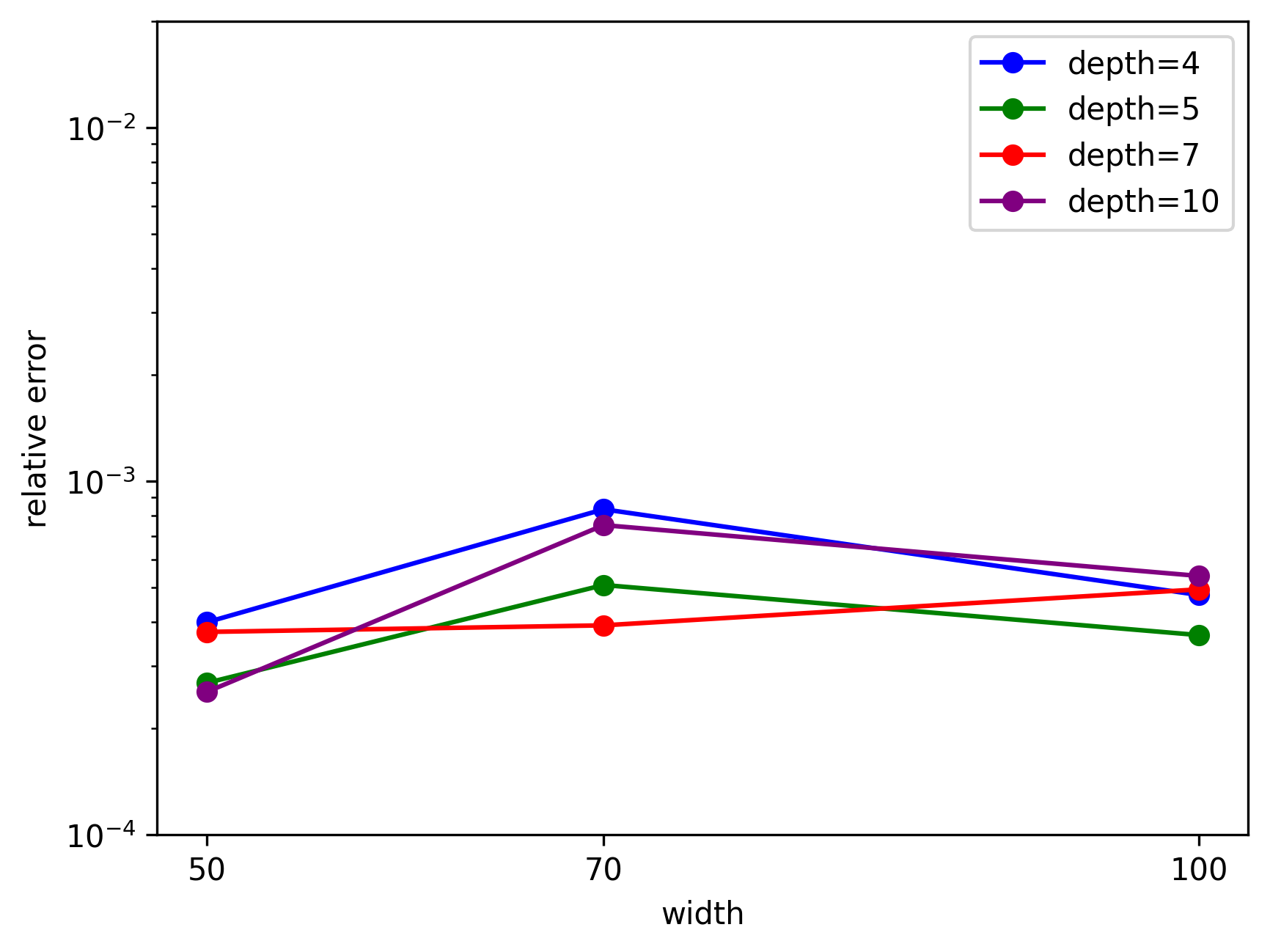}}
  \subfigure[\label{sac3} Sobol]{\includegraphics[width=0.19\linewidth]{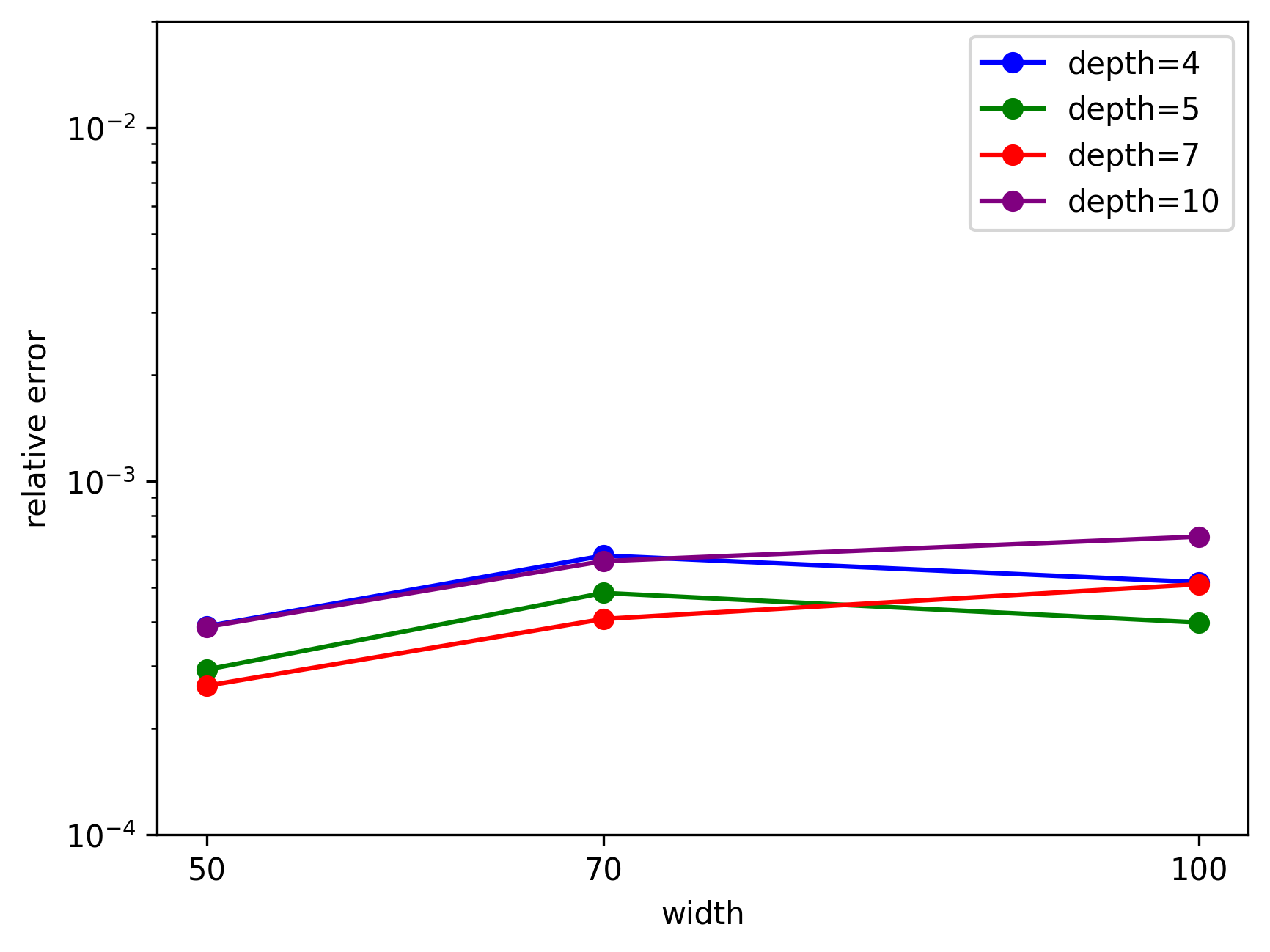}}
  \caption{Allen-Cahn with $d=3$}
  \label{fig:allencahn3}
\end{figure}

\begin{figure}[ht]
  \centering
\subfigure[\label{Vanilla_ac10} Vanilla]{\includegraphics[width=0.19\linewidth]{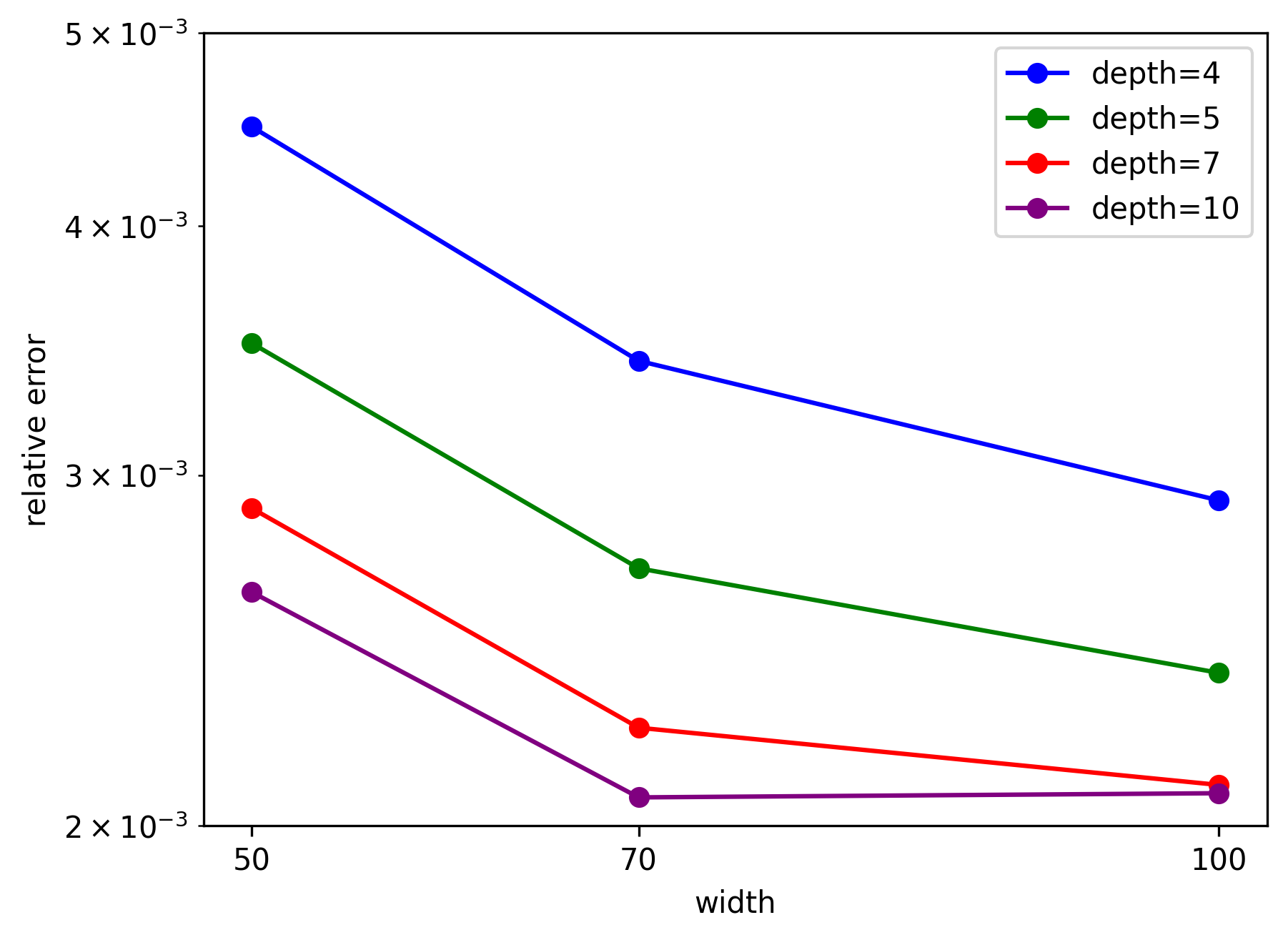}}
  \subfigure[\label{rad_ac10} RAD]{\includegraphics[width=0.19\linewidth]{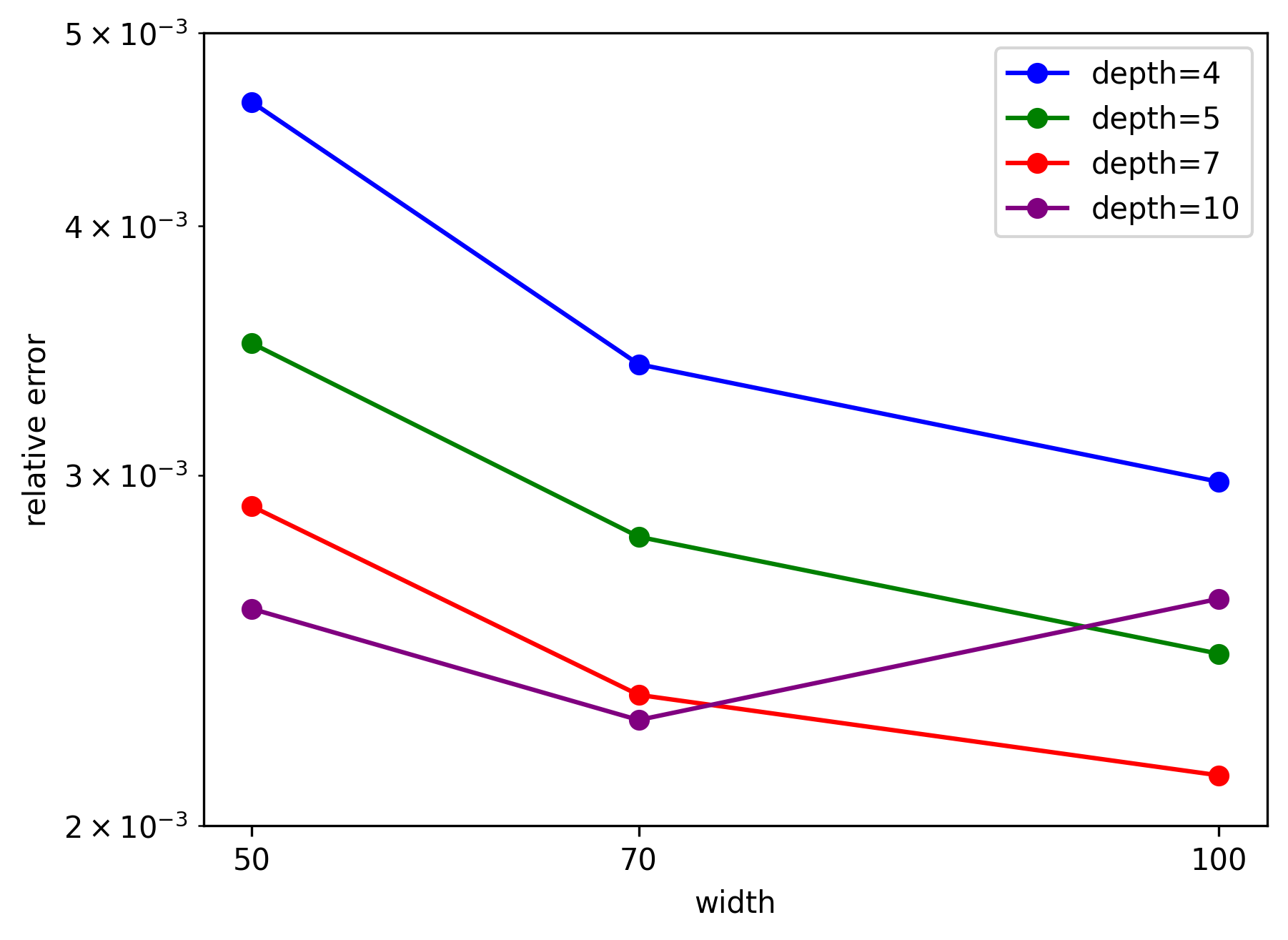}} 
  \subfigure[\label{acle_ac10} ACLE]{\includegraphics[width=0.19\linewidth]{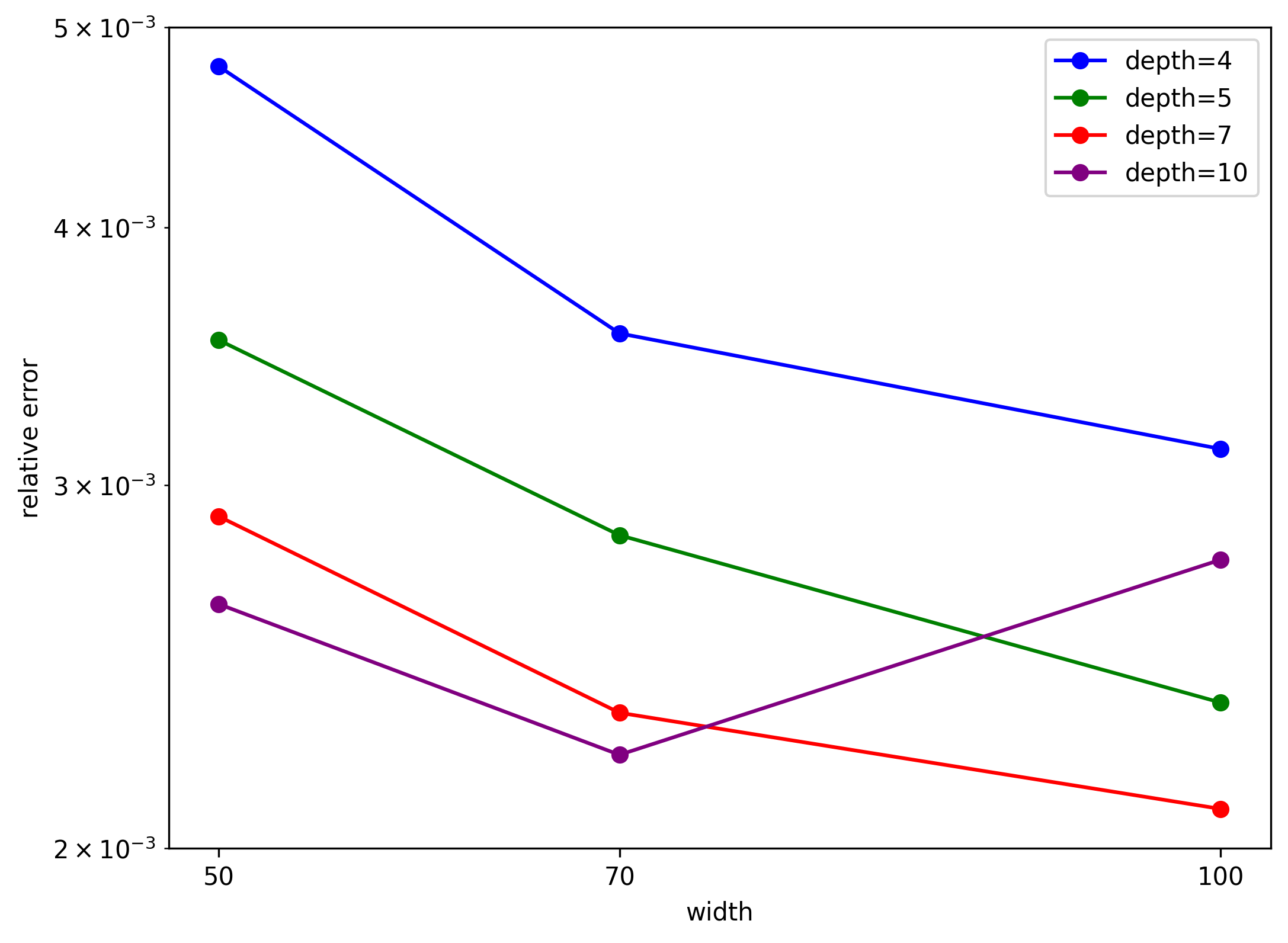}}
  \subfigure[\label{hac10} Helton]{\includegraphics[width=0.19\linewidth]{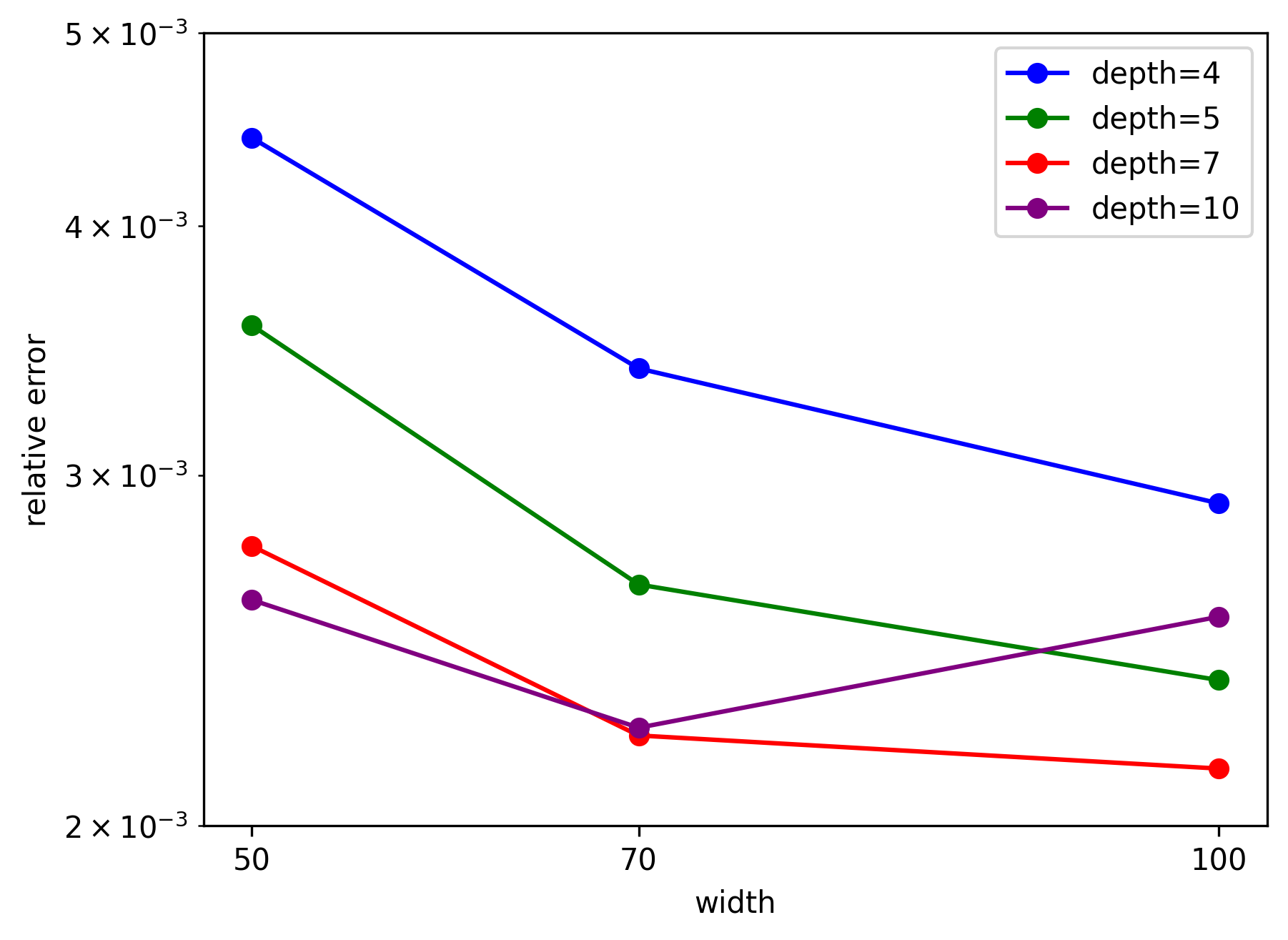}}
  \subfigure[\label{sac10} Sobol]{\includegraphics[width=0.19\linewidth]{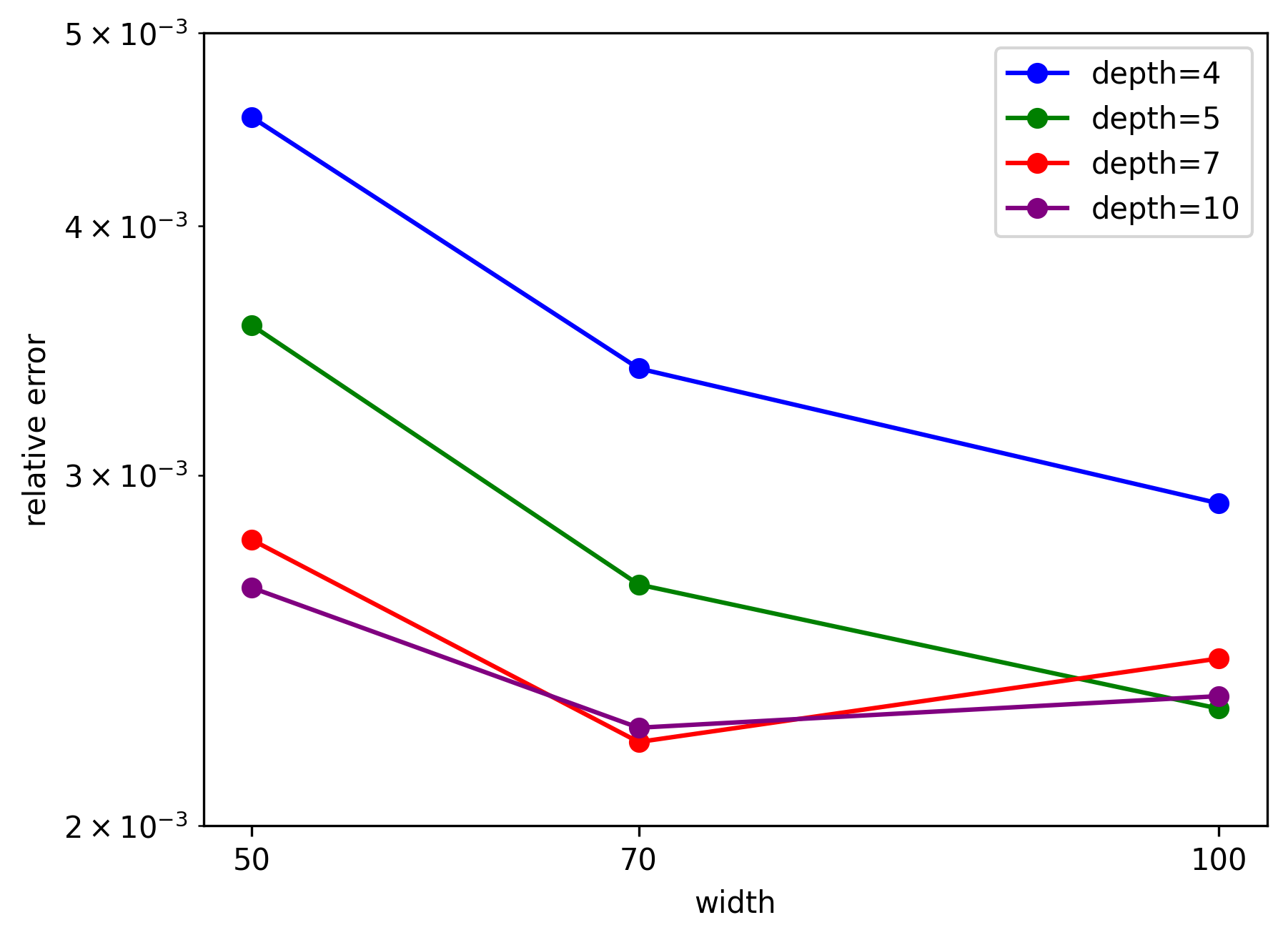}}
  \caption{Allen-Cahn with $d=10$}
  \label{fig:allencahn10}
\end{figure}

\paragraph{Sine-Gordon equations}
    
We consider the steady Sine-Gordon equations:
\begin{equation}\label{eq: sg}
    \Delta u + \sin(u)=f, \boldsymbol{x}\in [-1,1]^d,
\end{equation}
we use the three-body interaction exact solution with $c_i\sim \mathcal{N}(0, 1)$:
\begin{equation}
    u(\boldsymbol{x})=\left(1-\frac{1}{d}\|\boldsymbol{x}\|_2^2\right)\left(\frac{1}{d-2}\sum_{i=1}^{d-2}c_ie^{x_ix_{i+1}x_{i+2}}\right)
\end{equation}
Let $A:=\frac{1}{d-2}\sum_{i=1}^{d-2}c_ie^{x_ix_{i+1}x_{i+2}}$, $B:=1-\frac{1}{d}\|\boldsymbol{x}\|_2^2$,
then 
\begin{equation}
    f(\boldsymbol{x})=B\Delta A+A\Delta B+\nabla A^T\nabla B +\sin(AB).
\end{equation}
\begin{figure}[ht]
  \centering
\subfigure[\label{vanilla_sg3} Vanilla]{\includegraphics[width=0.19\linewidth]{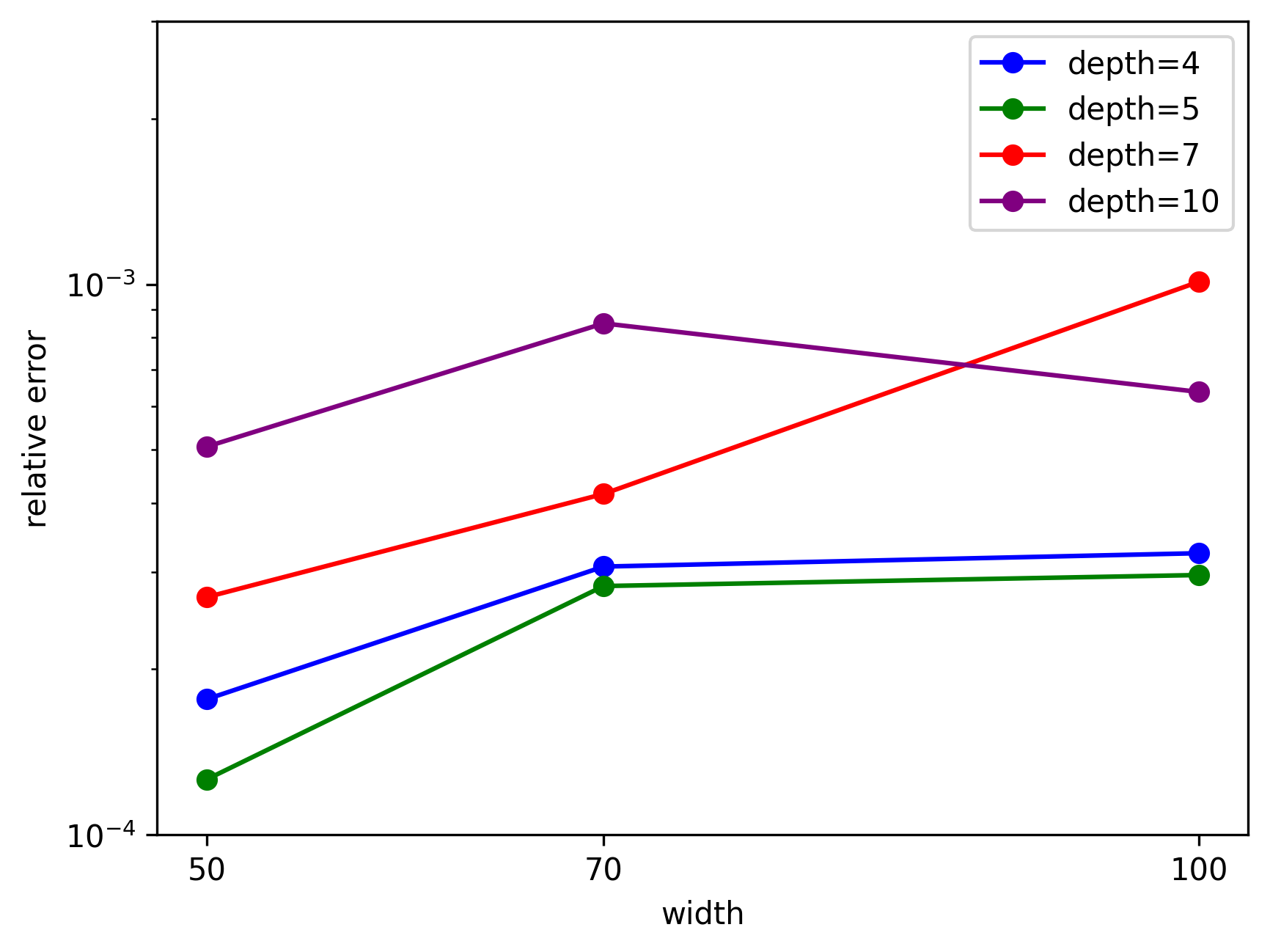}}
  \subfigure[\label{rad_sg3} RAD]{\includegraphics[width=0.19\linewidth]{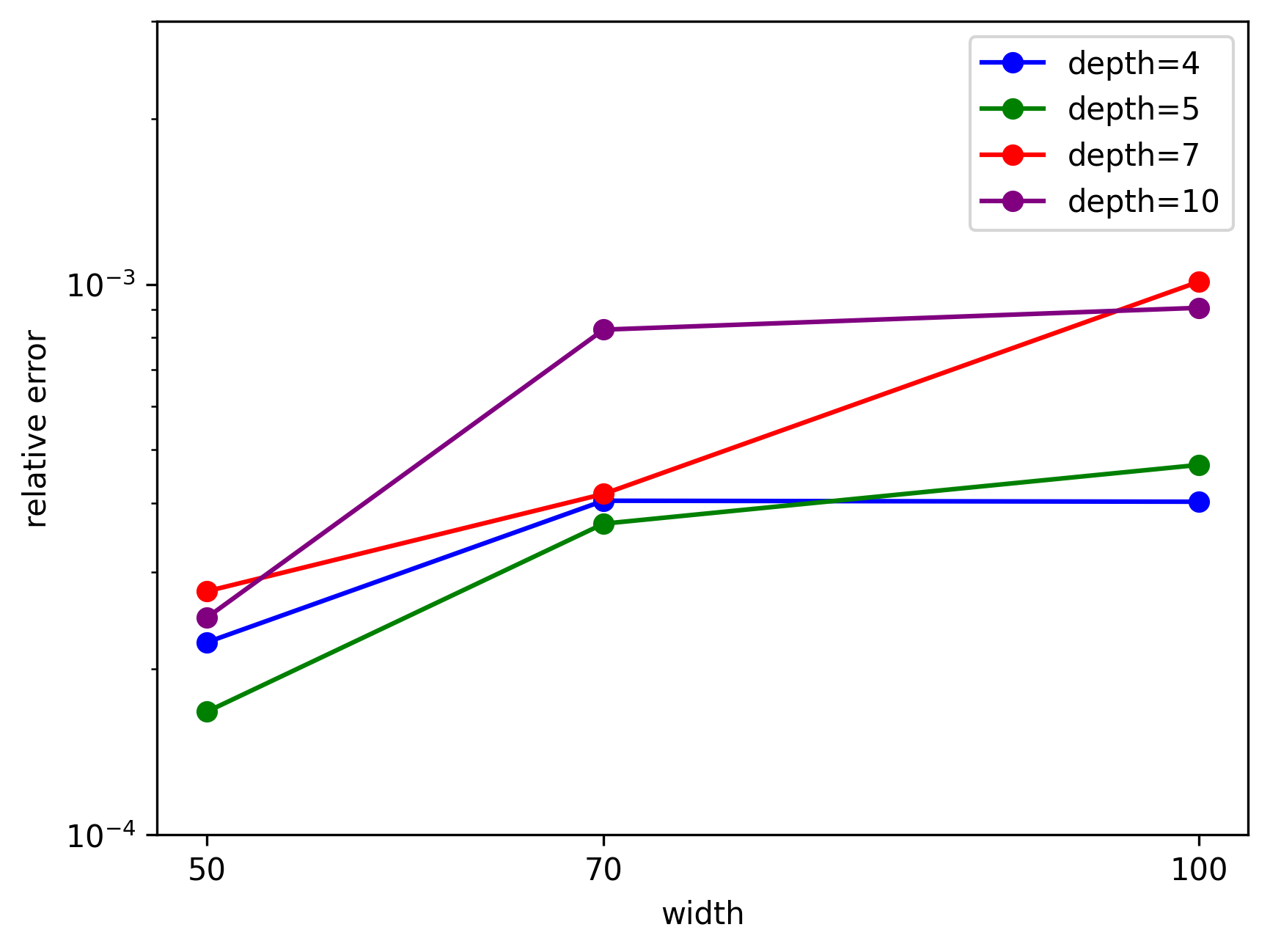}} 
  \subfigure[\label{acle_sg3} ACLE]{\includegraphics[width=0.19\linewidth]{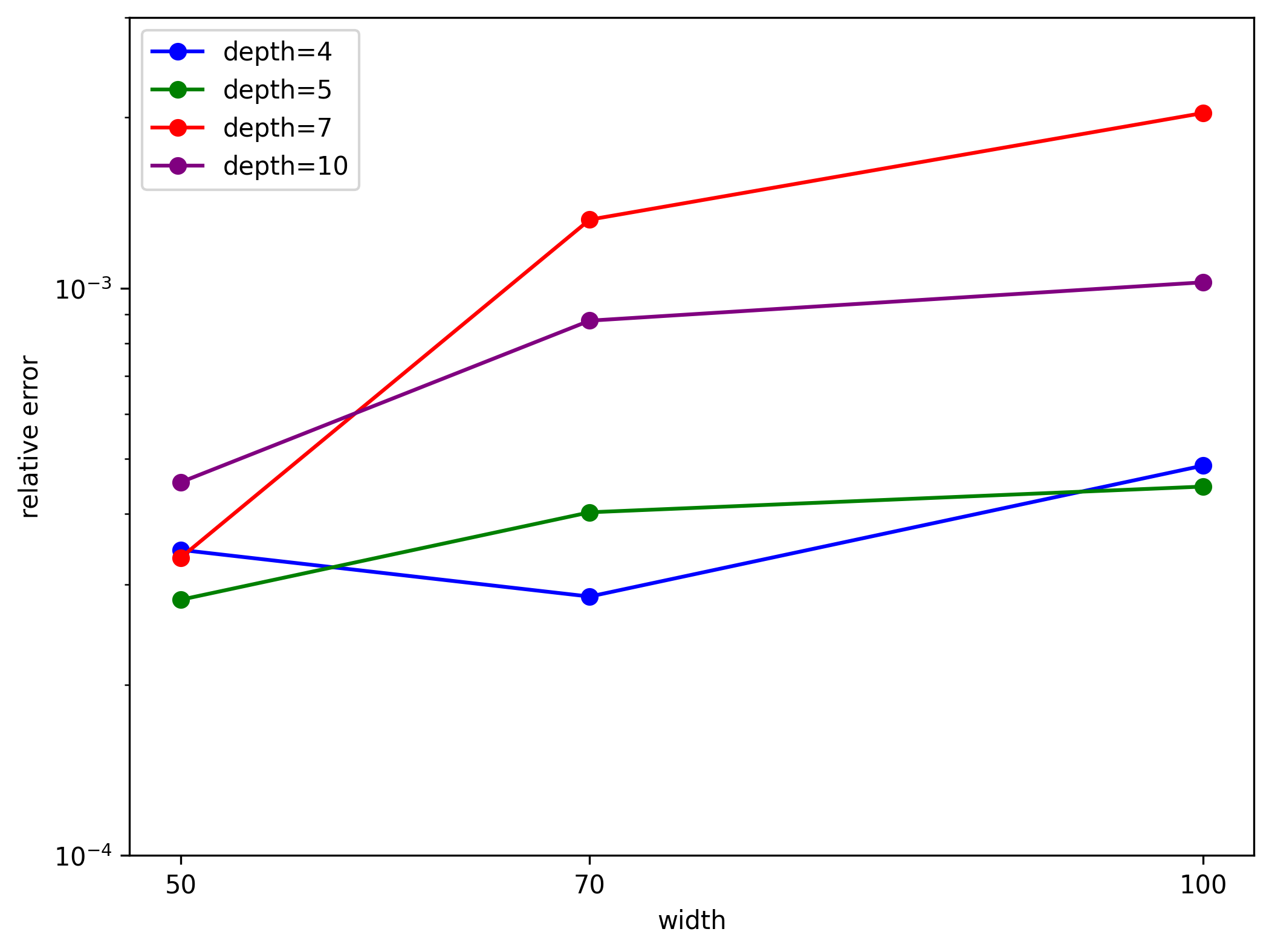}}
  \subfigure[\label{hsg3} Helton]{\includegraphics[width=0.19\linewidth]{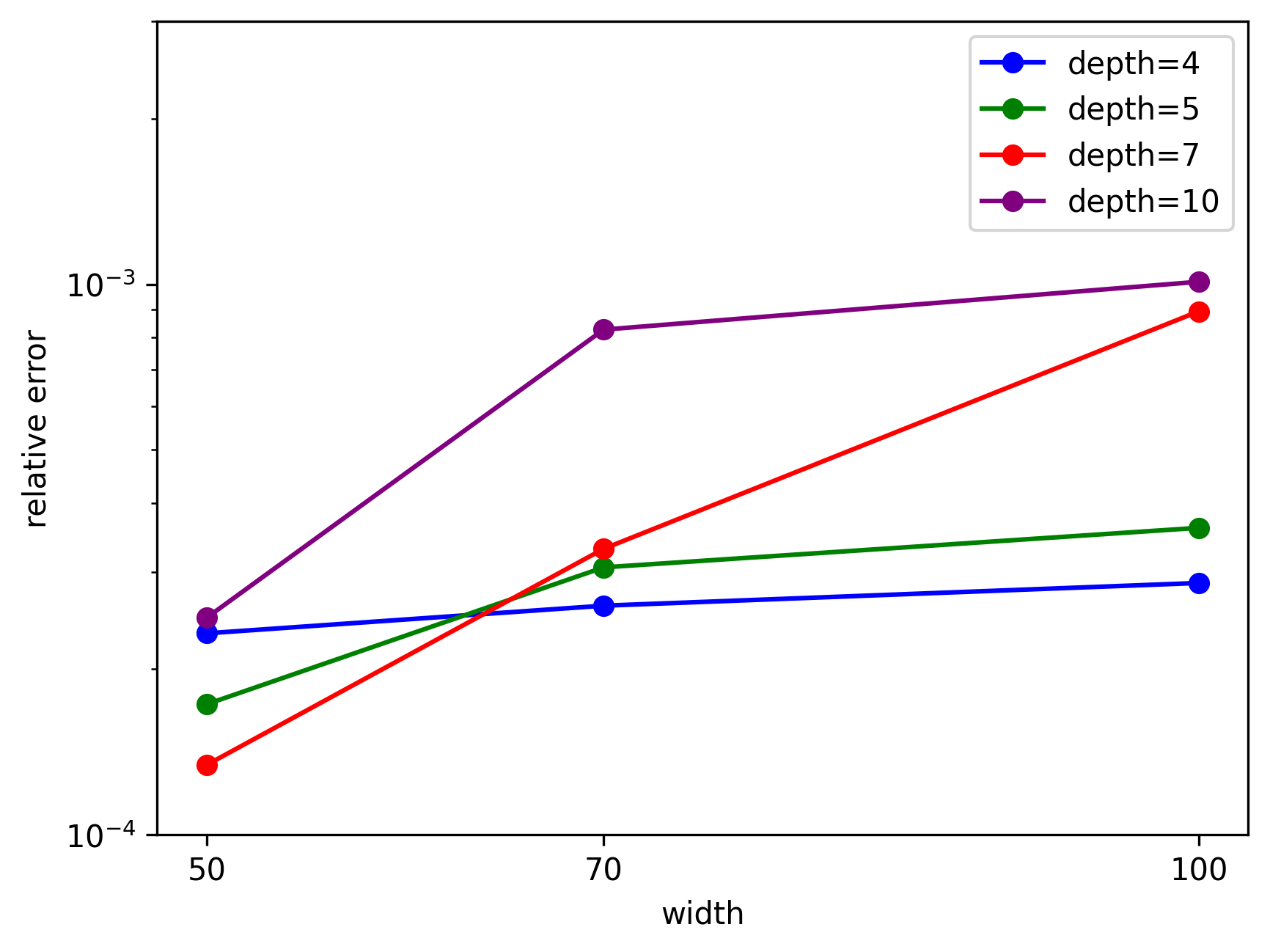}}
  \subfigure[\label{ssg3} Sobol]{\includegraphics[width=0.19\linewidth]{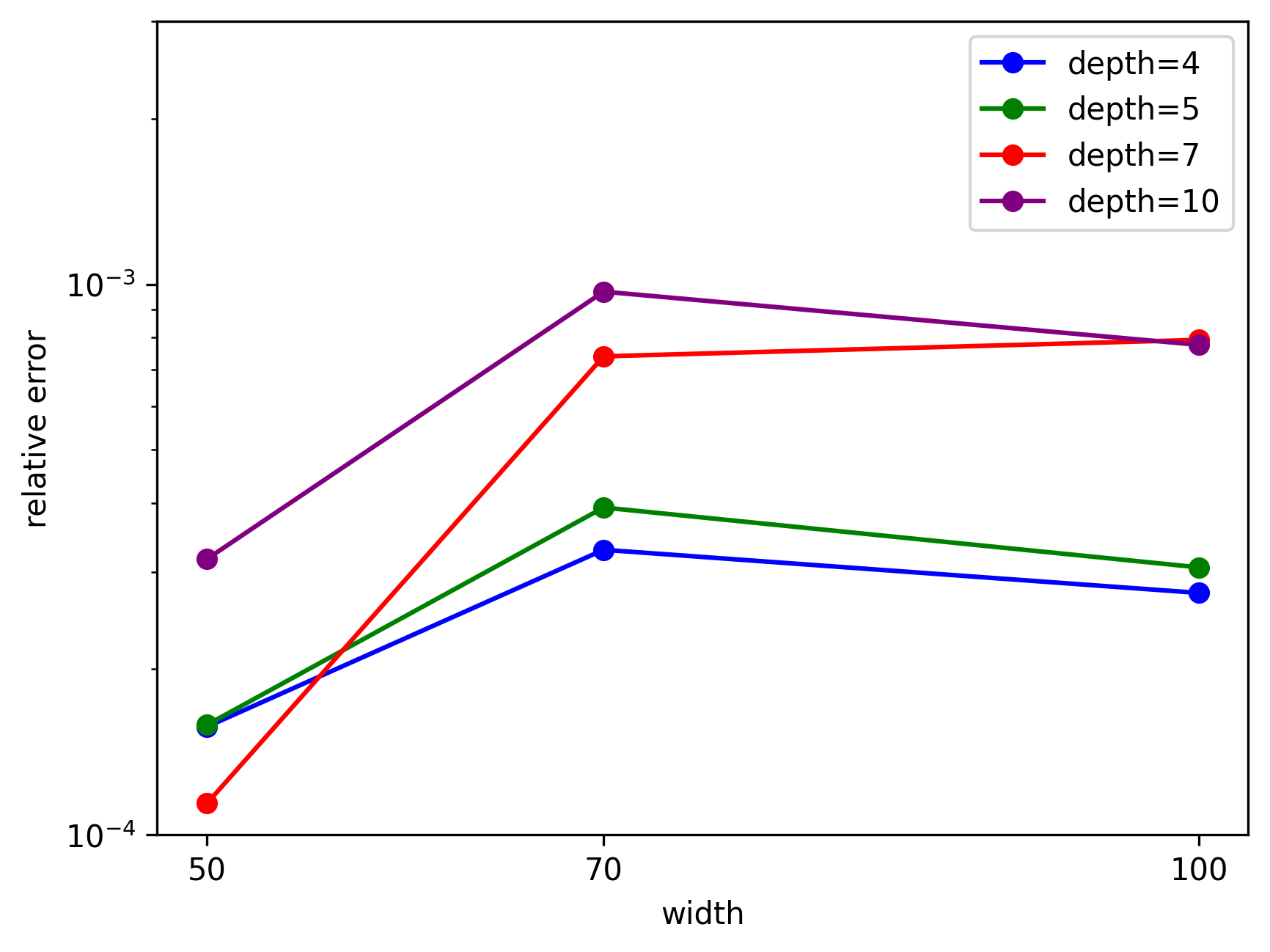}}
  \caption{Sine-Gordon with $d=3$}
  \label{fig:sinegordon3}
\end{figure}

\begin{figure}[ht]
  \centering
\subfigure[\label{vanilla_sg10} Vanilla]{\includegraphics[width=0.19\linewidth]{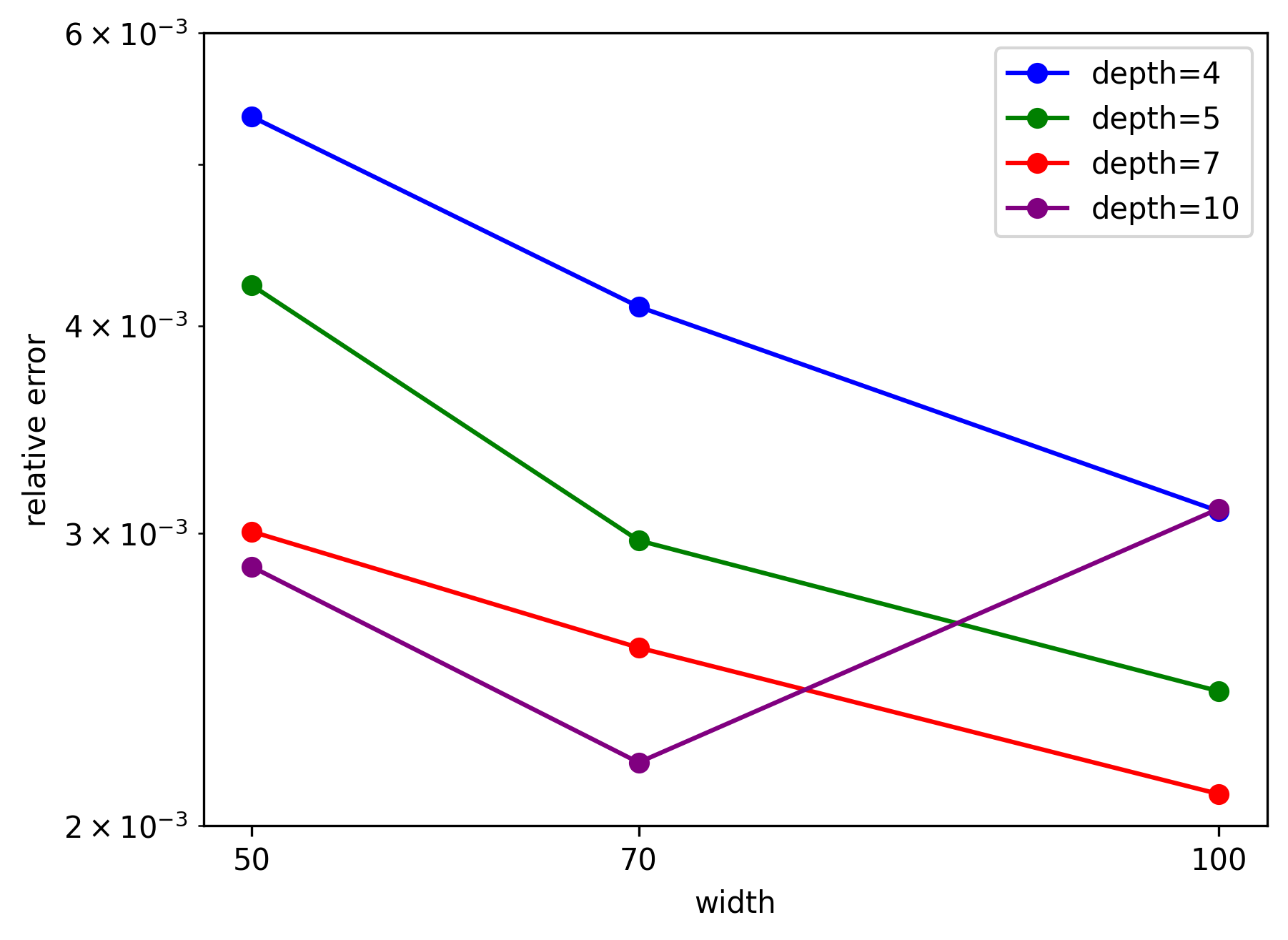}}
  \subfigure[\label{rad_sg10} RAD]{\includegraphics[width=0.19\linewidth]{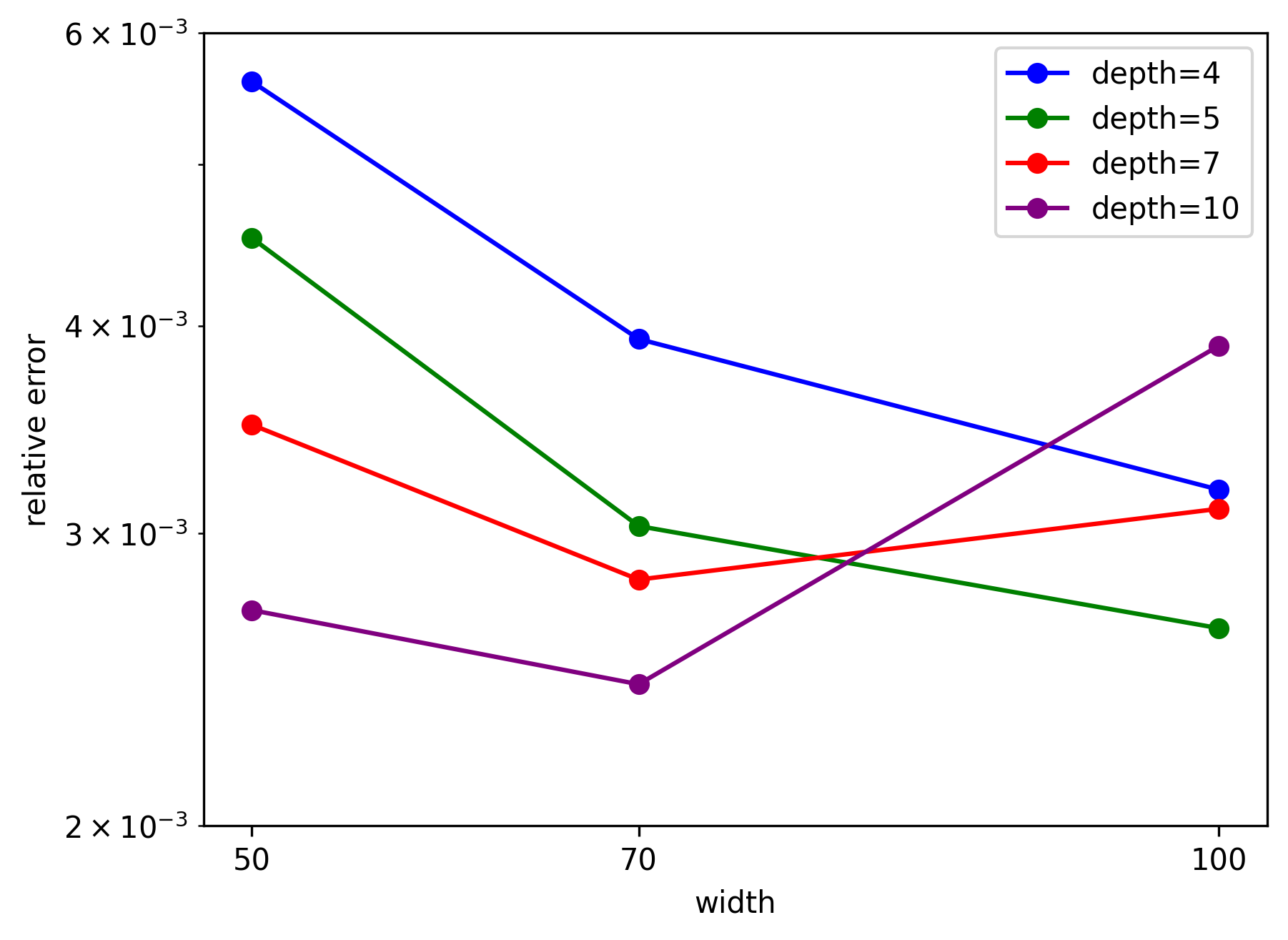}} 
  \subfigure[\label{acle_sg10} ACLE]{\includegraphics[width=0.19\linewidth]{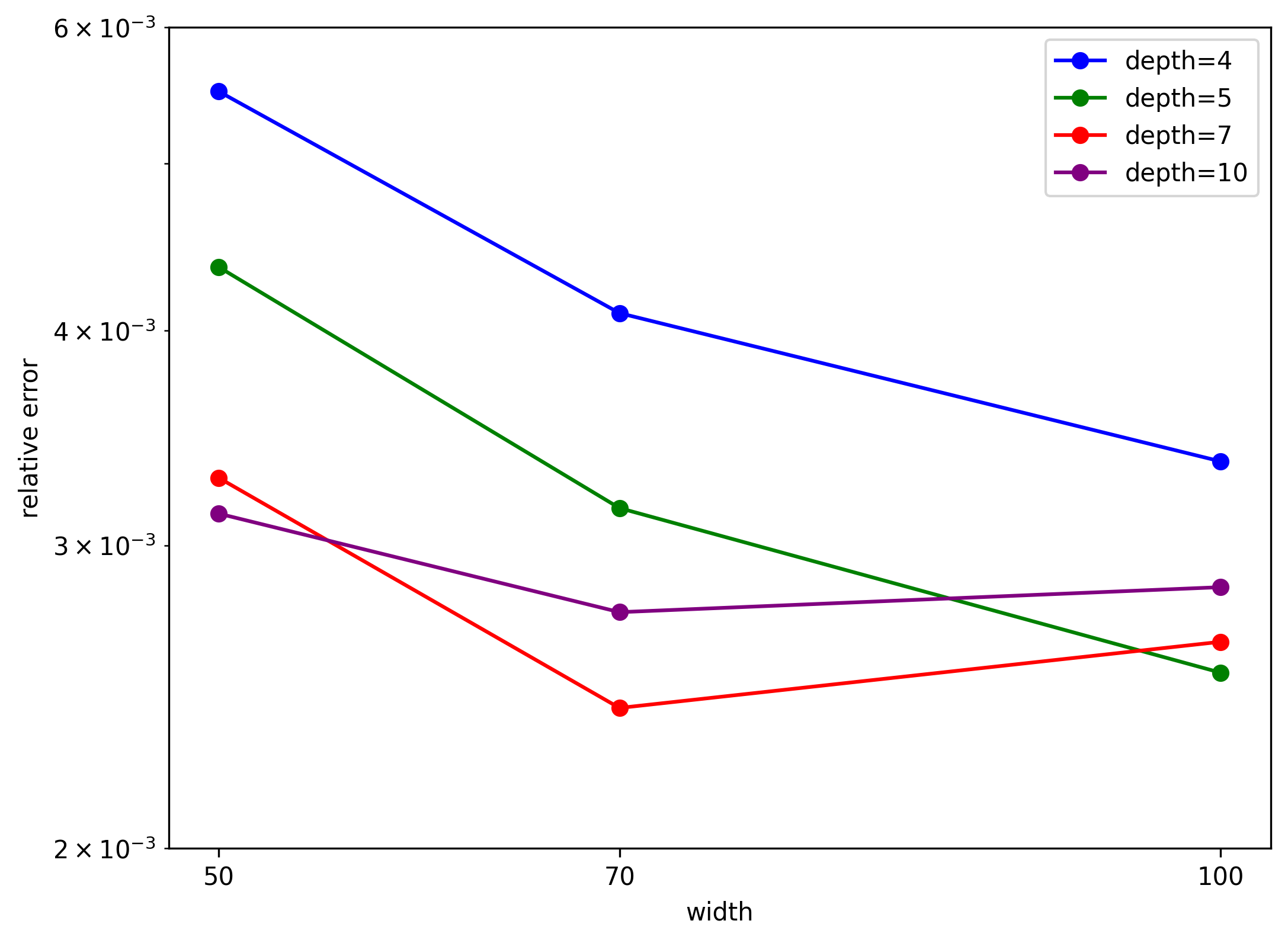}}
  \subfigure[\label{hsg10} Helton]{\includegraphics[width=0.19\linewidth]{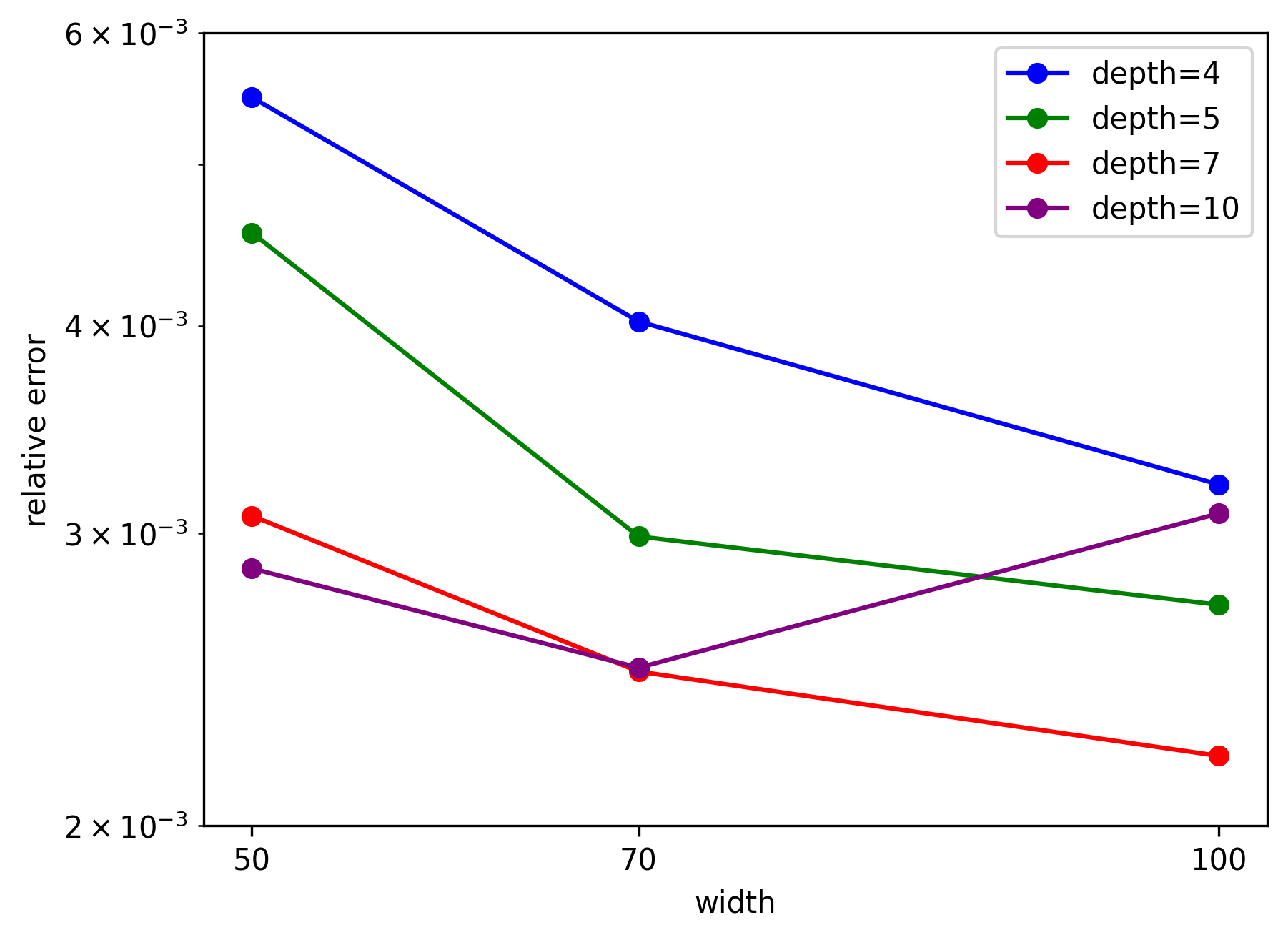}}
  \subfigure[\label{ssg10} Sobol]{\includegraphics[width=0.19\linewidth]{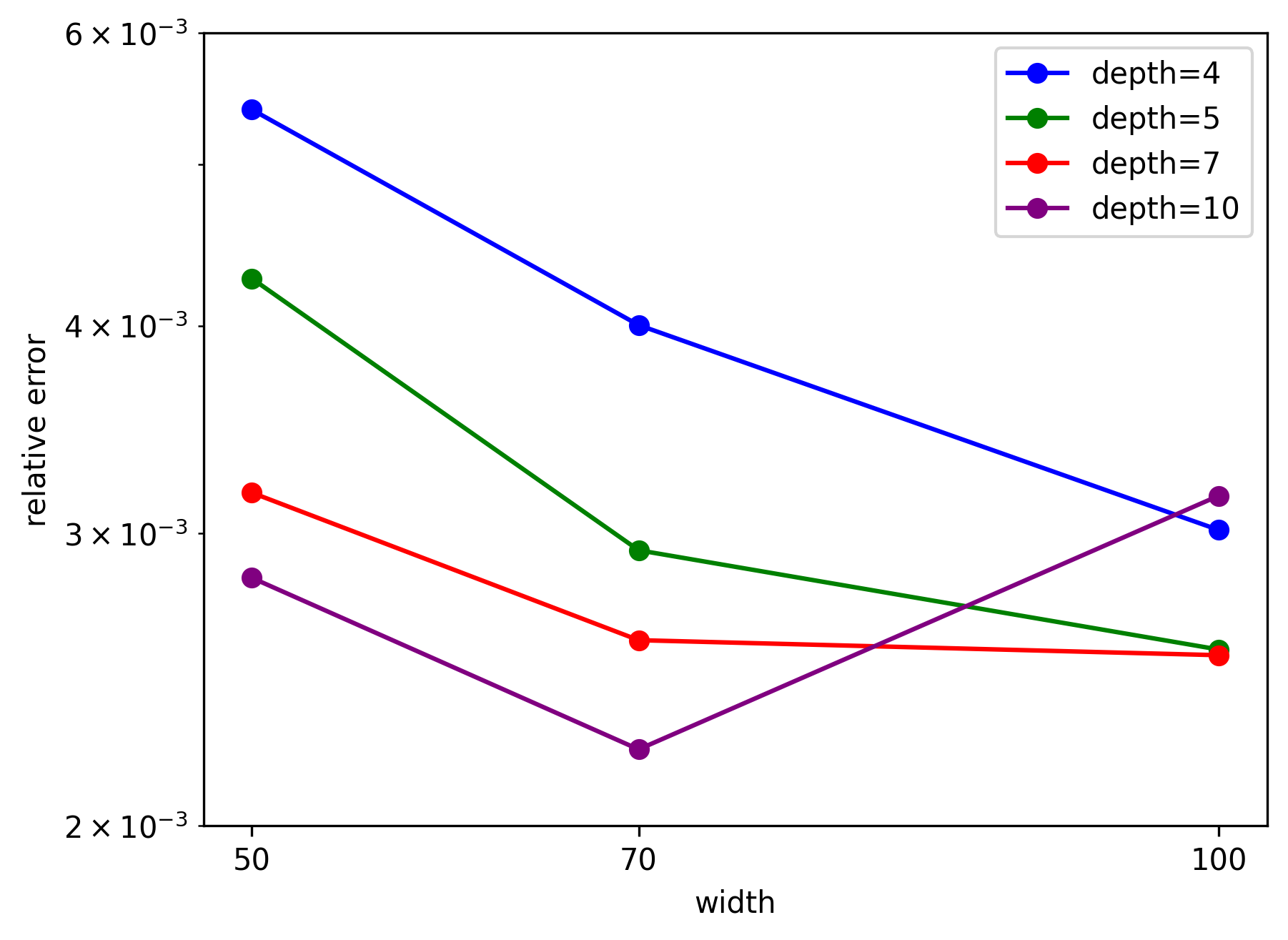}}
  \caption{Sine-Gordon with $d=10$}
  \label{fig:sinegordon10}
\end{figure}
\begin{figure}[ht]
  \centering
  \subfigure[\label{error_points_ac}]{\includegraphics[width=0.23\linewidth]{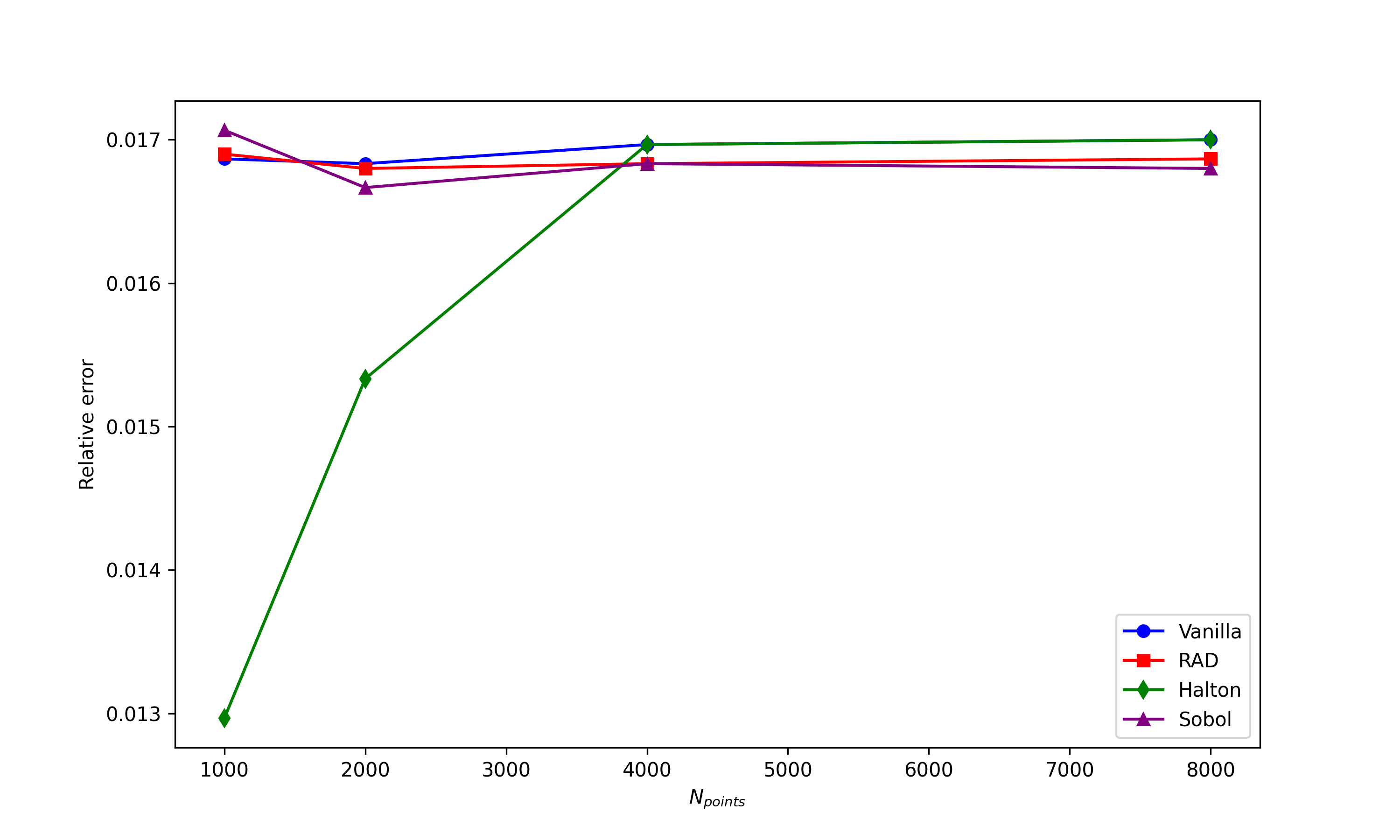}}
  \subfigure[\label{error_points_sg}]{\includegraphics[width=0.23\linewidth]{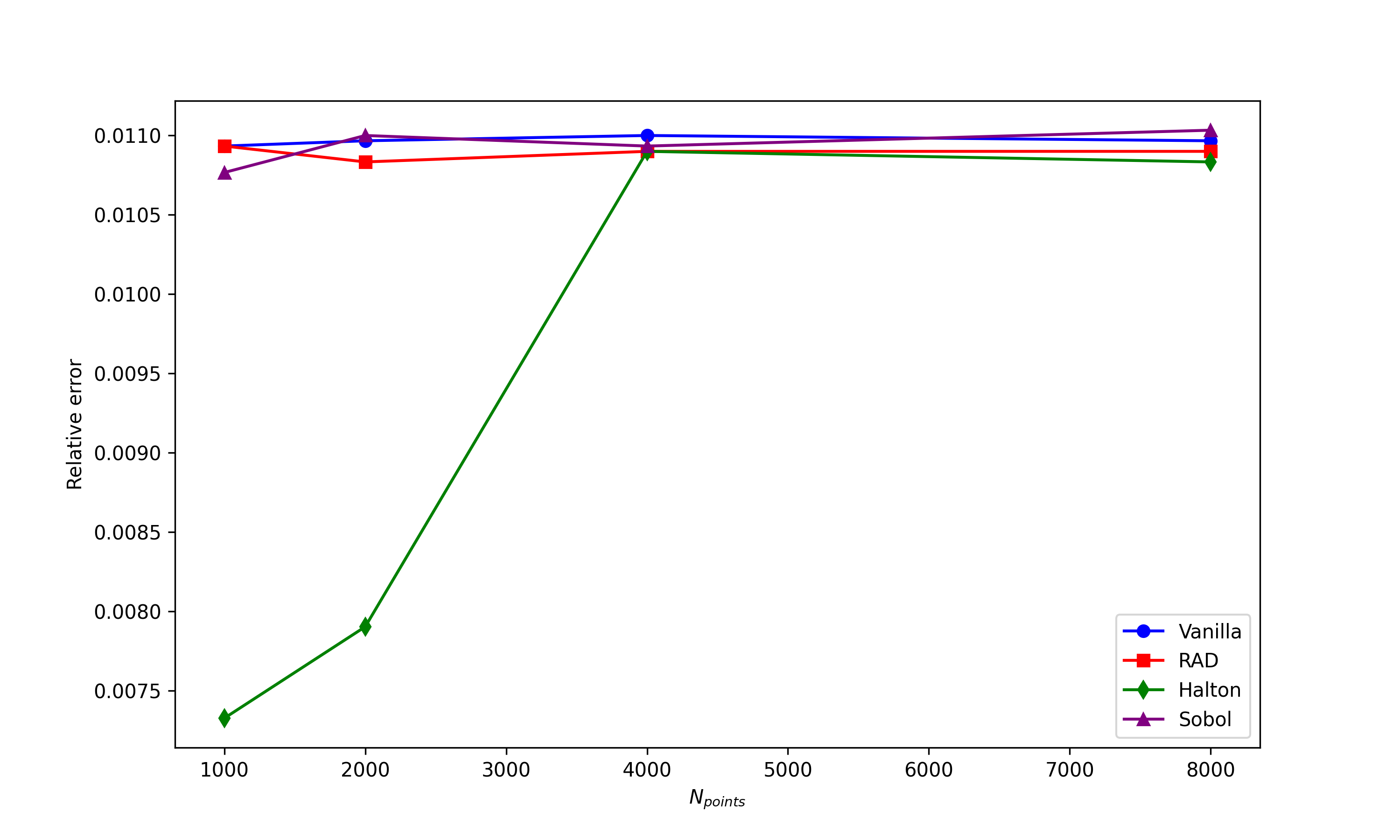}}
  \subfigure[\label{errors_ite_ac}]{\includegraphics[width=0.23\linewidth]{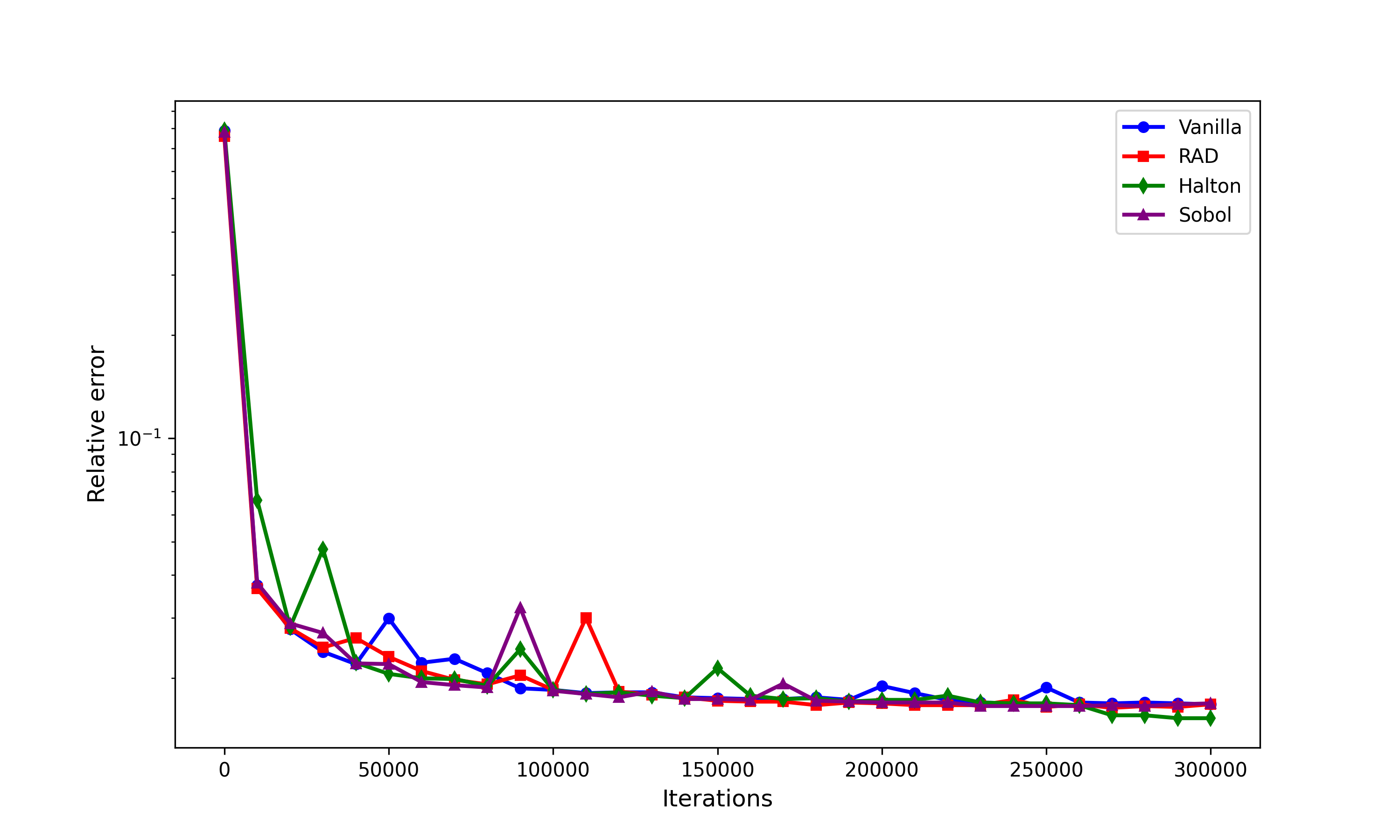}}
  \subfigure[\label{errors_ite_sg}]{\includegraphics[width=0.23\linewidth]{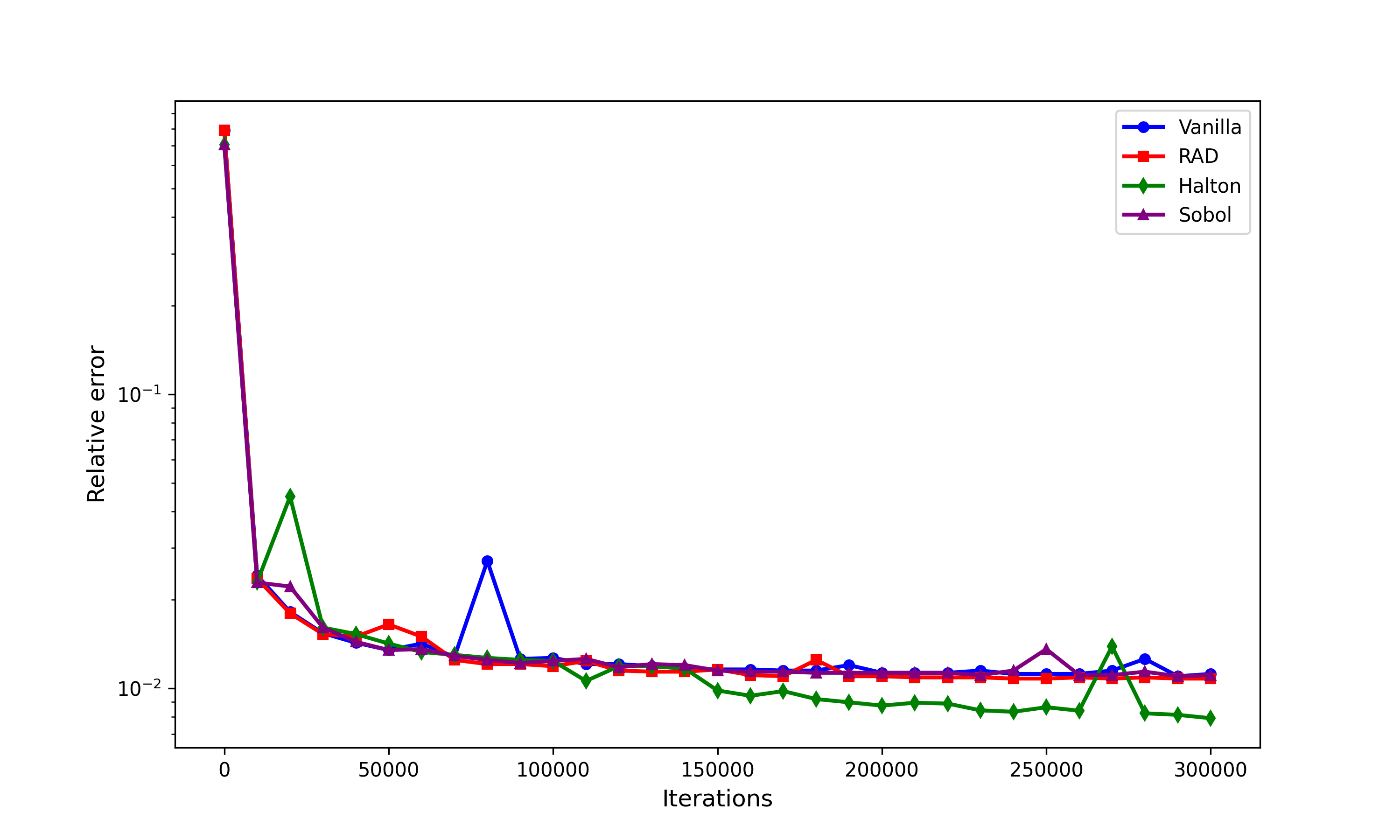}}
  \caption{Sine-Gordon and Allen-Cahn with $d=100$. \subref{error_points_ac} and \subref{error_points_sg} present the relationship between the relative error and $N_{points}$ for Allen-Cahn and Sine-Gordon respectively. \subref{errors_ite_ac} and \subref{error_points_sg} illustrate the dynamic relative error during training for Allen-Cahn and Sine-Gordon respectively.}
  \label{fig:sg_ac_d100}
\end{figure}
\paragraph{Results}
We conducted the experiments with different size of networks to verify the performance of different, the results are shown in \cref{fig:allencahn3,fig:allencahn10} for Allen-Cahn equations with $d=3$ and $d=10$ respectively, \cref{fig:sinegordon3,fig:sinegordon10} for Sine-Gordon equations with $d=3$ and $d=10$ respectively, and \cref{fig:sg_ac_d100} demonstrate the results for both Allen-Cahn equations and Sine-Gordon equations with $d=100$. The details including standard variance are provided in \cref{appendix: details of experiments}. 

For the Allen-Cahn equation with $d=3$, the smallest relative error is $2.15\times 10^{-4}\pm 3.78\times 10^{-5}$, achieved by Vanilla with a depth of 10 and the width of 50. In the 10D case of the Allen-Cahn equation, the minimum relative error is $2.07\times 10^{-3}\pm 3.40\times 10^{-5}$, which is obtained by Vanilla with the depth of 10 and the width of 70. Turning to the Sine-Gordon equation in 3D, the smallest relative error is $1.14\times 10^{-4}\pm 6.48\times 10^{-6}$, achieved by Sobol with the depth of 7 and the width of 50. For the 10D Sine-Gordon equation, the minimum relative error is $2.09\times 10^{-3}\pm 4.08\times 10^{-5}$, accomplished by Vanilla when the depth is 7 and the width is 100. Although most of the best results are from the vanilla PINNs, QRPINNs outperforms these adaptive methods and reveals that adaptive methods are more suitable for low-dimensional problems and problems that have locality . On the other hand, for $d=100$, in \cref{fig:sg_ac_d100}, QRPINNs achieves the best results: compared with the best results among Vanilla and RAD, QRPINNs obtain a promotion of 22.6\% for Allen-Cahn equations and 32.8\% for Sine-Gordon equations. Furthermore, we also conducted a series of experiments with different number of input points $N_{points}$, the results in \cref{error_points_sg,error_points_ac} support that a small batch size of machine learning is necessary and can improve the performance. Hence, it is impractical to expect that increasing the number of samples in MC alone can achieve the same performance as QMC.

\subsection{The cost of generating low-discrepancy sequences}\label{section: cost}
In PINNs, when solving high-dimensional PDEs, it is impossible to generate a high-resolution point set by uniform grids due to the exponentially consuming of the memory. Herein, in most framework \cite{shi2024stochastic,cen2024deep}, the input points are directly generated by an uniformly random sampling on the domain $\Omega$, \textit{i.e.} the random sampling mentioned above. And those adaptive sampling methods will sample more points on $\Omega$ to construct the point pool and then select the input points from the point pool randomly or deterministically. However, for RQMC, the point pool is exactly the low-discrepancy sequences. Herein, if the cost of generating those sequences is also intolerable like generating the uniform grids in high-dimensionality, it is impractical to use RQMC in PINNs. 

Fortunately, the cost of RQMC is acceptable, we conducted some experiments to demonstrate the relationship between cost and the number of points $N$ as well as the dimensionality $d$. \cref{fig:cost} shows that for any dimensions, the execution time is linear with the number of points and is negligible compared with the training time of PINNs. The other details including the consuming of the CPU memory can be found in \cref{appendix: cost}.

\begin{figure}[ht]
  \centering 
  \subfigure[\label{cost_3} $d=3$]{\includegraphics[width=0.23\linewidth]{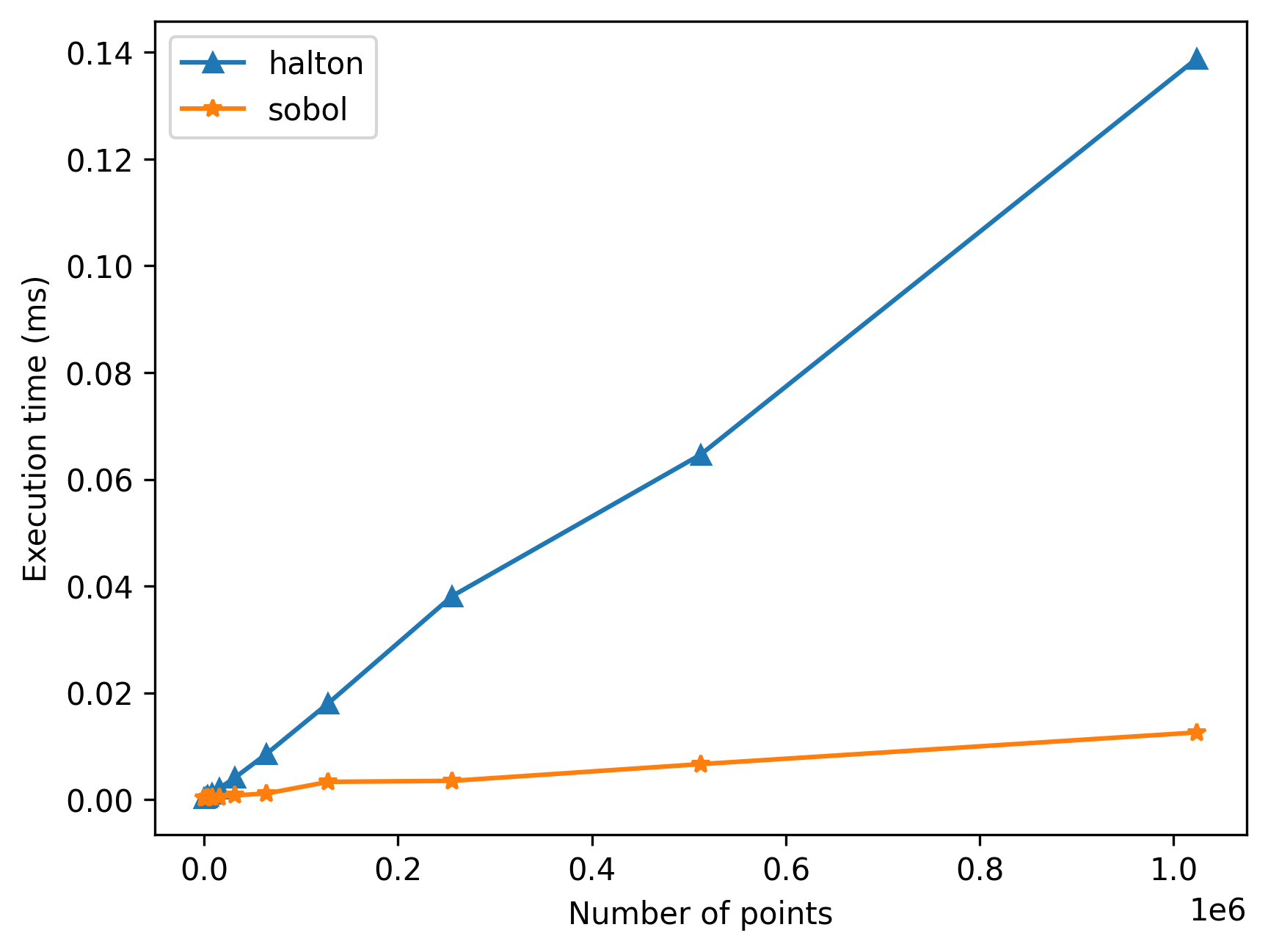}} 
  \subfigure[\label{cost_10} $d=10$]{\includegraphics[width=0.23\linewidth]{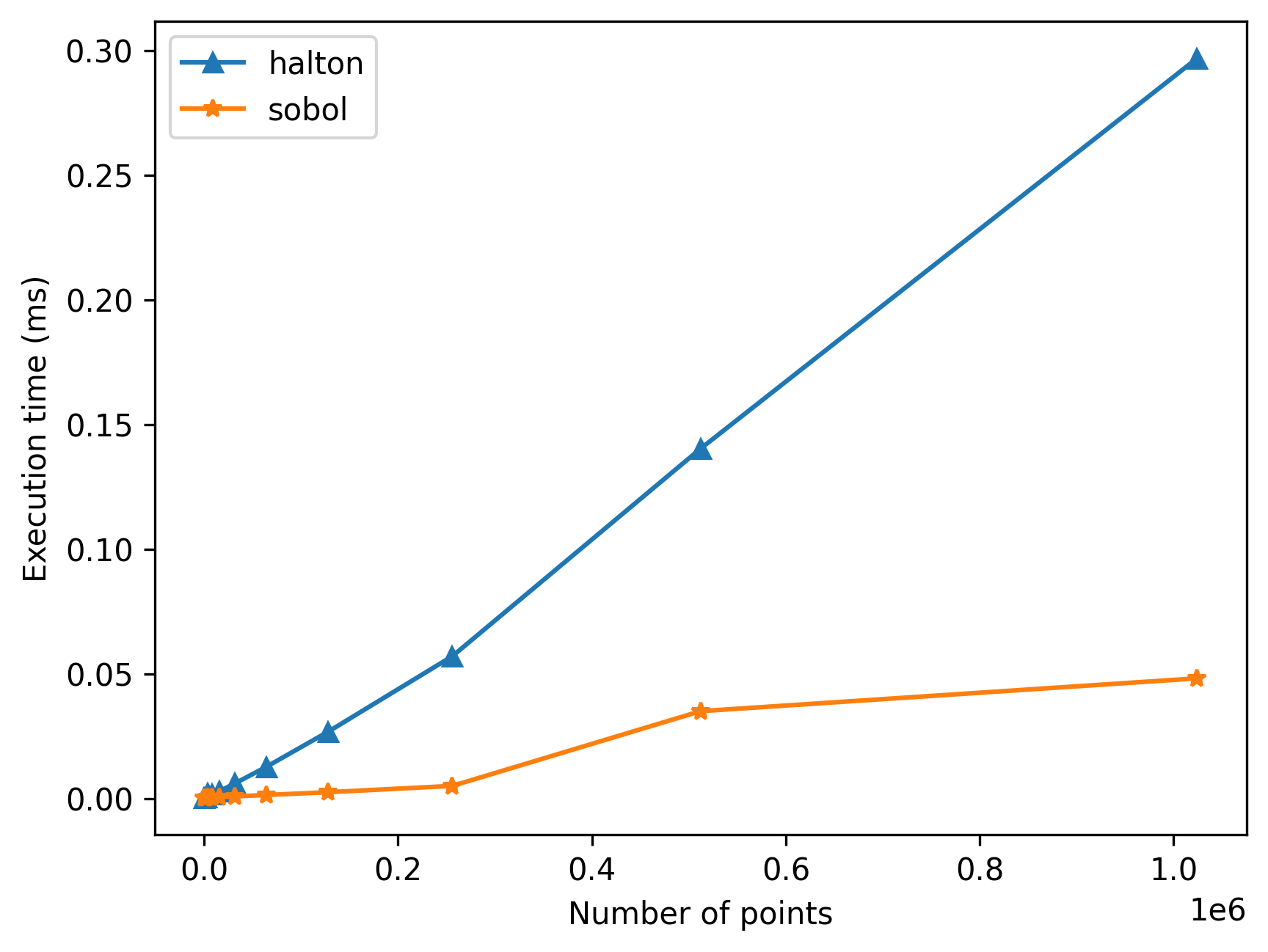}}
  \subfigure[\label{cost_50} $d=50$]{\includegraphics[width=0.23\linewidth]{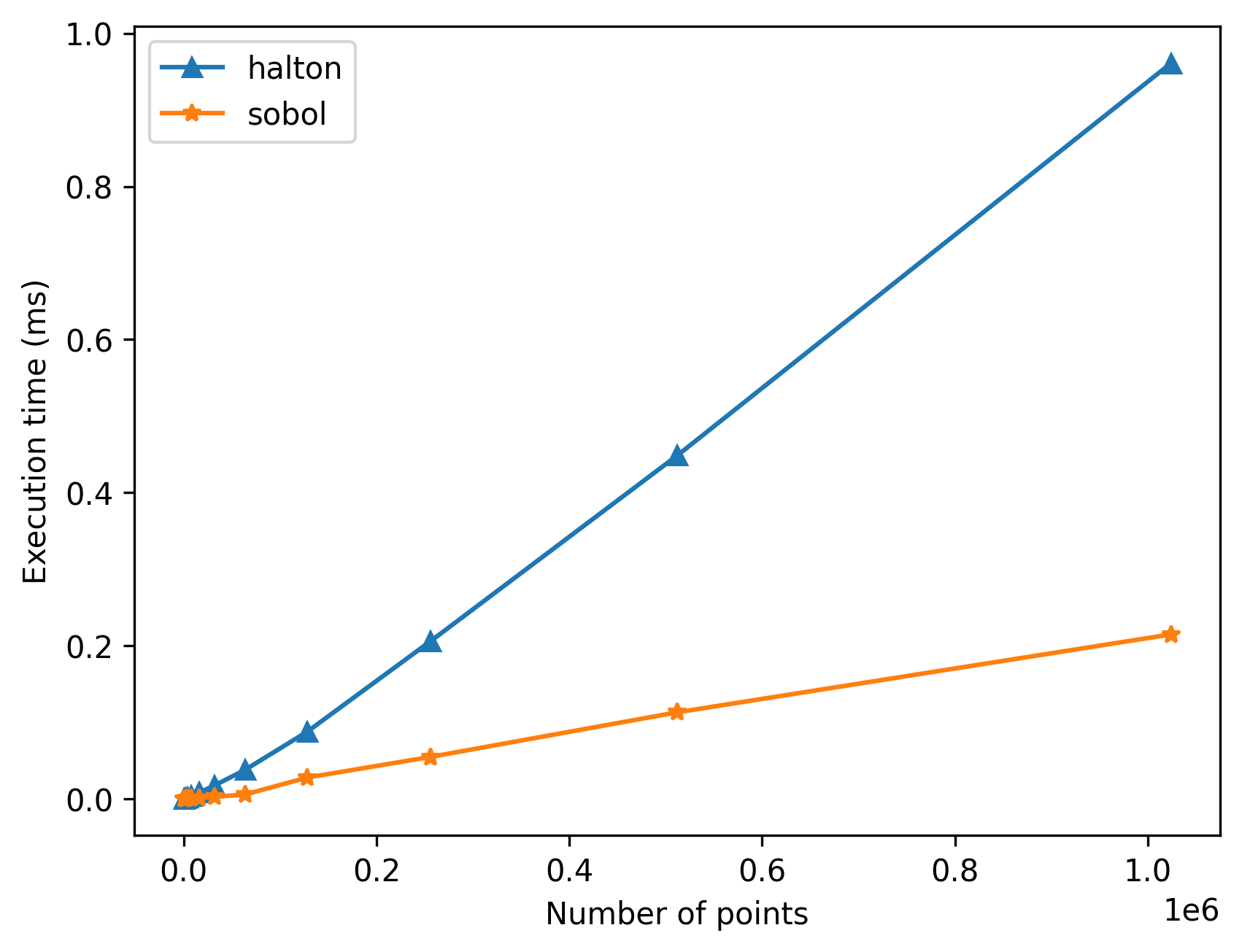}}
  \subfigure[\label{cost_100} $d=100$]{\includegraphics[width=0.23\linewidth]{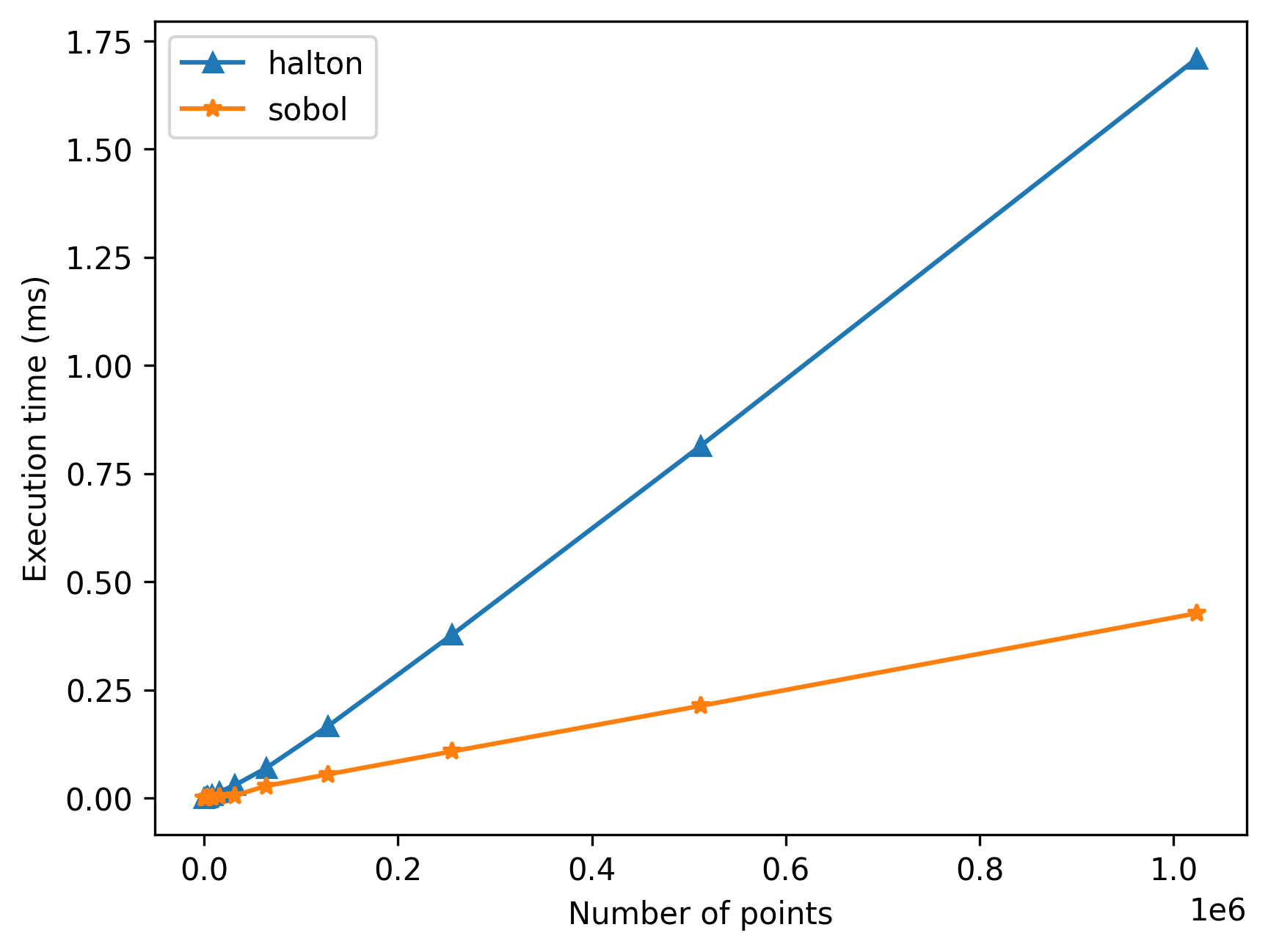}}\\
  \caption{Execution time with different dimensionality and different number of points for Halton sequences and Sobol' sequences.}
  \label{fig:cost}
\end{figure}

\subsection{Scale of the sampling pool}\label{section: sacle}
Although QRPINNs are training with a batch from the low-discrepancy sequence,  \cref{theorem: RQMC} shows that with randomly sampling in a pool of low-discrepancy sequences, the scale of the pool will affect the convergence rate of the estimation. Here we conducted experiments on different $N_{scale}$ to verify the performance and the necessity of batch size. The results, demonstrated in \cref{fig:scale}, show that with suitable $N_{scale}$, the performance of QRPINNs is better than $N_{scale}=1$ \textit{i.e.} training with deterministic low-discrepancy sequences. Furthermore, Halton sequences are more sensitive in $N_{scale}$ than Sobol' sequences. In the experiments of \cref{fig:sg_ac_d100}, we set $N_{scale}=10$ to balance the convergence rate of PINNs and the upper bound of \cref{theorem: RQMC} and training 30 epochs, \textit{i.e.} $s=30$ in \cref{eq: qmc_possibility}.

\begin{figure}[ht]
  \centering 
  \subfigure[\label{scale_d100_ac} Allen-Cahn]{\includegraphics[width=0.23\linewidth]{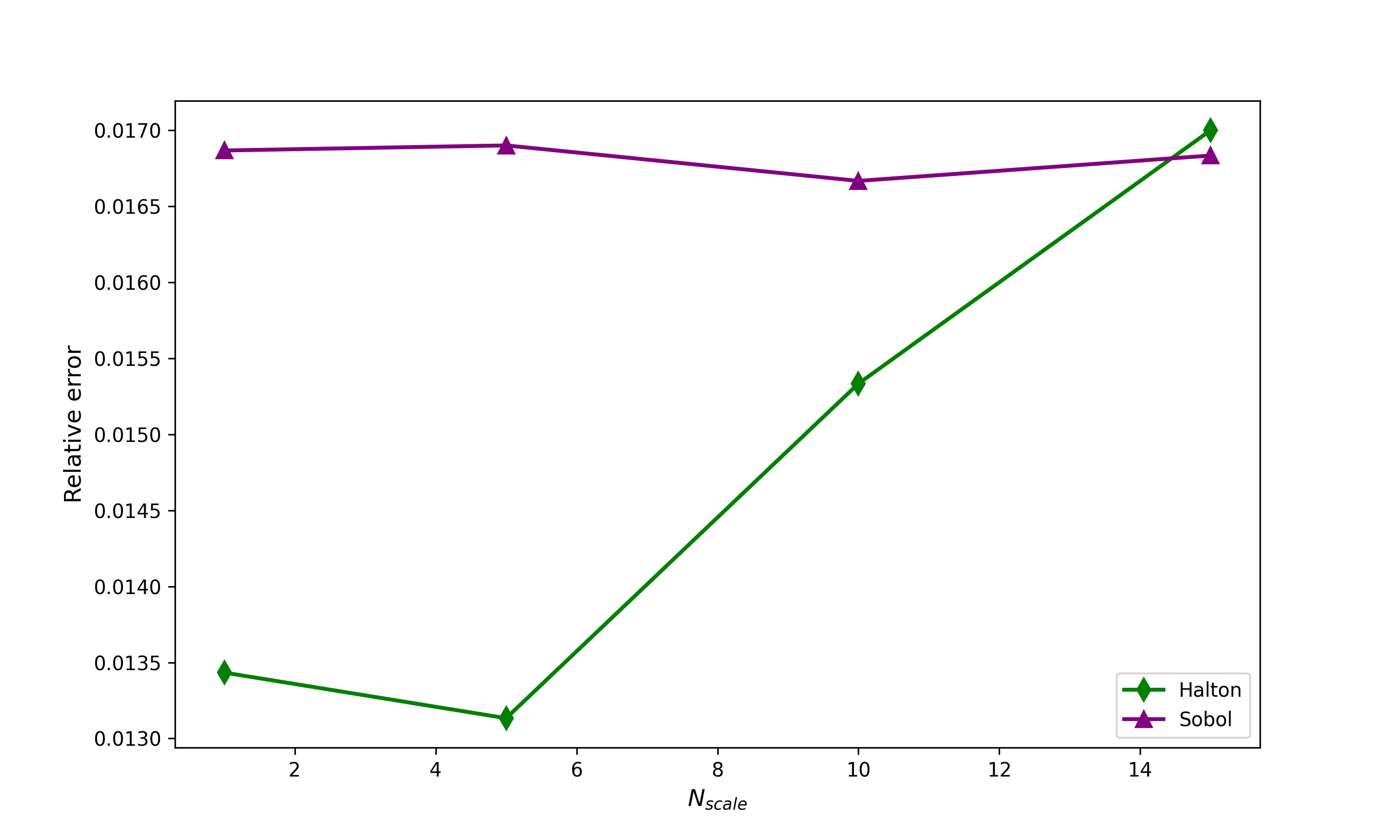}}
  \subfigure[\label{scale_d100_sg} Sine-Gordon]{\includegraphics[width=0.23\linewidth]{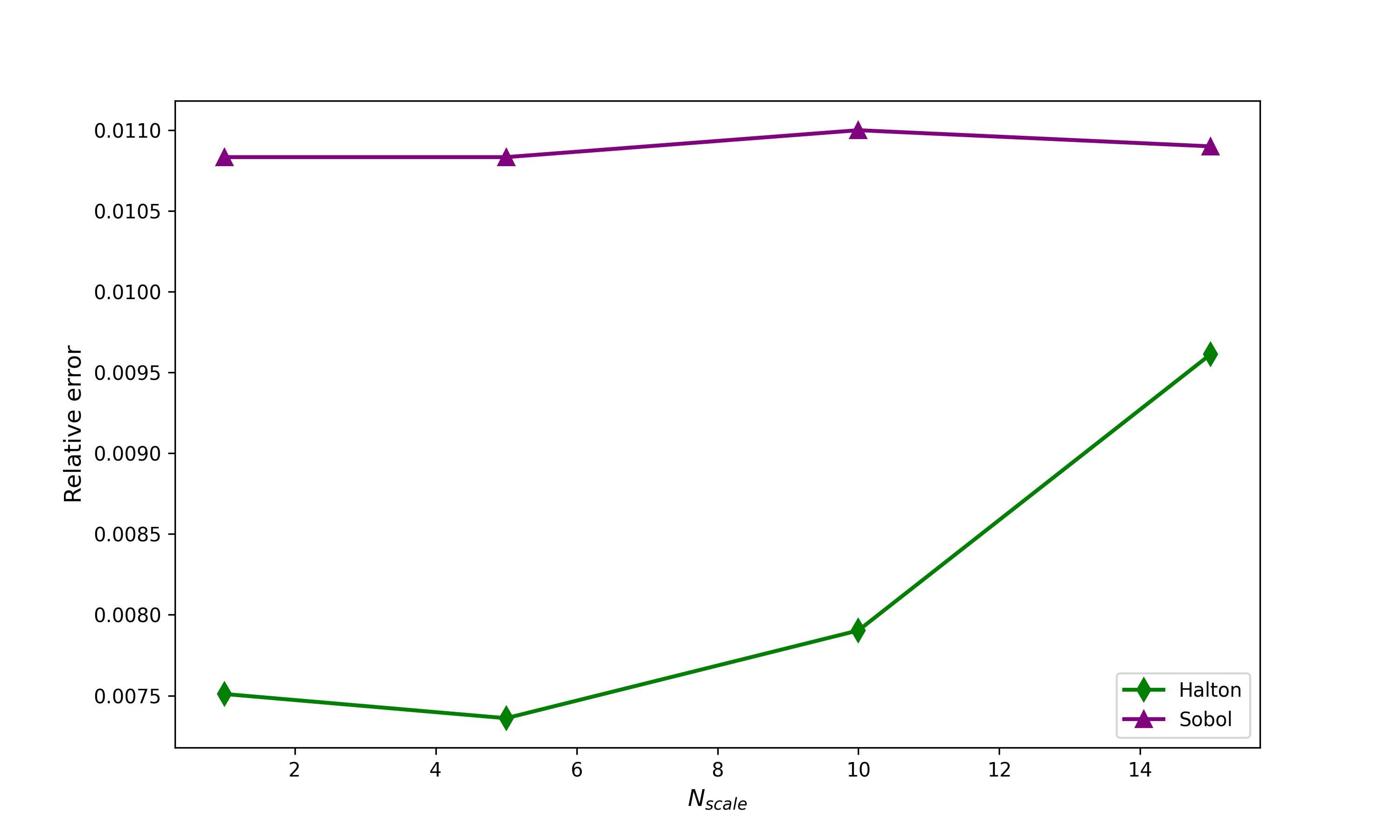}}\\
  \caption{Relationship between relative error and $N_{scale}$ in $d=100$}
  \label{fig:scale}
\end{figure}

\subsection{Ablation study\label{section: ablation}}
As we discussed in \cref{section: qmc_in_pinns}, low-discrepancy points can be the sampling pool for those adaptive sampling methods. Herein, in this section, we demonstrate the performance for the ablation study in Poisson's equations. We choose one sampling method from \{RAD,ACLE\} and one low discrepancy sequence from \{Halton, Sobol\}. We provide the results in \cref{table:ablation} for $d=100$. Overall, replacing the sampling pool by low-discrepancy sequences achieves better results in both $\alpha=0.1$ and $\alpha=1$. Furthermore, compared with \cref{table:poisson}, \cref{table:ablation} reveals that training with the best sampling pool (Halton for $\alpha=0.1$ and Vanilla for $\alpha=1$) may fail to obtain the best results when combining with adaptive sampling methods. For example, when using Halton+RAD in $\alpha=0.1$, the performance is slightly worse than that achieved by using only Halton. The discussion of $d=3,10$ and the ablation study of ACLE is in \cref{appendix: ablation}. 

\begin{table}[ht]
\caption{Ablation study $d=100$ on Poisson's equations}
\label{table:ablation}
\begin{center}
\begin{tabular}{lll}
\multicolumn{1}{c}{\bf $\alpha$ } & \multicolumn{1}{c}{\bf Sampling Pool} & \multicolumn{1}{c}{\bf RAD} \\ \toprule 
 
     \multirow{3}{*}{0.1}    & Vanilla & $3.19e-03\pm 2.16e-05$ \\
                           & Halton  & $3.14e-03\pm 9.43e-06$\\
                           & Sobol   & \cellcolor{blue!25}$3.11e-03\pm 4.71e-06$ \\ \midrule
     \multirow{3}{*}{1}   & Vanilla & $3.13e-02\pm 2.49e-04$ \\
                           & Halton  & \cellcolor{blue!25}$2.36e-02\pm 2.36e-04$ \\
                           & Sobol   & $3.15e-02\pm 8.16e-05$ \\ 
\bottomrule
\end{tabular}
\end{center}
\end{table}

\subsection{Solving higher dimensional PDEs by STDE}\label{section: stde}
Stochastic Taylor Derivative Estimator (STDE) \cite{shi2024stochastic} is proposed to use stochastic Taylor derivative to estimate the derivatives in the high-dimensional space \footnote{The details are provided in \cref{appendix: stde}.}. Using this estimator can solve higher dimensional PDEs more efficiently. To explore the capability of QRPINNs, we implement the Halton sequences and Sobol' sequences in STDE with $d=10^2,10^3,10^4$. Here we choose the benchmark from STDE, including time-dependent semi-linear heat equation \cref{eq: stde heat}, time-dependent Allen-Cahn equation \cref{eq: stde ac}, and time-dependent Sine-Gordon equation \cref{eq: stde sg}. The results in \cref{table: stde} shows that except Sine-Gordon with $d=100$ and some experiments that their relative error is in the scale of $10^{-1}$, QRPINNs obtain the best error. The maximum promotion is 77.5\%\ in Allen-Cahn with $d=10000$ and the minimum promotion is 10.6\% in Allen-Cahn with $d=1000$. The results reveal that QRPINNs are more competitive in higher dimensionality. However, since the cost of generating low-discrepancy sequences increases with dimensionality as shown in \cref{section: cost}, it is not practical to claim that, based on either Halton sequences or Sobol' sequences, QRPINNs are still efficient in high enough dimensionality. 
\begin{table}[htbp]
    \centering
    \caption{Higher dimensional PDEs}\label{table: stde}
    \begin{adjustbox}{width=\columnwidth, center}
    \begin{tabular}{l l cccc}
        Equation & $d$ & \textbf{Vanilla} & \textbf{Halton} & \textbf{Sobol} & \textbf{Promotion}\\
       \toprule
        \multirow{3}{*}{Allen-Cahn} 
        & $10^2$ & $8.12\text{e-}03\pm 4.06\text{e-}03$ & \cellcolor{blue!25}$2.54\text{e-}03\pm 2.43\text{e-}03$ & $3.01\text{e-}03\pm 2.48\text{e-}03$ & 68.7\% \\
        & $10^3$ & $1.10\text{e-}02\pm 1.01\text{e-}02$ & $2.25\text{e-}02\pm 8.92\text{e-}03$ & \cellcolor{blue!25}$9.83\text{e-}03\pm 1.42\text{e-}02$ & 10.6\%\\
        & $10^4$ & $1.55\text{e-}02\pm 1.03\text{e-}02$ & $1.49\text{e-}02\pm 1.37\text{e-}02$ & \cellcolor{blue!25}$3.48\text{e-}03\pm 9.93\text{e-}04$ & 77.5\%\\
        \midrule
        \multirow{3}{*}{Heat} 
        & $10^2$ & $1.45\text{e-}02\pm 1.20\text{e-}02$ & $4.98\text{e-}02\pm 4.62\text{e-}02$ & \cellcolor{blue!25}$1.05\text{e-}02\pm 1.02\text{e-}02$ & 27.6\%\\
        & $10^3$ & $2.16\text{e-}01\pm 7.35\text{e-}02$ & $3.30\text{e-}01\pm 1.64\text{e-}01$ & $2.41\text{e-}01\pm 9.97\text{e-}02$ &\multicolumn{1}{c}{-} \\
        & $10^4$ & $2.70\text{e-}01\pm 1.27\text{e-}01$ & $3.83\text{e-}01\pm 2.35\text{e-}01$ & $3.02\text{e-}01\pm 1.51\text{e-}01$ & \multicolumn{1}{c}{-}\\
        \midrule
        \multirow{3}{*}{Sine-Gordon} 
        & $10^2$ & \cellcolor{blue!25}$2.43\text{e-}02\pm 1.05\text{e-}02$ & $7.87\text{e-}02\pm 1.73\text{e-}02$ & $2.99\text{e-}02\pm 1.90\text{e-}02$ & -23.0\%\\
        & $10^3$ & $1.03\text{e-}01\pm 2.01\text{e-}02$ & $2.04\text{e-}01\pm 9,58\text{e-}03$ & $1.62\text{e-}01\pm 1.26\text{e-}02$ & \multicolumn{1}{c}{-} \\
        & $10^4$ & $1.19\text{e-}01\pm 4.10\text{e-}03$ & $1.17\text{e-}01\pm 8.08\text{e-}02$ & \cellcolor{blue!25}$3.28\text{e-}02\pm 3.50\text{e-}02$ & 72.4\%\\
        \bottomrule
    \end{tabular}
    \end{adjustbox}
\end{table}
\section{Conclusion}\label{Section: conclusion}
In this paper, we propose Quasi-Random Physics-Informed Neural Networks (QRPINNs), a novel framework that leverages low-discrepancy sequences to enhance the performance of Physics-Informed Neural Networks (PINNs), especially in solving high-dimensional PDEs. By replacing random sampling from the high-dimensional domain with random sampling from deterministic low-discrepancy sequences, QRPINNs address the inefficiency of traditional sampling in PINNs, particularly in high-dimensional PDEs. 

Comprehensive experiments on high-dimensional PDEs confirm that QRPINNs outperform both traditional PINNs and competitive adaptive sampling methods. QRPINNs exhibit smaller relative errors, due to the more uniform distribution of low-discrepancy sequences, which can better capture the structure of high-dimensional spaces. The experiments indicate that although MC performs similarly to QMC under a large enough number of sampling points, the requirement of a suitable batch size ensures the necessity of RQMC in machine learning. The experiments also reveal that adaptive sampling methods become invalid when dealing with high-dimensional PDEs. 

\paragraph{Limitations and Future work}
In this paper, we merely implemented two classical low-discrepancy sequences (Halton and Sobol) in QRPINNs and their computation of generating large deterministic pools will become significant in extremely high dimensions. There are more advanced sequences that can improve the convergence rate which we haven't implemented, for example modified Halton \cite{kocis1997computational}, Niederreiter \cite{tezuka2013discrepancy} and FNN \cite{bennett2004filtered}. On the other hand, some algorithms like component-by-component \cite{baldeaux2012efficient} and random lattice rule \cite{goda2025randomized} can construct quasi-random sequences in high dimensions efficiently. Furthermore, the weighted QMC \cite{sloan1998quasi, goda2015fast} may be more efficient since most high-dimensional functions merely have several principle dimensions. Finally, extending QRPINNs to couple with more adaptive strategies remains an important direction for exploration.

\bibliography{iclr2025_conference}

\begin{thebibliography}{58}
\providecommand{\natexlab}[1]{#1}
\providecommand{\url}[1]{\texttt{#1}}
\expandafter\ifx\csname urlstyle\endcsname\relax
  \providecommand{\doi}[1]{doi: #1}\else
  \providecommand{\doi}{doi: \begingroup \urlstyle{rm}\Url}\fi

\bibitem[Asmussen \& Glynn(2007)Asmussen and Glynn]{asmussen2007stochastic}
S{\o}ren Asmussen and Peter~W Glynn.
\newblock \emph{Stochastic simulation: algorithms and analysis}, volume~57.
\newblock Springer, 2007.

\bibitem[Baldeaux et~al.(2012)Baldeaux, Dick, Leobacher, Nuyens, and Pillichshammer]{baldeaux2012efficient}
Jan Baldeaux, Josef Dick, Gunther Leobacher, Dirk Nuyens, and Friedrich Pillichshammer.
\newblock Efficient calculation of the worst-case error and (fast) component-by-component construction of higher order polynomial lattice rules.
\newblock \emph{Numerical Algorithms}, 59\penalty0 (3):\penalty0 403--431, 2012.

\bibitem[Bennett \& Willemain(2004)Bennett and Willemain]{bennett2004filtered}
Mihoko~V Bennett and Thomas~R Willemain.
\newblock The filtered nearest neighbor method for generating low-discrepancy sequences.
\newblock \emph{INFORMS Journal on Computing}, 16\penalty0 (1):\penalty0 68--72, 2004.

\bibitem[Berblinger \& Schlier(1991)Berblinger and Schlier]{berblinger1991monte}
Michael Berblinger and Christoph Schlier.
\newblock Monte carlo integration with quasi-random numbers: some experience.
\newblock \emph{Computer physics communications}, 66\penalty0 (2-3):\penalty0 157--166, 1991.

\bibitem[Bottou(2010)]{bottou2010large}
L{\'e}on Bottou.
\newblock Large-scale machine learning with stochastic gradient descent.
\newblock In \emph{Proceedings of COMPSTAT'2010: 19th International Conference on Computational StatisticsParis France, August 22-27, 2010 Keynote, Invited and Contributed Papers}, pp.\  177--186. Springer, 2010.

\bibitem[Case(2025)]{case2025comparative}
Giacomo Case.
\newblock Comparative study of monte carlo and quasi-monte carlo techniques for enhanced derivative pricing.
\newblock \emph{arXiv preprint arXiv:2502.17731}, 2025.

\bibitem[Cen \& Zou(2024)Cen and Zou]{cen2024deep}
Jianhuan Cen and Qingsong Zou.
\newblock Deep finite volume method for partial differential equations.
\newblock \emph{Journal of Computational Physics}, pp.\  113307, 2024.

\bibitem[Cl{\'e}ment et~al.(2024)Cl{\'e}ment, Doerr, and Paquete]{clement2024heuristic}
Fran{\c{c}}ois Cl{\'e}ment, Carola Doerr, and Lu{\'\i}s Paquete.
\newblock Heuristic approaches to obtain low-discrepancy point sets via subset selection.
\newblock \emph{Journal of Complexity}, 83:\penalty0 101852, 2024.

\bibitem[Constantine(2015)]{constantine2015active}
Paul~G Constantine.
\newblock \emph{Active subspaces: Emerging ideas for dimension reduction in parameter studies}.
\newblock SIAM, 2015.

\bibitem[Doerr \& De~Rainville(2013)Doerr and De~Rainville]{doerr2013constructing}
Carola Doerr and Fran{\c{c}}ois-Michel De~Rainville.
\newblock Constructing low star discrepancy point sets with genetic algorithms.
\newblock In \emph{Proceedings of the 15th annual conference on Genetic and evolutionary computation}, pp.\  789--796, 2013.

\bibitem[Drew \& Homem-de Mello(2006)Drew and Homem-de Mello]{drew2006quas}
Shane~S Drew and Tito Homem-de Mello.
\newblock Quas-monte carlo strategies for stochastic optimization.
\newblock In \emph{Proceedings of the 2006 winter simulation conference}, pp.\  774--782. IEEE, 2006.

\bibitem[El~Korchi \& Ghanou(2019)El~Korchi and Ghanou]{el2019unrestricted}
Anas El~Korchi and Youssef Ghanou.
\newblock Unrestricted random sampling of data batch to improve the efficiency of neural networks.
\newblock In \emph{Proceedings of the New Challenges in Data Sciences: Acts of the Second Conference of the Moroccan Classification Society}, pp.\  1--6, 2019.

\bibitem[Fang(2002)]{fang2002some}
Kai-Tai Fang.
\newblock Some applications of quasi-monte carlo methods in statistics.
\newblock In \emph{Monte Carlo and Quasi-Monte Carlo Methods 2000: Proceedings of a Conference held at Hong Kong Baptist University, Hong Kong SAR, China, November 27--December 1, 2000}, pp.\  10--26. Springer, 2002.

\bibitem[Gao et~al.(2023)Gao, Yan, and Zhou]{gao2023failure}
Zhiwei Gao, Liang Yan, and Tao Zhou.
\newblock Failure-informed adaptive sampling for pinns.
\newblock \emph{SIAM Journal on Scientific Computing}, 45\penalty0 (4):\penalty0 A1971--A1994, 2023.

\bibitem[Glasserman(2004)]{glasserman2004monte}
Paul Glasserman.
\newblock \emph{Monte Carlo methods in financial engineering}, volume~53.
\newblock Springer, 2004.

\bibitem[Goda(2015)]{goda2015fast}
Takashi Goda.
\newblock Fast construction of higher order digital nets for numerical integration in weighted sobolev spaces.
\newblock \emph{Numerical Algorithms}, 69\penalty0 (2):\penalty0 357--396, 2015.

\bibitem[Goda(2025)]{goda2025randomized}
Takashi Goda.
\newblock A randomized lattice rule without component-by-component construction.
\newblock \emph{Mathematics of Computation}, 2025.

\bibitem[Goncu(2009)]{goncu2009monte}
Ahmet Goncu.
\newblock \emph{Monte Carlo and quasi-Monte Carlo methods in financial derivative pricing}.
\newblock PhD thesis, The Florida State University, 2009.

\bibitem[Graner \& Glazier(1992)Graner and Glazier]{graner1992simulation}
Fran{\c{c}}ois Graner and James~A Glazier.
\newblock Simulation of biological cell sorting using a two-dimensional extended potts model.
\newblock \emph{Physical review letters}, 69\penalty0 (13):\penalty0 2013, 1992.

\bibitem[Gurov et~al.(2016)Gurov, Karaivanova, and Alexandrov]{gurov2016energy}
Todor Gurov, Aneta Karaivanova, and Vassil Alexandrov.
\newblock Energy study of monte carlo and quasi-monte carlo algorithms for solving integral equations.
\newblock \emph{Procedia Computer Science}, 80:\penalty0 1897--1905, 2016.

\bibitem[Halton(1960)]{halton1960efficiency}
John~H Halton.
\newblock On the efficiency of certain quasi-random sequences of points in evaluating multi-dimensional integrals.
\newblock \emph{Numerische Mathematik}, 2:\penalty0 84--90, 1960.

\bibitem[Hu et~al.(2024)Hu, Shukla, Karniadakis, and Kawaguchi]{hu2024tackling}
Zheyuan Hu, Khemraj Shukla, George~Em Karniadakis, and Kenji Kawaguchi.
\newblock Tackling the curse of dimensionality with physics-informed neural networks.
\newblock \emph{Neural Networks}, 176:\penalty0 106369, 2024.

\bibitem[Hung(2024)]{hung2024review}
Ying-Chao Hung.
\newblock A review of monte carlo and quasi-monte carlo sampling techniques.
\newblock \emph{Wiley Interdisciplinary Reviews: Computational Statistics}, 16\penalty0 (1):\penalty0 e1637, 2024.

\bibitem[Jin et~al.(2021)Jin, Cai, Li, and Karniadakis]{jin2021nsfnets}
Xiaowei Jin, Shengze Cai, Hui Li, and George~Em Karniadakis.
\newblock {NSFnets (Navier-Stokes flow nets)}: Physics-informed neural networks for the incompressible {Navier}-{Stokes} equations.
\newblock \emph{Journal of Computational Physics}, 426:\penalty0 109951, 2021.

\bibitem[Kalos \& Whitlock(2009)Kalos and Whitlock]{kalos2009monte}
Malvin~H Kalos and Paula~A Whitlock.
\newblock \emph{Monte carlo methods}.
\newblock John Wiley \& Sons, 2009.

\bibitem[Kleiss \& Lazopoulos(2006)Kleiss and Lazopoulos]{kleiss2006error}
Ronald Kleiss and Achilleas Lazopoulos.
\newblock Error in monte carlo, quasi-error in quasi-monte carlo.
\newblock \emph{Computer physics communications}, 175\penalty0 (2):\penalty0 93--115, 2006.

\bibitem[Kocis \& Whiten(1997)Kocis and Whiten]{kocis1997computational}
Ladislav Kocis and William~J Whiten.
\newblock Computational investigations of low-discrepancy sequences.
\newblock \emph{ACM Transactions on Mathematical Software (TOMS)}, 23\penalty0 (2):\penalty0 266--294, 1997.

\bibitem[Kuipers \& Niederreiter(2012)Kuipers and Niederreiter]{kuipers2012uniform}
Lauwerens Kuipers and Harald Niederreiter.
\newblock \emph{Uniform distribution of sequences}.
\newblock Courier Corporation, 2012.

\bibitem[Lau et~al.(2024)Lau, Hemachandra, Ng, and Low]{lau2024pinnacle}
Gregory Kang~Ruey Lau, Apivich Hemachandra, See-Kiong Ng, and Bryan Kian~Hsiang Low.
\newblock Pinnacle: Pinn adaptive collocation and experimental points selection.
\newblock \emph{arXiv preprint arXiv:2404.07662}, 2024.

\bibitem[L’Ecuyer(2009)]{l2009quasi}
Pierre L’Ecuyer.
\newblock Quasi-monte carlo methods with applications in finance.
\newblock \emph{Finance and Stochastics}, 13:\penalty0 307--349, 2009.

\bibitem[L’Ecuyer(2016)]{l2016randomized}
Pierre L’Ecuyer.
\newblock Randomized quasi-monte carlo: An introduction for practitioners.
\newblock In \emph{International Conference on Monte Carlo and Quasi-Monte Carlo Methods in Scientific Computing}, pp.\  29--52. Springer, 2016.

\bibitem[Morokoff \& Caflisch(1995)Morokoff and Caflisch]{morokoff1995quasi}
William~J Morokoff and Russel~E Caflisch.
\newblock Quasi-monte carlo integration.
\newblock \emph{Journal of computational physics}, 122\penalty0 (2):\penalty0 218--230, 1995.

\bibitem[Moss \& Bird(2005)Moss and Bird]{moss2005direct}
James~N Moss and Graeme~A Bird.
\newblock Direct simulation monte carlo simulations of hypersonic flows with shock interactions.
\newblock \emph{AIAA journal}, 43\penalty0 (12):\penalty0 2565--2573, 2005.

\bibitem[Nabian et~al.(2021)Nabian, Gladstone, and Meidani]{nabian2021efficient}
Mohammad~Amin Nabian, Rini~Jasmine Gladstone, and Hadi Meidani.
\newblock Efficient training of physics-informed neural networks via importance sampling.
\newblock \emph{Computer-Aided Civil and Infrastructure Engineering}, 36\penalty0 (8):\penalty0 962--977, 2021.

\bibitem[Niederreiter(1992)]{niederreiter1992random}
Harald Niederreiter.
\newblock \emph{Random number generation and quasi-Monte Carlo methods}.
\newblock SIAM, 1992.

\bibitem[{\"O}zi{\c{s}}ik et~al.(2017){\"O}zi{\c{s}}ik, Orlande, Cola{\c{c}}o, and Cotta]{ozicsik2017finite}
M~Necati {\"O}zi{\c{s}}ik, Helcio~RB Orlande, Marcelo~J Cola{\c{c}}o, and Renato~M Cotta.
\newblock \emph{Finite difference methods in heat transfer}.
\newblock CRC press, 2017.

\bibitem[Paskov \& Traub(1995)Paskov and Traub]{Paskov1995}
S.~H. Paskov and J.~Traub.
\newblock Faster evaluation of financial derivatives.
\newblock \emph{J. Portfolio Management}, 22:\penalty0 113--120, 1995.

\bibitem[Patankar(2018)]{patankar2018numerical}
Suhas Patankar.
\newblock \emph{Numerical heat transfer and fluid flow}.
\newblock CRC press, 2018.

\bibitem[Peng et~al.(2022)Peng, Zhou, Zhang, Yao, and Liu]{peng2022rang}
Wei Peng, Weien Zhou, Xiaoya Zhang, Wen Yao, and Zheliang Liu.
\newblock Rang: A residual-based adaptive node generation method for physics-informed neural networks.
\newblock \emph{arXiv preprint arXiv:2205.01051}, 2022.

\bibitem[Quarteroni et~al.(2006)Quarteroni, Sacco, and Saleri]{quarteroni2006numerical}
Alfio Quarteroni, Riccardo Sacco, and Fausto Saleri.
\newblock \emph{Numerical mathematics}, volume~37.
\newblock Springer Science \& Business Media, 2006.

\bibitem[Raissi et~al.(2019)Raissi, Perdikaris, and Karniadakis]{raissi2019physics}
Maziar Raissi, Paris Perdikaris, and George~E Karniadakis.
\newblock Physics-informed neural networks: A deep learning framework for solving forward and inverse problems involving nonlinear partial differential equations.
\newblock \emph{Journal of Computational physics}, 378:\penalty0 686--707, 2019.

\bibitem[Reddy et~al.(2022)Reddy, Anand, and Roy]{reddy2022finite}
Junuthula~Narasimha Reddy, NK~Anand, and Pratanu Roy.
\newblock \emph{Finite element and finite volume methods for heat transfer and fluid dynamics}.
\newblock Cambridge University Press, 2022.

\bibitem[Saha et~al.(2014)Saha, Moorthi, Wu, Wang, Nadiga, Tripp, Behringer, Hou, Chuang, Iredell, et~al.]{saha2014ncep}
Suranjana Saha, Shrinivas Moorthi, Xingren Wu, Jiande Wang, Sudhir Nadiga, Patrick Tripp, David Behringer, Yu-Tai Hou, Hui-ya Chuang, Mark Iredell, et~al.
\newblock The ncep climate forecast system version 2.
\newblock \emph{Journal of climate}, 27\penalty0 (6):\penalty0 2185--2208, 2014.

\bibitem[Shi et~al.(2024)Shi, Hu, Lin, and Kawaguchi]{shi2024stochastic}
Zekun Shi, Zheyuan Hu, Min Lin, and Kenji Kawaguchi.
\newblock Stochastic taylor derivative estimator: Efficient amortization for arbitrary differential operators.
\newblock \emph{arXiv preprint arXiv:2412.00088}, 2024.

\bibitem[Sloan \& Wo{\'z}niakowski(1998)Sloan and Wo{\'z}niakowski]{sloan1998quasi}
Ian~H Sloan and Henryk Wo{\'z}niakowski.
\newblock When are quasi-monte carlo algorithms efficient for high dimensional integrals?
\newblock \emph{Journal of Complexity}, 14\penalty0 (1):\penalty0 1--33, 1998.

\bibitem[Sobol'(1967)]{sobol1967distribution}
Ilya~M Sobol'.
\newblock The distribution of points in a cube and the approximate evaluation of integrals.
\newblock \emph{USSR Computational mathematics and mathematical physics}, 7:\penalty0 86--112, 1967.

\bibitem[Steinerberger(2019)]{steinerberger2019nonlocal}
Stefan Steinerberger.
\newblock A nonlocal functional promoting low-discrepancy point sets.
\newblock \emph{Journal of Complexity}, 54:\penalty0 101410, 2019.

\bibitem[Tang et~al.(2023)Tang, Wan, and Yang]{tang2023pinns}
Kejun Tang, Xiaoliang Wan, and Chao Yang.
\newblock {DAS-PINNs}: A deep adaptive sampling method for solving high-dimensional partial differential equations.
\newblock \emph{Journal of Computational Physics}, 476:\penalty0 111868, 2023.

\bibitem[Tezuka(2013)]{tezuka2013discrepancy}
Shu Tezuka.
\newblock On the discrepancy of generalized niederreiter sequences.
\newblock \emph{Journal of Complexity}, 29\penalty0 (3-4):\penalty0 240--247, 2013.

\bibitem[Thomas(2013)]{thomas2013numerical}
James~William Thomas.
\newblock \emph{Numerical partial differential equations: finite difference methods}, volume~22.
\newblock Springer Science \& Business Media, 2013.

\bibitem[Trefethen(2017)]{trefethen2017cubature}
Lloyd~N Trefethen.
\newblock Cubature, approximation, and isotropy in the hypercube.
\newblock \emph{SIAM Review}, 59\penalty0 (3):\penalty0 469--491, 2017.

\bibitem[Virtanen et~al.(2020)Virtanen, Gommers, Oliphant, Haberland, Reddy, Cournapeau, Burovski, Peterson, Weckesser, Bright, {van der Walt}, Brett, Wilson, Millman, Mayorov, Nelson, Jones, Kern, Larson, Carey, Polat, Feng, Moore, {VanderPlas}, Laxalde, Perktold, Cimrman, Henriksen, Quintero, Harris, Archibald, Ribeiro, Pedregosa, {van Mulbregt}, and {SciPy 1.0 Contributors}]{2020SciPy-NMeth}
Pauli Virtanen, Ralf Gommers, Travis~E. Oliphant, Matt Haberland, Tyler Reddy, David Cournapeau, Evgeni Burovski, Pearu Peterson, Warren Weckesser, Jonathan Bright, St{\'e}fan~J. {van der Walt}, Matthew Brett, Joshua Wilson, K.~Jarrod Millman, Nikolay Mayorov, Andrew R.~J. Nelson, Eric Jones, Robert Kern, Eric Larson, C~J Carey, {\.I}lhan Polat, Yu~Feng, Eric~W. Moore, Jake {VanderPlas}, Denis Laxalde, Josef Perktold, Robert Cimrman, Ian Henriksen, E.~A. Quintero, Charles~R. Harris, Anne~M. Archibald, Ant{\^o}nio~H. Ribeiro, Fabian Pedregosa, Paul {van Mulbregt}, and {SciPy 1.0 Contributors}.
\newblock {{SciPy} 1.0: Fundamental Algorithms for Scientific Computing in Python}.
\newblock \emph{Nature Methods}, 17:\penalty0 261--272, 2020.

\bibitem[Wang et~al.(2022)Wang, Yu, and Perdikaris]{wang2022and}
Sifan Wang, Xinling Yu, and Paris Perdikaris.
\newblock When and why {PINNs} fail to train: A neural tangent kernel perspective.
\newblock \emph{Journal of Computational Physics}, 449:\penalty0 110768, 2022.

\bibitem[Williams \& Seeger(2000)Williams and Seeger]{williams2000using}
Christopher Williams and Matthias Seeger.
\newblock Using the nystr{\"o}m method to speed up kernel machines.
\newblock \emph{Advances in neural information processing systems}, 13, 2000.

\bibitem[Wojtowytsch \& Weinan(2020)Wojtowytsch and Weinan]{wojtowytsch2020can}
Stephan Wojtowytsch and E~Weinan.
\newblock Can shallow neural networks beat the curse of dimensionality? a mean field training perspective.
\newblock \emph{IEEE Transactions on Artificial Intelligence}, 1\penalty0 (2):\penalty0 121--129, 2020.

\bibitem[Wu et~al.(2023)Wu, Zhu, Tan, Kartha, and Lu]{wu2023comprehensive}
Chenxi Wu, Min Zhu, Qinyang Tan, Yadhu Kartha, and Lu~Lu.
\newblock A comprehensive study of non-adaptive and residual-based adaptive sampling for physics-informed neural networks.
\newblock \emph{Computer Methods in Applied Mechanics and Engineering}, 403:\penalty0 115671, 2023.

\bibitem[Wu et~al.(2024)Wu, Luo, Ma, Wang, and Long]{wu2024ropinn}
Haixu Wu, Huakun Luo, Yuezhou Ma, Jianmin Wang, and Mingsheng Long.
\newblock Ropinn: Region optimized physics-informed neural networks.
\newblock \emph{arXiv preprint arXiv:2405.14369}, 2024.

\bibitem[Zienkiewicz et~al.(2005)Zienkiewicz, Taylor, and Zhu]{zienkiewicz2005finite}
Olgierd~Cecil Zienkiewicz, Robert~Leroy Taylor, and Jian~Z Zhu.
\newblock \emph{The finite element method: its basis and fundamentals}.
\newblock Elsevier, 2005.

\end{thebibliography}
\bibliographystyle{iclr2025_conference}

\appendix

\crefname{section}{appendix}{appendices}
\Crefname{section}{Appendix}{Appendices}

\section{Sequences for quasi-Monte Carlo method}\label{appendix: Sequences}
\subsection{Halton sequence}
Firstly,  let's introduce the fractional number representation:

\begin{definition}
    Given a base $b$, a fractional number $0.d_1 d_2 d_3 \ldots b$ is converted to decimal by summing each digit $d_i$ multiplied by $b^{-i}$. For example, if $b=2, d_1=1,d_2=1,d_i=0 \text{ for } i>2 $, then $0.11_2=1 \cdot 2^{-1}+1 \cdot 2^{-2}=0.75$.
\end{definition}
Let $p_1, p_2, \ldots, p_d$ be the first $d$ prime numbers, where $d$ is the dimensionality. The Halton sequence $\boldsymbol{x}_0, \boldsymbol{x}_1, \ldots$ is given by
\begin{equation}
\boldsymbol{x}_i=\left(\phi_{p_1}(i), \phi_{p_2}(i), \ldots, \phi_{p_d}(i)\right), \quad i=0,1, \ldots,    
\end{equation}
where $\phi_b(i):=\sum_{a=1}^{\infty}\frac{i_a}{b^a}$. Explicit expression is as follows,
\begin{equation}
\begin{aligned}
\boldsymbol{x}_0 & =(0,0,0, \ldots, 0) \\
\boldsymbol{x}_1 & =\left(0.1_2, 0.1_3, 0.1_5, \ldots, 0.1_{p_d}\right) \\
\boldsymbol{x}_2 & =\left(0.01_2, 0.2_3, 0.2_5, \ldots, 0.2_{p_d}\right) \\
\boldsymbol{x}_3 & =\left(0.11_2, 0.01_3, 0.3_5, \ldots, 0.3_{p_d}\right) \\
& \vdots
\end{aligned}    
\end{equation}

In our experiments, the prime number starts from $2$.
\subsection{Sobol' sequence}
Suppose the sequence is $\boldsymbol{x}_0, \boldsymbol{x}_1, \ldots$, and $\boldsymbol{t}_i=\left(t_{i,1}, t_{i,2}, \ldots, t_{i,d},\right)$, Sobol' sequence is expressed by:
\begin{equation}
   t_{i, j}=i_0 v_{1, j} \oplus i_1 v_{2, j} \oplus \cdots \oplus i_{r-1} v_{r, j},
\end{equation}
where $\oplus$ is the bitwise XOR operator, $i_r$ is generated by the dyadic expansion of index $i$: $i=i_0+2 i_1+\cdots+2^{r-1} i_{r-1}$, and the direction numbers $v_{k,j}$ is defined by:
\begin{equation}\label{eq:v in sob}
    v_{k,j}:=\frac{m_{k,j}}{2^k},
\end{equation}
where $m_{k,j}$ is generated as follows:

Let $p_1,\cdots,p_d \in \mathbb{Z}_2[x]$ be  distinct primitive polynomials ordered according to their degree, and let
\begin{equation}
    p_j(x)=x^{e_j}+a_{1, j} x^{e_j-1}+a_{2, j} x^{e_j-2}+\cdots+a_{e_j-1, j} x+1, \quad \text { for } 1 \leq j \leq d,
\end{equation}
where $a_{j,k} \in \mathbb{Z}_b$ is the coefficient of the polynomial $p_j$ and $e_j$ is the degree of $p_j$. 

Then we can define $m_{k,j}$ for every $j \in [1,d]$, if $1 \leq k \leq e_j$, choose odd natural numbers $1 \leq m_{1, j}, \ldots, m_{e_j, j}$ such that $m_{k, j}<2^k$, and if $k>e_j$, define $m_{k, j}$ recursively by
\begin{equation}
m_{k, j}=2 a_{1, j} m_{k-1, j} \oplus \cdots \oplus 2^{e_j-1} a_{e_j-1, j} m_{k-e_j+1, j} \oplus 2^{e_j} m_{k-e_j, j} \oplus m_{k-e_j, j}.    
\end{equation}
In conclusion,
\begin{small}
    \begin{equation}\label{eq:m in sob}
    m_{k,j}=\begin{cases}
        \text{odd nature numbers}, & \text { for } k<e_j,\\
         a_{1, j} m_{k-1, j} \oplus \cdots \oplus 2^{e_j-1} a_{e_j-1, j} m_{k-e_j+1, j} \oplus 2^{e_j} m_{k-e_j, j} \oplus m_{k-e_j, j}, & \text { for } k>e_j.
    \end{cases}
\end{equation}
\end{small}

In totally, the whole algorithm for generating Sobol' sequence is provided in Algorithm \ref{algorithm:sob}.
\vspace{0.2cm}
\begin{algorithm}[H]
    \SetAlgoLined
    \KwIn{Dimensionality $d$, number of points $N$, primitive polynomials $p_1, \dots, p_d$.}
    \KwOut{Sobol' sequence $t_0, t_1, \dots, t_{N-1}$ in $[0,1]^d$.}
    
    Compute $m_{k,j}$ by \cref{eq:m in sob} and $v_{k,j}$ by \cref{eq:v in sob} for $j=1$ to $d$ and $k=1$ to $\lceil \log_2 N \rceil$\;
    
    \For{$i = 0,\cdots,N-1$}{
        Convert $i$ to binary: $i = \sum_{r=0}^{\infty} i_r 2^r$\;
        \For{$j = 1,\cdots,d$}{
            $t_{i,j} = \bigoplus_{r=0}^{\infty} i_r v_{r+1,j}$\;
        }
    }
    \Return{$t_0, t_1, \dots, t_{N-1}$.}
    \vspace{0.1cm}
    \caption{Sobol' sequence generation}\label{algorithm:sob}
\end{algorithm}
\section{Proof of the convergence rate of Monte Carlo method}\label{Appendix: MC convergence}
Here we give a brief proof of 1D Monte Carlo method under the uniform sampling distribution.
\begin{proof}

The Monte Carlo method approximates the quadrature of a function $f: [0,1]\rightarrow \mathbb{R}$ by a summation of a randomly sampling set $\{x_i\}_{i=1}^N$, \textit{i.e.}:
\begin{equation}
    I\left(f\right):=\int_0^1f(x)\mathrm{d}x\approx I_{\text{MC}}(f) :=\frac{1}{N}\sum_{i=1}^N f(x_i).
\end{equation}
The error of the Monte Carlo integration is:
\begin{equation}
    E_{\text{MC}}=\left|I\left(f\right)-I_{\text{MC}}(f)\right|.
\end{equation}
However, as the set is sampled randomly, so it is impossible to obtain a deterministic error of Monte Carlo method. Hence, we consider the expectation of the square of $E_{MC}$:
\begin{equation}
\mathbb{E}\left[\left|I\left(f\right)-I_{\text{MC}}(f)\right|^2\right]=\mathbb{E}\left[
I_{\text{MC}}\left(f\right)^2\right]-2 \mathbb{E}\left[I_{\text{MC}}(f)\right] I(f)+I\left(f\right)^2,
\end{equation}
As 
\begin{equation}
\begin{aligned}
\mathbb{E}\left[I_{\text{MC}}(f)\right] & =\int_{[0,1]} \cdots \int_{[0,1]}\left(\frac{1}{N} \sum_{i=1}^{N} f\left(x_i\right)\right) \mathrm{d} x_1 \ldots \mathrm{~d} x_{N} \\
& =\frac{1}{N} \sum_{i=1}^{N} I(f)\\
& =I(f),
\end{aligned}
\end{equation}
and
\begin{equation}
\begin{aligned}
& \mathbb{E}\left[I\left(f\right)^2\right]=\int_{[0,1]} \ldots \int_{[0,1]}\left(\frac{1}{N} \sum_{i=1}^{N} f\left(x_i\right)\right)^2 \mathrm{~d} x_1 \ldots \mathrm{~d} x_N \\
& =\int_{[0,1]} \ldots \int_{[0,1]}\left(\frac{1}{N^2} \sum_{i=1}^{N} \sum_{k=1}^{N} f\left(x_i\right) f\left(x_k\right)\right) \mathrm{d} x_1 \ldots \mathrm{~d} x_N \\
& =\int_{[0,1]} \ldots \int_{[0,1]}\left(\frac{1}{N^2} \sum_{i=1}^{N} f^2\left(x_i\right)+\frac{1}{N^2} \sum_{i=1}^{N} \sum_{\substack{k=1 \\
k \neq i}}^{N} f\left(x_i\right) f\left(x_k\right)\right) \mathrm{d} x_1 \ldots \mathrm{~d} x_N \\
& =\frac{1}{N^2} \sum_{i=1}^{N} \int_{[0,1]} f^2\left(x_i\right) \mathrm{d} x_i+\frac{1}{N^2} \sum_{i=1}^{N} \sum_{\substack{k=1 \\
k \neq i}}^{N} \int_{[0,1]} f\left(x_i\right) \mathrm{d} x_i \int_{[0,1]} f\left(x_k\right) \mathrm{d} x_k \\
& =\frac{1}{N} I\left(f^2\right)+\frac{N-1}{N}I\left(f\right)^2.
\end{aligned}
\end{equation}
Consequently, 
\begin{equation}
\begin{aligned}
    \mathbb{E}\left[\left|I\left(f\right)-I_{\text{MC}}(f)\right|^2\right] & =\frac{1}{N} I\left(f^2\right)+\frac{N-1}{N}I\left(f\right)^2-2I\left(f\right)^2 +I\left(f\right)^2 \\
    & = \frac{I(f^2)-I\left(f\right)^2}{N}.
\end{aligned}
\end{equation}

Hence, we obtain that 
\begin{equation}
    \begin{aligned}
        & \mathbb{E}\left[I_{\text{MC}}(f)\right]=I(f), \\
        & \operatorname{Var}\left[I_{\text{MC}}(f)\right]=\mathbb{E}\left[\left|I\left(f\right)-I_{\text{MC}}(f)\right|^2\right]=\frac{I(f^2)-I\left(f\right)^2}{N}.
    \end{aligned}
\end{equation}
Suppose $\sigma^2(f)=I(f^2)-I\left(f\right)^2$, then by Central Limit Theorem: 
\begin{equation}\label{eq:mc error}
\lim _{N \rightarrow \infty} \mathbb{P}\left(\left|I(f)-I_{\text{MC}}(f)\right| \leq c \frac{\sigma(f)}{\sqrt{N}}\right)=\frac{1}{\sqrt{2 \pi}} \int_{-c}^c \mathrm{e}^{-x^2 / 2} \mathrm{~d} x.
\end{equation}
Hence, instead of a deterministic error bound, we obtain a  ‘probabilistic’ error bound with a convergence rate $\mathcal{O}(N^{-1/2})$
\end{proof}
\section{Proof of the convergence rate of quasi Monte Carlo method}\label{Appendix: QMC convergence}
Here we give a brief proof of 1D quasi Monte Carlo method. 
\begin{proof}
Consider a function $f: [0,1]\rightarrow \mathbb{R}$ that holds:
\begin{equation}
    f(x)=f(1)-\int_0^1f^\prime(y)\mathbbm{1}_{[0,y]}(x)\mathrm{d}y,
\end{equation}
where
\begin{equation}
\mathbbm{1}_{[0, y]}(x)= \begin{cases}1 & \text { if } x \in[0, y], \\ 0 & \text { if } x \notin[0, y].\end{cases}
\end{equation}

The quasi Monte Carlo method also approximates the quadrature by a summation of a sequence set $\{x_i\}_{i=1}^N$, \textit{i.e.}:
\begin{equation}
    \int_0^1f(x)\mathrm{d}x\approx I_{\text{QMC}}(f) :=\frac{1}{N}\sum_{i=1}^N f(x_i).
\end{equation}

Consequently, the error of this approximation is: 
\begin{equation}\label{eq: proof qmc}
\begin{aligned}
      E_{\text{QMC}} & =\left|\int_0^1f(x)\mathrm{d}x-\frac{1}{N}\sum_{i=1}^N f(x_i)\right|,\\
      &= \left|\int_0^1\left(-\int_0^1 \mathbbm{1}_{[0, y]}(x) \mathrm{d} x+\frac{1}{N} \sum_{i=1}^{N} \mathbbm{1}_{[0, y]}\left(x_i\right)\right) f^{\prime}(y) \mathrm{d} y \right|\\
      &=\left|\int_0^1 \Delta_P(y) f^\prime (y)\mathrm{d}y\right|\\
      &\leq \left(\int_0^1\left|\Delta_P(y)\right|^p \mathrm{~d} y\right)^{1 / p}\left(\int_0^1\left|f^{\prime}(y)\right|^q \mathrm{~d} y\right)^{1 / q}\\
      & = \|\Delta p\|_{L_q} \|f^{\prime}\|_{L_q}.
\end{aligned}
\end{equation}
where $\Delta_P(y)$ is the local discrepancy of the set $P=\{x_i\}_{i=1}^N$, and 
\begin{equation}
    \Delta_P(y):=\frac{1}{N} \sum_{i=1}^{N} \mathbbm{1}_{[0, y]}\left(x_i\right)-\int_0^1 \mathbbm{1}_{[0, y]}(x) \mathrm{d} x.
\end{equation}
The last inequality of \cref{eq: proof qmc} held by Hölder's inequality, so $p,q$ satisfy $\frac{1}{p}+\frac{1}{q}=1$.
\end{proof}
Both the term $\|f^{\prime}\|_{L_q}$ and the $\sigma(f)$ in \cref{eq:mc error} are dependent on the function $f$, or more specifically, on the variation of $f$. While the $\|\Delta p\|_{L_q}$ depends on the specific sequence used in QMC.

Additionally, if $p=\infty, q=1$, we have
\begin{equation}
    E_{\text{QMC}}\leq V(f)D_N^*\left(x_1,\cdots x_N\right),
\end{equation}
where $V(f)$ is the bounded variation of $f$ in $[0,1]$.
\section{Proof of Theorem 1}\label{appendix: convergence rate}
Suppose that $f(x) \in \mathcal{H}$ is a smooth function defined on the domain $\Omega$, $L$ is a smooth operator, $J(f)=\int_\Omega L[f]\mathrm{d}x$, $J_N(f)=\frac{1}{N}\sum_{i=1}^N L[f(x_i)]$ where $x_i$ is sampled on $\Omega$. Also, suppose
\begin{equation}
\begin{aligned}
    f_N^* & =\arg\min_f J_N(f), \\
    f^* & =\arg\min_f J(f).
\end{aligned}
\end{equation}
Assume that
\begin{enumerate}
    \item $f^*$ is the unique global minimizer.
    \item The functional $J(f)$ is  strongly convex with a constant $\mu>0$ if $f \in B_r(f^*)$ and $f^*_N \in B_r(f^*)$.
\end{enumerate}
we want to show $\|f_N^*-f^*\| = \mathcal{O}\left(\|J[f]-J_N(f)\|\right)$.
\begin{proof}
As $J(f)$ is strongly convex, 
\begin{equation}
J(f) \geq J\left(f^*\right)+\left\langle\delta J\left(f^*\right), f-f^*\right\rangle+\frac{\mu}{2}\left\|f-f^*\right\|^2, \forall f \in B_r(f^*).
\end{equation}
Since $f^*$ is the unique global minimizer, then $\delta J\left(f^*\right)=0$, 
\begin{equation}
J(f) \geq J\left(f^*\right)+\frac{\mu}{2}\left\|f-f^*\right\|^2.
\end{equation}
Let $f=f_N^*$, we can derive that
\begin{equation}
\frac{\mu}{2}\left\|f_N^*-f^*\right\|^2 \leq J\left(f_N^*\right)-J\left(f^*\right).
\end{equation}
Since $J_N\left(f_N^*\right) \leq J_N\left(f^*\right)$,
\begin{equation}
\begin{aligned}
    J\left(f_N^*\right)-J\left(f^*\right) & = J\left(f_N^*\right)-J_N\left(f_N^*\right)+J_N\left(f_N^*\right)-J_N\left(f^*\right)+J_N\left(f^*\right)-J\left(f^*\right)\\
    &\leq J\left(f_N^*\right)-J_N\left(f_N^*\right)+J_N\left(f^*\right)-J\left(f^*\right)\\
    &\leq \|J\left(f_N^*\right)-J_N\left(f_N^*\right)\|+\|J_N\left(f^*\right)-J\left(f^*\right)\|.
\end{aligned}
\end{equation}
Herein,
\begin{equation}
\frac{\mu}{2}\left\|f_N^*-f^*\right\|^2 \leq \|J\left(f_N^*\right)-J_N\left(f_N^*\right)\|+\|J_N\left(f^*\right)-J\left(f^*\right)\|.
\end{equation}
Since the scale of convergence rate $\mathcal{O}\left(\|J[f]-J_N(f)\|\right)$ is invariant for $f$ including $f^*$ and $f_N^*$. Thus 
\begin{equation}
    \left\|f_N^*-f^*\right\|^2 = \mathcal{O}\left(\|J[f]-J_N(f)\|\right)+\mathcal{O}\left(\|J[f]-J_N(f)\|\right)=\mathcal{O}\left(\|J[f]-J_N(f)\|\right).
\end{equation}

Then, if those assumptions are satisfied for PINNs, the proof is finished. Obviously, $J$ is the $\mathcal{L}_{int}(f)$ in \cref{eq: loss_int} and $J_N$ is the $\mathcal{L}(f)$ in \cref{eq: split pinn loss}. 

For assumption 1. Since the PDE system is well defined, then the solution $u$ is unique and $J(u)=0$ which is the minimal value. Consequently, if $f(x)$ is smooth, then we can claim that $f^*(x)=u(x) \text{ for } \forall x \in \Omega$. Fortunately, in PINNs, the function $f$ is the output of the networks, herein, the assumption of smoothness is reasonable especially the vanilla PINNs \cite{raissi2019physics} embedded by $\tanh$ activation function. Herein, as $u$ is unique,  $f^*$ is unique.

For assumption 2. Since $\mathcal{N}$ and $\mathcal{B}$ are differential operators, $J(f)$ is a smooth functional with respected to $f$. Since the solution $u$ is unique, then $J(f)> 0$ if $f\neq u$. Combining with that $J(f)$ is smooth functional, there exist a constant $r$ such that for any $f$ in the compact ball $B_r(u)=\{f| \|f-u\|\leq r\}$, $\delta^2J(f)>0$. Furthermore, since  $\delta^2J[f]$ is bounded for $f \in B_r(u)$, then there exist a constant number $\mu>0$ such that $\delta^2J(f)\geq\mu, \forall f \in B_r(u)$. Consequently, $J(f)$ is strongly convex for $f \in B_r(u)=B_r(f^*)$.   
\end{proof}
\textbf{Remark}. Although in the above proof, the radius $r$ looks like a small number. In practice, if $\mathcal{N}$ is linear, then for any $f\in \mathcal{H}$, $J(f)$ is strongly convex (\textit{i.e.} $r=\infty$); if $\mathcal{N}$ is nonlinear, then $J(f)$ is still strongly convex for any $f\in\mathcal{H}$ if $\int \delta N^2+\mathcal{N}\cdot \delta^2\mathcal{N}\mathrm{d}x\geq 0$.
\section{Adaptive sampling methods}\label{Appendix: adaptive sampling methods}
\subsection{RAD}
Suppose $p(\boldsymbol{x})$ is the probability density function (PDF) for sampling points. Then residual-based adaptive distribution method (RAD) defines the PDF as 
\begin{equation}\label{eq: rad}
p(\boldsymbol{x}) \propto \frac{\varepsilon_r^k(\theta,\boldsymbol{x})}{\mathbb{E}[\varepsilon_r^k(\theta,\boldsymbol{x})]} + c,
\end{equation}
where $k>0$, $c>0$ are two hyperparameters. The whole algorithm is demonstrated in Algorithm \ref{algorithm:rad}. In our experiments, we inherit the hyperparameters from the original RAD methods: $k=1$, $c$=1. The size of the sampling pool is $50\times N_r$.
\vspace{0.2cm}
\begin{algorithm}[H]
    \SetAlgoLined
    \KwIn{Number of epochs $s$.}
    \KwOut{the output of PINN $u$}
     
     \vspace{0.1cm}
    Generate $\mathcal{P}$ by uniformly sampling from the domain. 
        
    Train the PINN for a certain number of iterations;

    \vspace{0.1cm}
    \For{$i = 2,\cdots,s$}{
        Generate $\mathcal{P}$ by randomly sampling based on the probability distribution function \cref{eq: rad};        
        Train the PINN for a certain number of iterations;
    }
     \vspace{0.1cm}
    \Return the output of PINN.
    \vspace{0.1cm}
    \caption{RAD}\label{algorithm:rad}
\end{algorithm}

\subsection{ACLE}

ACLE is the first algorithm to jointly optimize the selection of all training point types ($\boldsymbol{x}_{i c},\boldsymbol{x}_{b c},\boldsymbol{x}_{r}$ and experimental points) by automatically adjusting the proportion of collocation point types during training by leveraging the neural tangent kernel (NTK) \cite{wang2022and}. Notably, in our experiments, we focus on forward problems and don't have experimental points. 

Suppose $P=\{\boldsymbol{x}_i\}_{i=1}^N$ is the set of input points, then the $\Theta_t(P)=\{\theta_{ij}\}$ is the NTK matrix , its element $\theta_{ij}=\theta_t(\boldsymbol{x}_i, \boldsymbol{x}_j)$ computed as follows:
\begin{equation}\label{eq: acle_theta}
    \theta_t(\boldsymbol{x}_i, \boldsymbol{x}_j) = \nabla_\theta \varepsilon_r(\theta,\boldsymbol{x}_i) \left( \nabla_\theta \varepsilon_r(\theta,\boldsymbol{x}_j) \right)^\top.
\end{equation}
In addition, we define $\Theta_t(\boldsymbol{x},P)=\{\theta_t(\boldsymbol{x}, \boldsymbol{x}_i)\}_{i=1}^N$. And the convergence degree is defined:
\begin{equation}\label{eq: acle_alpha}
    \alpha(\boldsymbol{x};t)=\sum_{i=1}^{\infty} \lambda_{t,i}^{-1} \bigl\langle \varepsilon_r(\theta_{t+1},\boldsymbol{x})-\varepsilon_r(\theta_{t},\boldsymbol{x}), \psi_{t,i} \bigr\rangle_{\mathcal{H}_{\theta_t}}^2,
\end{equation}
where $\lambda_t$ and $\psi_t$ are the corresponding eigenvalue and eigenvector of $\Theta_t$, $\mathcal{H}_{\Theta_t}$ is the reproducing kernel Hilbert space with the kernel $\Theta_t$. The whole algorithm is shown in Algorithm \ref{algorithm:pinnacle}. However, since the computation of \cref{eq: acle_alpha,eq: acle_theta} is pretty complex, ACLE introduces Nystrom approximation \cite{williams2000using} to estimate the NTK matrix $\Theta_t$ and the convergence degree $\alpha$. Although this approximation is efficient, ACLE still requires unaffordable requirements of computation when solving PDEs with $d=100$ in our experiments. 
\vspace{0.2cm}
\begin{algorithm}[H]
\SetAlgoLined
    \KwIn{Number of epochs $s$.}
    \KwOut{the output of PINN $u$}

\BlankLine
\For{$i = 1,\cdots,s$}{
    Randomly sample candidate points $\mathcal{P}_{\text{pool}}$ from the domain.\;
    Compute $\Theta_t$ by \cref{eq: acle_theta}\;
    Select subset $\mathcal{P} \subset \mathcal{P}_{\text{pool}}$ based on the PDF ${\alpha}(\boldsymbol{x};t)$ computed by \cref{eq: acle_alpha}\;
    Train the PINN for a certain number of iterations;
}
\Return the output of PINN.
\caption{ACLE}\label{algorithm:pinnacle}
\end{algorithm}
\section{The errors of experiments}\label{appendix: details of experiments}
Here we provide the details of the relative error for Allen-Cahn equations and Sine-Gordon equations  in \cref{section: exp_high_pde}.
\subsection{Allen-Cahn}\label{Appendix: ac experiments}

For $d=3$, with different width and depth, the results are shown in \cref{table: ac3}. For the vanilla model, the best result is $2.15 \times 10^{-4} \pm 3.78 \times 10^{-5}$ achieved with depth=10 and width=50. The RAD model yields the smallest mean at $3.85 \times 10^{-4} \pm 1.96 \times 10^{-4}$ with depth=10 and width=50. For the ACLE model, the optimal outcome is $5.40 \times 10^{-4} \pm 8.41 \times 10^{-5}$ with depth=10 and width=100. The Halton model's best performance is $2.54 \times 10^{-4} \pm 3.72 \times 10^{-5}$ with depth=10 and width=50. Lastly, the Sobol model achieves the best result of $2.64 \times 10^{-4} \pm 2.11 \times 10^{-4}$ with depth=7 and width=50.

For $d=10$, with different width and depth, the results are shown in \cref{table: ac10}. For the vanilla model, the best result is $2.07 \times 10^{-3} \pm 3.40 \times 10^{-5}$ achieved with depth=10 and width=70. For the RAD model, the optimal outcome is $2.12 \times 10^{-3} \pm 2.89 \times 10^{-4}$ with depth=7 and width=100. The ACLE model's best performance is $2.09 \times 10^{-3} \pm 2.19 \times 10^{-4}$ obtained with depth=7 and width=100. The Halton yields the smallest error at $2.14 \times 10^{-3} \pm 8.73 \times 10^{-5}$ with depth=7 and width=100. Lastly, the Sobol model's best result is $2.20 \times 10^{-3} \pm 6.18 \times 10^{-5}$ with depth=7 and width=70.

For $d=100$, with different input points, the results are shown in \cref{table: ac100}. The results show that the best relative error achieved by Halton for Points $=1000,2000$, by Sobol for Points $=4000,8000$.
\begin{table}[htbp]
    \centering
    \caption{Relative error in Allen-Cahn $d=3$}\label{table: ac3}
    \begin{tabular}{l l ccc}
        \toprule
        \multirow{2}{*}{} & \multirow{2}{*}{\textbf{depth}} & \multicolumn{3}{c}{\textbf{width}} \\
        \cmidrule(lr){3-5}
         &  & \textbf{50} & \textbf{70} & \textbf{100} \\
        \midrule
        \multirow{4}{*}{\textbf{vanilla}} 
        & 4 & $3.85\text{e-}04\pm 1.54\text{e-}04$ & $7.49\text{e-}04\pm 3.69\text{e-}04$ & $4.93\text{e-}04\pm 9.81\text{e-}05$ \\
        & 5 & $3.64\text{e-}04\pm 2.60\text{e-}04$ & $5.21\text{e-}04\pm 1.27\text{e-}04$ & $5.28\text{e-}04\pm 2.31\text{e-}04$ \\
        & 7 & $2.26\text{e-}04\pm 1.74\text{e-}04$ & $3.37\text{e-}04\pm 5.99\text{e-}05$ & $4.94\text{e-}04\pm 4.10\text{e-}05$ \\
        & 10 & \cellcolor{blue!25}$2.15\text{e-}04\pm 3.78\text{e-}05$ & $6.14\text{e-}04\pm 7.74\text{e-}05$ & $6.85\text{e-}04\pm 1.45\text{e-}04$ \\
        \midrule
        \multirow{4}{*}{\textbf{RAD}} 
        & 4 & $5.54\text{e-}04\pm 2.80\text{e-}04$ & $5.54\text{e-}04\pm 1.18\text{e-}04$ & $6.49\text{e-}04\pm 2.37\text{e-}04$ \\
        & 5 & $5.42\text{e-}04\pm 4.46\text{e-}04$ & $5.01\text{e-}04\pm 9.64\text{e-}05$ & $5.92\text{e-}04\pm 1.98\text{e-}04$ \\
        & 7 & $4.51\text{e-}04\pm 2.50\text{e-}04$ & $7.03\text{e-}04\pm 1.99\text{e-}04$ & $5.69\text{e-}04\pm 1.04\text{e-}04$ \\
        & 10 & \cellcolor{blue!25}$3.85\text{e-}04\pm 1.96\text{e-}04$ & $7.52\text{e-}04\pm 2.79\text{e-}04$ & $4.94\text{e-}04\pm 4.10\text{e-}05$ \\
        \midrule
        \multirow{4}{*}{\textbf{ACLE}} 
        & 4 & $4.91\text{e-}04\pm 1.56\text{e-}04$ & $1.05\text{e-}03\pm 1.79\text{e-}04$ & $6.99\text{e-}04\pm 1.74\text{e-}04$ \\
        & 5 & $7.90\text{e-}04\pm 5.16\text{e-}04$ & $8.42\text{e-}04\pm 2.13\text{e-}04$ & $6.24\text{e-}04\pm 1.53\text{e-}04$ \\
        & 7 & $5.05\text{e-}04\pm 1.58\text{e-}04$ & $1.90\text{e-}03\pm 1.39\text{e-}03$ & $1.39\text{e-}03\pm 5.43\text{e-}04$ \\
        & 10 & $5.23\text{e-}03\pm 6.98\text{e-}03$ & $2.05\text{e-}02\pm 1.41\text{e-}02$ & \cellcolor{blue!25}$5.40\text{e-}04\pm 8.41\text{e-}05$ \\
        \midrule
        \multirow{4}{*}{\textbf{Helton}} 
        & 4 & $3.99\text{e-}04\pm 2.03\text{e-}04$ & $8.34\text{e-}04\pm 3.67\text{e-}04$ & $4.77\text{e-}04\pm 2.07\text{e-}04$ \\
        & 5 & $2.69\text{e-}04\pm 1.25\text{e-}04$ & $5.09\text{e-}04\pm 1.94\text{e-}04$ & $3.67\text{e-}04\pm 1.47\text{e-}04$ \\
        & 7 & $3.75\text{e-}04\pm 3.69\text{e-}04$ & $3.91\text{e-}04\pm 1.43\text{e-}04$ & $4.95\text{e-}04\pm 1.30\text{e-}04$ \\
        & 10 & \cellcolor{blue!25}$2.54\text{e-}04\pm 3.72\text{e-}05$ & $7.52\text{e-}04\pm 2.79\text{e-}04$ & $5.40\text{e-}04\pm 8.41\text{e-}05$ \\
        \midrule
        \multirow{4}{*}{\textbf{Sobol}} 
        & 4 & $3.89\text{e-}04\pm 1.89\text{e-}04$ & $6.17\text{e-}04\pm 2.41\text{e-}04$ & $5.18\text{e-}04\pm 5.60\text{e-}05$ \\
        & 5 & $2.93\text{e-}04\pm 1.47\text{e-}04$ & $4.83\text{e-}04\pm 1.22\text{e-}04$ & $3.99\text{e-}04\pm 1.77\text{e-}04$ \\
        & 7 & \cellcolor{blue!25}$2.64\text{e-}04\pm 2.11\text{e-}04$ & $4.08\text{e-}04\pm 1.07\text{e-}04$ & $5.11\text{e-}04\pm 2.02\text{e-}04$ \\
        & 10 & $3.88\text{e-}04\pm 5.65\text{e-}05$ & $5.95\text{e-}04\pm 2.21\text{e-}04$ & $6.98\text{e-}04\pm 1.01\text{e-}04$ \\
        \bottomrule
    \end{tabular}
\end{table}

\begin{table}[htbp]
    \centering
    \caption{Relative error in Allen-Cahn $d=10$}\label{table: ac10}
    \begin{tabular}{l l ccc}
        \toprule
        \multirow{2}{*}{} & \multirow{2}{*}{\textbf{depth}} & \multicolumn{3}{c}{\textbf{width}} \\
        \cmidrule(lr){3-5}
         &  & \textbf{50} & \textbf{70} & \textbf{100} \\
        \midrule
        \multirow{4}{*}{\textbf{vanilla}} 
        & 4 & $4.49\text{e-}03\pm 2.87\text{e-}05$ & $3.42\text{e-}03\pm 1.35\text{e-}04$ & $2.91\text{e-}03\pm 1.38\text{e-}04$ \\
        & 5 & $3.49\text{e-}03\pm 1.17\text{e-}04$ & $2.69\text{e-}03\pm 6.13\text{e-}05$ & $2.39\text{e-}03\pm 7.41\text{e-}05$ \\
        & 7 & $2.89\text{e-}03\pm 9.43\text{e-}06$ & $2.24\text{e-}03\pm 9.09\text{e-}05$ & $2.10\text{e-}03\pm 1.17\text{e-}04$ \\
        & 10 & $2.62\text{e-}03\pm 7.07\text{e-}05$ & \cellcolor{blue!25}$2.07\text{e-}03\pm 3.40\text{e-}05$ & $2.08\text{e-}03\pm 1.40\text{e-}04$ \\
        \midrule
        \multirow{4}{*}{\textbf{RAD}} 
        & 4 & $4.62\text{e-}03\pm 1.45\text{e-}04$ & $3.41\text{e-}03\pm 1.87\text{e-}04$ & $2.98\text{e-}03\pm 2.36\text{e-}05$ \\
        & 5 & $3.49\text{e-}03\pm 5.73\text{e-}05$ & $2.79\text{e-}03\pm 1.37\text{e-}04$ & $2.44\text{e-}03\pm 6.16\text{e-}05$ \\
        & 7 & $2.89\text{e-}03\pm 4.50\text{e-}05$ & $2.33\text{e-}03\pm 6.85\text{e-}05$ & \cellcolor{blue!25}$2.12\text{e-}03\pm 2.89\text{e-}04$ \\
        & 10 & $2.57\text{e-}03\pm 1.43\text{e-}04$ & $2.26\text{e-}03\pm 2.08\text{e-}04$ & $2.60\text{e-}03\pm 2.58\text{e-}04$ \\
        \midrule
        \multirow{4}{*}{\textbf{ACLE}} 
        & 4 &  $4.79\text{e-}03\pm 8.06\text{e-}05$ & $3.55\text{e-}03\pm 7.93\text{e-}05$ & $3.12\text{e-}03\pm 2.52\text{e-}04$ \\
        & 5 & $3.53\text{e-}03\pm 1.44\text{e-}04$ & $2.84\text{e-}03\pm 1.89\text{e-}05$ & $2.35\text{e-}03\pm 1.48\text{e-}04$ \\
        & 7 & $2.90\text{e-}03\pm 1.19\text{e-}04$ & $2.33\text{e-}03\pm 1.74\text{e-}04$ & \cellcolor{blue!25}$2.09\text{e-}03\pm 2.19\text{e-}04$ \\
        & 10 & $2.63\text{e-}03\pm 2.22\text{e-}04$ & $2.22\text{e-}03\pm 4.90\text{e-}05$ & $2.76\text{e-}03\pm 3.52\text{e-}04$ \\
        \midrule
        \multirow{4}{*}{\textbf{Halton}} 
        & 4 & $4.43\text{e-}03\pm 4.32\text{e-}05$ & $3.39\text{e-}03\pm 1.23\text{e-}04$ & $2.90\text{e-}03\pm 1.80\text{e-}04$ \\
        & 5 & $3.57\text{e-}03\pm 1.02\text{e-}04$ & $2.64\text{e-}03\pm 9.43\text{e-}06$ & $2.37\text{e-}03\pm 8.96\text{e-}05$ \\
        & 7 & $2.76\text{e-}03\pm 4.92\text{e-}05$ & $2.22\text{e-}03\pm 5.72\text{e-}05$ & \cellcolor{blue!25}$2.14\text{e-}03\pm 8.73\text{e-}05$ \\
        & 10 & $2.60\text{e-}03\pm 1.73\text{e-}04$ & $2.24\text{e-}03\pm 2.83\text{e-}05$ & $2.55\text{e-}03\pm 9.98\text{e-}05$ \\
        \midrule
        \multirow{4}{*}{\textbf{Sobol}} 
        & 4 & $4.54\text{e-}03\pm 1.93\text{e-}04$ & $3.39\text{e-}03\pm 1.23\text{e-}04$ & $2.90\text{e-}03\pm 1.80\text{e-}04$ \\
        & 5 & $3.57\text{e-}03\pm 1.02\text{e-}04$ & $2.64\text{e-}03\pm 9.43\text{e-}06$ & $2.29\text{e-}03\pm 2.94\text{e-}05$ \\
        & 7 & $2.78\text{e-}03\pm 1.10\text{e-}04$ & \cellcolor{blue!25}$2.20\text{e-}03\pm 6.18\text{e-}05$ & $2.43\text{e-}03\pm 1.89\text{e-}04$ \\
        & 10 & $2.63\text{e-}03\pm 1.60\text{e-}04$ & $2.24\text{e-}03\pm 2.83\text{e-}05$ & $2.32\text{e-}03\pm 1.41\text{e-}04$ \\
        \bottomrule
    \end{tabular}
\end{table}

\begin{table}[htbp]
    \centering
    \caption{Relative error in Allen-Cahn $d=100$}\label{table: ac100}
    \begin{adjustbox}{width=\columnwidth, center}
    \begin{tabular}{l cccc}
        \textbf{Points} & \textbf{Vanilla} & \textbf{RAD} & \textbf{Halton} & \textbf{Sobol} \\
        \toprule
        1000 & $1.69\text{e-}02\pm 5.77\text{e-}05$ & $1.69\text{e-}02\pm 4.04\text{e-}04$ & \cellcolor{blue!25}$1.30\text{e-}02\pm 5.51\text{e-}04$ & $1.71\text{e-}02\pm 1.53\text{e-}04$ \\
        2000 & $1.68\text{e-}02\pm 5.77\text{e-}05$ & $1.68\text{e-}02\pm 3.79\text{e-}04$ & \cellcolor{blue!25}$1.53\text{e-}02\pm 1.57\text{e-}03$ & $1.67\text{e-}02\pm 5.77\text{e-}05$ \\
        4000 & $1.70\text{e-}02\pm 4.04\text{e-}04$ & $1.68\text{e-}02\pm 4.04\text{e-}04$ & $1.70\text{e-}02\pm 2.31\text{e-}04$ & \cellcolor{blue!25}$1.68\text{e-}02\pm 1.15\text{e-}04$ \\
        8000 & $1.70\text{e-}02\pm 1.53\text{e-}04$ & $1.69\text{e-}02\pm 2.89\text{e-}04$ & $1.70\text{e-}02\pm 8.66\text{e-}05$ & \cellcolor{blue!25}$1.68\text{e-}02\pm 1.15\text{e-}04$ \\
        \bottomrule
    \end{tabular}
    \end{adjustbox}
\end{table}
\subsection{Sine-Gordon}\label{Appendix: sg experiments}
For $d=3$, with different width and depth, the results are shown in \cref{table: sg3}. For the vanilla model, the best result is $1.26 \times 10^{-4} \pm 2.39 \times 10^{-5}$ achieved with depth=5 and width=50. The RAD model yields the smallest mean at $1.67 \times 10^{-4} \pm 3.94 \times 10^{-5}$ with depth=5 and width=50. For the ACLE model, the optimal outcome is $2.82 \times 10^{-4} \pm 2.57 \times 10^{-5}$ with depth=5 and width=50. The Halton model's best performance is $1.34 \times 10^{-4} \pm 5.43 \times 10^{-5}$ obtained with depth=7 and width=50. Lastly, the Sobol model achieves the best result of $1.14 \times 10^{-4} \pm 6.48 \times 10^{-6}$ with depth=7 and width=50.

For $d=10$, with different width and depth, the results are shown in \cref{table: sg10}. For the vanilla model, the best result is $2.09 \times 10^{-3} \pm 4.08 \times 10^{-5}$ achieved with depth=7 and width=100. The RAD model yields the smallest mean at $2.43 \times 10^{-3} \pm 2.71 \times 10^{-4}$ with depth=10 and width=70. For the ACLE model, the optimal outcome is $2.41 \times 10^{-3} \pm 1.77 \times 10^{-4}$ with depth=7 and width=70. The Halton model's best performance is $2.20 \times 10^{-3} \pm 1.11 \times 10^{-4}$ obtained with depth=7 and width=100. Lastly, the Sobol model achieves the best result of $2.22 \times 10^{-3} \pm 2.05 \times 10^{-4}$ with depth=5 and width=100.

For $d=100$, with different input points, the results are shown in \cref{table: sg100}. The results show that the best relative error achieved by Halton for Points $=1000,2000, 8000$. by RAD for Points $=4000$. 
\begin{table}[htbp]
    \centering
    \caption{Relative error in Sine-Gordon $d=3$}\label{table: sg3}
    \begin{tabular}{l l ccc}
        \toprule
        \multirow{2}{*}{\textbf{model}} & \multirow{2}{*}{\textbf{depth}} & \multicolumn{3}{c}{\textbf{width}} \\
        \cmidrule(lr){3-5}
         &  & \textbf{50} & \textbf{70} & \textbf{100} \\
        \midrule
        \multirow{4}{*}{\textbf{vanilla}} 
        & 4 & $1.76\text{e-}04\pm 7.52\text{e-}05$ & $3.07\text{e-}04\pm 1.18\text{e-}04$ & $3.25\text{e-}04\pm 1.82\text{e-}04$ \\
        & 5 & \cellcolor{blue!25}$1.26\text{e-}04\pm 2.39\text{e-}05$ & $2.83\text{e-}04\pm 7.39\text{e-}05$ & $2.96\text{e-}04\pm 1.20\text{e-}04$ \\
        & 7 & $2.70\text{e-}04\pm 1.53\text{e-}04$ & $4.16\text{e-}04\pm 1.41\text{e-}04$ & $1.01\text{e-}03\pm 5.99\text{e-}04$ \\
        & 10 & $5.07\text{e-}04\pm 3.98\text{e-}04$ & $8.49\text{e-}04\pm 1.40\text{e-}04$ & $6.37\text{e-}04\pm 2.55\text{e-}04$ \\
        \midrule
        \multirow{4}{*}{\textbf{RAD}} 
        & 4 & $2.24\text{e-}04\pm 9.63\text{e-}05$ & $4.04\text{e-}04\pm 1.55\text{e-}04$ & $4.03\text{e-}04\pm 1.96\text{e-}04$ \\
        & 5 & \cellcolor{blue!25}$1.67\text{e-}04\pm 3.94\text{e-}05$ & $3.67\text{e-}04\pm 1.41\text{e-}04$ & $4.70\text{e-}04\pm 2.34\text{e-}04$ \\
        & 7 & $2.77\text{e-}04\pm 1.57\text{e-}04$ & $4.16\text{e-}04\pm 1.41\text{e-}04$ & $1.01\text{e-}03\pm 5.99\text{e-}04$ \\
        & 10 & $2.48\text{e-}04\pm 1.09\text{e-}04$ & $8.27\text{e-}04\pm 5.62\text{e-}05$ & $9.06\text{e-}04\pm 5.59\text{e-}04$ \\
        \midrule
        \multirow{4}{*}{\textbf{ACLE}} 
        & 4 & $3.46\text{e-}04\pm 7.83\text{e-}05$ & $2.86\text{e-}04\pm 8.38\text{e-}05$ & $4.87\text{e-}04\pm 1.70\text{e-}04$ \\
        & 5 & \cellcolor{blue!25}$2.82\text{e-}04\pm 2.57\text{e-}05$ & $4.03\text{e-}04\pm 1.69\text{e-}04$ & $4.47\text{e-}04\pm 8.44\text{e-}05$ \\
        & 7 & $3.35\text{e-}04\pm 2.22\text{e-}05$ & $1.32\text{e-}03\pm 1.29\text{e-}03$ & $2.04\text{e-}03\pm 6.17\text{e-}04$ \\
        & 10 & $4.55\text{e-}04\pm 2.12\text{e-}04$ & $8.77\text{e-}04\pm 1.86\text{e-}04$ & $1.02\text{e-}03\pm 3.43\text{e-}04$ \\
        \midrule
        \multirow{4}{*}{\textbf{Halton}} 
        & 4 & $2.32\text{e-}04\pm 1.07\text{e-}04$ & $2.61\text{e-}04\pm 7.45\text{e-}05$ & $2.87\text{e-}04\pm 1.47\text{e-}04$ \\
        & 5 & $1.73\text{e-}04\pm 4.52\text{e-}05$ & $3.06\text{e-}04\pm 1.34\text{e-}04$ & $3.61\text{e-}04\pm 1.26\text{e-}04$ \\
        & 7 & \cellcolor{blue!25}$1.34\text{e-}04\pm 5.43\text{e-}05$ & $3.31\text{e-}04\pm 8.20\text{e-}05$ & $8.93\text{e-}04\pm 3.70\text{e-}04$ \\
        & 10 & $2.48\text{e-}04\pm 1.09\text{e-}04$ & $8.27\text{e-}04\pm 5.62\text{e-}05$ & $1.01\text{e-}03\pm 5.99\text{e-}04$ \\
        \midrule
        \multirow{4}{*}{\textbf{Sobol}} 
        & 4 & $1.57\text{e-}04\pm 4.09\text{e-}05$ & $3.29\text{e-}04\pm 1.51\text{e-}04$ & $2.75\text{e-}04\pm 6.22\text{e-}05$ \\
        & 5 & $1.58\text{e-}04\pm 4.12\text{e-}05$ & $3.93\text{e-}04\pm 9.10\text{e-}05$ & $3.06\text{e-}04\pm 1.33\text{e-}04$ \\
        & 7 & \cellcolor{blue!25}$1.14\text{e-}04\pm 6.48\text{e-}06$ & $7.40\text{e-}04\pm 3.53\text{e-}04$ & $7.92\text{e-}04\pm 3.59\text{e-}04$ \\
        & 10 & $3.17\text{e-}04\pm 2.82\text{e-}04$ & $9.69\text{e-}04\pm 6.32\text{e-}04$ & $7.76\text{e-}04\pm 3.31\text{e-}04$ \\
        \bottomrule
    \end{tabular}
\end{table}

\begin{table}[htbp]
    \centering
    \caption{Relative error in Sine-Gordon $d=10$}\label{table: sg10}
    \begin{tabular}{l l ccc}
        \toprule
        \multirow{2}{*}{} & \multirow{2}{*}{\textbf{depth}} & \multicolumn{3}{c}{\textbf{width}} \\
        \cmidrule(lr){3-5}
         &  & \textbf{50} & \textbf{70} & \textbf{100} \\
        \midrule
        \multirow{4}{*}{\textbf{vanilla}} 
        & 4 & $5.34\text{e-}03\pm 3.17\text{e-}04$ & $4.11\text{e-}03\pm 1.17\text{e-}04$ & $3.09\text{e-}03\pm 1.72\text{e-}04$ \\
        & 5 & $4.23\text{e-}03\pm 1.79\text{e-}04$ & $2.97\text{e-}03\pm 2.17\text{e-}04$ & $2.41\text{e-}03\pm 8.83\text{e-}05$ \\
        & 7 & $3.01\text{e-}03\pm 1.20\text{e-}04$ & $2.56\text{e-}03\pm 1.40\text{e-}04$ & \cellcolor{blue!25}$2.09\text{e-}03\pm 4.08\text{e-}05$ \\
        & 10 & $2.86\text{e-}03\pm 2.38\text{e-}04$ & $2.18\text{e-}03\pm 1.16\text{e-}04$ & $3.10\text{e-}03\pm 2.33\text{e-}04$ \\
        \midrule
        \multirow{4}{*}{\textbf{RAD}} 
        & 4 & $5.61\text{e-}03\pm 5.59\text{e-}04$ & $3.93\text{e-}03\pm 3.00\text{e-}04$ & $3.19\text{e-}03\pm 3.67\text{e-}04$ \\
        & 5 & $4.52\text{e-}03\pm 3.23\text{e-}04$ & $3.03\text{e-}03\pm 1.06\text{e-}04$ & $2.63\text{e-}03\pm 2.12\text{e-}04$ \\
        & 7 & $3.49\text{e-}03\pm 5.70\text{e-}04$ & $2.81\text{e-}03\pm 2.01\text{e-}04$ & $3.10\text{e-}03\pm 9.47\text{e-}04$ \\
        & 10 & $2.70\text{e-}03\pm 1.57\text{e-}04$ & \cellcolor{blue!25}$2.43\text{e-}03\pm 2.71\text{e-}04$ & $3.89\text{e-}03\pm 5.81\text{e-}04$ \\
        \midrule
        \multirow{4}{*}{\textbf{ACLE}} 
        & 4 & $5.51\text{e-}03\pm 4.54\text{e-}04$ & $4.09\text{e-}03\pm 1.45\text{e-}04$ & $3.36\text{e-}03\pm 2.35\text{e-}04$ \\
        & 5 & $4.35\text{e-}03\pm 4.99\text{e-}04$ & $3.15\text{e-}03\pm 3.77\text{e-}05$ & $2.53\text{e-}03\pm 2.12\text{e-}04$ \\
        & 7 & $3.28\text{e-}03\pm 1.58\text{e-}04$ & \cellcolor{blue!25}$2.41\text{e-}03\pm 1.77\text{e-}04$ & $2.64\text{e-}03\pm 4.20\text{e-}04$ \\
        & 10 & $3.13\text{e-}03\pm 3.06\text{e-}04$ & $2.74\text{e-}03\pm 5.31\text{e-}04$ & $2.84\text{e-}03\pm 3.50\text{e-}04$ \\
        \midrule
        \multirow{4}{*}{\textbf{Halton}} 
        & 4 & $5.49\text{e-}03\pm 1.79\text{e-}04$ & $4.02\text{e-}03\pm 7.32\text{e-}05$ & $3.21\text{e-}03\pm 1.45\text{e-}04$ \\
        & 5 & $4.55\text{e-}03\pm 2.67\text{e-}04$ & $2.99\text{e-}03\pm 2.26\text{e-}04$ & $2.72\text{e-}03\pm 4.64\text{e-}05$ \\
        & 7 & $3.07\text{e-}03\pm 6.94\text{e-}05$ & $2.48\text{e-}03\pm 2.13\text{e-}04$ & \cellcolor{blue!25}$2.20\text{e-}03\pm 1.11\text{e-}04$ \\
        & 10 & $2.86\text{e-}03\pm 1.74\text{e-}04$ & $2.49\text{e-}03\pm 3.35\text{e-}04$ & $3.08\text{e-}03\pm 4.42\text{e-}04$ \\
        \midrule
        \multirow{4}{*}{\textbf{Sobol}} 
        & 4 & $5.40\text{e-}03\pm 2.24\text{e-}04$ & $4.00\text{e-}03\pm 6.60\text{e-}05$ & $3.01\text{e-}03\pm 1.05\text{e-}04$ \\
        & 5 & $4.27\text{e-}03\pm 9.80\text{e-}05$ & $2.93\text{e-}03\pm 8.16\text{e-}05$ & $2.55\text{e-}03\pm 4.92\text{e-}05$ \\
        & 7 & $3.17\text{e-}03\pm 1.21\text{e-}04$ & $2.59\text{e-}03\pm 1.03\text{e-}04$ & $2.53\text{e-}03\pm 2.28\text{e-}04$ \\
        & 10 & $2.82\text{e-}03\pm 1.07\text{e-}04$ & \cellcolor{blue!25}$2.22\text{e-}03\pm 2.05\text{e-}04$ & $3.16\text{e-}03\pm 3.69\text{e-}04$ \\
        \bottomrule
    \end{tabular}
\end{table}

\begin{table}[htbp]
    \centering
    \caption{Relative error in Sine-Gordon $d=100$}\label{table: sg100}
    \begin{adjustbox}{width=\columnwidth, center}
    \begin{tabular}{l cccc}
        \textbf{Points} & \textbf{Vanilla} & \textbf{RAD} & \textbf{Halton} & \textbf{Sobol} \\
        \toprule
        1000 & $1.09\text{e-}02\pm 2.00\text{e-}04$ & $1.09\text{e-}02\pm 3.51\text{e-}04$ & \cellcolor{blue!25}$7.33\text{e-}03\pm 1.49\text{e-}04$ & $1.08\text{e-}02\pm 3.06\text{e-}04$ \\
        2000 & $1.09\text{e-}02\pm 3.61\text{e-}04$ & $1.08\text{e-}02\pm 2.65\text{e-}04$ & \cellcolor{blue!25}$7.90\text{e-}03\pm 1.15\text{e-}04$ & $1.10\text{e-}02\pm 2.65\text{e-}04$ \\
        4000 & $1.10\text{e-}02\pm 2.52\text{e-}04$ & \cellcolor{blue!25}$1.09\text{e-}02\pm 1.53\text{e-}04$ & $1.09\text{e-}02\pm 3.06\text{e-}04$ & $1.09\text{e-}02\pm 2.52\text{e-}04$ \\
        8000 & $1.10\text{e-}02\pm 1.53\text{e-}04$ & $1.09\text{e-}02\pm 2.52\text{e-}04$ & \cellcolor{blue!25}$1.08\text{e-}02\pm 3.06\text{e-}04$ & $1.10\text{e-}02\pm 5.77\text{e-}05$ \\
        \bottomrule
    \end{tabular}
    \end{adjustbox}
\end{table}
\section{Details of cost}\label{appendix: cost}
For the experiments of generating the low-discrepancy sequences, we warm-up 2 times and run 20 times to obtain the expectation and the standard deviation. The used package is SciPy \cite{2020SciPy-NMeth}. \cref{fig:cost} has already indicated that both the memory consumption and the execution time exhibit a linear dependence on $N$. However, based on the detail shown in \cref{table: Halton_time,table: Sobol_time,table: Halton_memory,table: Sobol_memory}, the memory consumption is also linear dependent on $d$ while the execution time is not. Furthermore, because of the difference of the generating algorithm, Halton sequences and Sobol' sequences exhibit different sensitivities to dimensionality, with Sobol showing a more pronounced increase than Halton in computation time under high-dimensional problems.
\begin{table}[ht]
    \centering
    \caption{Halton Execution Time (ms)}\label{table: Halton_time}
    \begin{tabular}{cccc}
        \toprule
        \multirow{2}{*}{N} & \multicolumn{3}{c}{dim} \\
        \cmidrule(lr){2-4}
        & 3 & 10 & 50 \\
        \midrule
        100 & $3.20 \times 10^{-1} \pm 2.02 \times 10^{-2}$ & $2.81 \times 10^{-1} \pm 1.27 \times 10^{-2}$ & $4.49 \times 10^{-1} \pm 9.88 \times 10^{-3}$ \\
        2000 & $4.71 \times 10^{-1} \pm 1.24 \times 10^{-2}$ & $5.56 \times 10^{-1} \pm 1.37 \times 10^{-2}$ & $1.28 \times 10^{0} \pm 3.13 \times 10^{-2}$ \\
        4000 & $6.70 \times 10^{-1} \pm 1.04 \times 10^{-2}$ & $2.92 \times 10^{0} \pm 8.86 \times 10^{0}$ & $2.16 \times 10^{0} \pm 3.33 \times 10^{-2}$ \\
        8000 & $1.20 \times 10^{0} \pm 2.92 \times 10^{-1}$ & $1.57 \times 10^{0} \pm 7.20 \times 10^{-2}$ & $4.00 \times 10^{0} \pm 2.85 \times 10^{-2}$ \\
        16000 & $2.08 \times 10^{0} \pm 4.41 \times 10^{-2}$ & $2.98 \times 10^{0} \pm 1.01 \times 10^{-1}$ & $7.96 \times 10^{0} \pm 9.03 \times 10^{-2}$ \\
        32000 & $4.16 \times 10^{0} \pm 9.41 \times 10^{-2}$ & $6.10 \times 10^{0} \pm 1.92 \times 10^{-1}$ & $1.74 \times 10^{1} \pm 1.09 \times 10^{0}$ \\
        64000 & $6.76 \times 10^{0} \pm 2.25 \times 10^{-1}$ & $1.27 \times 10^{1} \pm 2.98 \times 10^{-1}$ & $3.73 \times 10^{1} \pm 8.06 \times 10^{-1}$ \\
        128000 & $1.42 \times 10^{1} \pm 4.88 \times 10^{-1}$ & $2.67 \times 10^{1} \pm 4.59 \times 10^{-1}$ & $8.59 \times 10^{1} \pm 2.56 \times 10^{-1}$ \\
        256000 & $3.03 \times 10^{1} \pm 9.14 \times 10^{-1}$ & $5.67 \times 10^{1} \pm 1.16 \times 10^{0}$ & $2.05 \times 10^{2} \pm 2.78 \times 10^{0}$ \\
        512000 & $6.76 \times 10^{1} \pm 6.25 \times 10^{0}$ & $1.29 \times 10^{2} \pm 2.97 \times 10^{-1}$ & $4.46 \times 10^{2} \pm 3.07 \times 10^{0}$ \\
        1024000 & $1.47 \times 10^{2} \pm 1.84 \times 10^{1}$ & $2.97 \times 10^{2} \pm 2.16 \times 10^{0}$ & $9.53 \times 10^{2} \pm 1.93 \times 10^{0}$ \\
        \bottomrule
    \end{tabular}
\end{table}
 
\begin{table}[ht]
    \centering
    \caption{Sobol Execution Time (ms)}\label{table: Sobol_time}
    \begin{tabular}{cccc}
        \toprule
        \multirow{2}{*}{N} & \multicolumn{3}{c}{dim} \\
        \cmidrule(lr){2-4}
        & 3 & 10 & 50 \\
        \midrule
        100 & $3.46 \times 10^{-1} \pm 5.40 \times 10^{-2}$ & $2.68 \times 10^{-1} \pm 8.64 \times 10^{-3}$ & $2.82 \times 10^{-1} \pm 8.24 \times 10^{-3}$ \\
        2000 & $3.53 \times 10^{-1} \pm 1.48 \times 10^{-2}$ & $3.09 \times 10^{-1} \pm 1.50 \times 10^{-2}$ & $4.25 \times 10^{-1} \pm 1.14 \times 10^{-2}$ \\
        4000 & $3.79 \times 10^{-1} \pm 1.04 \times 10^{-2}$ & $3.57 \times 10^{-1} \pm 1.55 \times 10^{-2}$ & $5.51 \times 10^{-1} \pm 1.79 \times 10^{-2}$ \\
        8000 & $4.43 \times 10^{-1} \pm 1.62 \times 10^{-2}$ & $4.31 \times 10^{-1} \pm 1.90 \times 10^{-2}$ & $7.85 \times 10^{-1} \pm 2.14 \times 10^{-2}$ \\
        16000 & $5.55 \times 10^{-1} \pm 2.63 \times 10^{-2}$ & $5.96 \times 10^{-1} \pm 6.10 \times 10^{-2}$ & $1.24 \times 10^{0} \pm 1.99 \times 10^{-2}$ \\
        32000 & $8.61 \times 10^{-1} \pm 4.56 \times 10^{-1}$ & $8.89 \times 10^{-1} \pm 1.00 \times 10^{-1}$ & $2.38 \times 10^{0} \pm 1.70 \times 10^{-1}$ \\
        64000 & $7.61 \times 10^{-1} \pm 1.08 \times 10^{-1}$ & $1.46 \times 10^{0} \pm 1.44 \times 10^{-1}$ & $5.52 \times 10^{0} \pm 2.18 \times 10^{0}$ \\
        128000 & $2.20 \times 10^{0} \pm 7.10 \times 10^{-1}$ & $2.92 \times 10^{0} \pm 6.64 \times 10^{-1}$ & $2.75 \times 10^{1} \pm 1.99 \times 10^{-1}$ \\
        256000 & $3.42 \times 10^{0} \pm 1.03 \times 10^{0}$ & $5.37 \times 10^{0} \pm 9.74 \times 10^{-1}$ & $5.35 \times 10^{1} \pm 2.19 \times 10^{-1}$ \\
        512000 & $1.08 \times 10^{1} \pm 3.99 \times 10^{0}$ & $2.51 \times 10^{1} \pm 1.19 \times 10^{0}$ & $1.06 \times 10^{2} \pm 1.05 \times 10^{0}$ \\
        1024000 & $1.27 \times 10^{1} \pm 4.13 \times 10^{0}$ & $4.87 \times 10^{1} \pm 2.09 \times 10^{0}$ & $2.08 \times 10^{2} \pm 5.55 \times 10^{-1}$ \\
        \bottomrule
    \end{tabular}
\end{table}
 
\begin{table}[ht]
    \centering
    \caption{Halton Memory Usage (KB)}\label{table: Halton_memory}
    \begin{tabular}{cccc}
        \toprule
        \multirow{2}{*}{N} & \multicolumn{3}{c}{dim} \\
        \cmidrule(lr){2-4}
        & 3 & 10 & 50 \\
        \midrule
        100 & $3.21 \times 10^{0} \pm 5.23 \times 10^{-3}$ & $1.02 \times 10^{1} \pm 0.00 \times 10^{0}$ & $5.11 \times 10^{1} \pm 0.00 \times 10^{0}$ \\
        2000 & $4.88 \times 10^{1} \pm 0.00 \times 10^{0}$ & $1.62 \times 10^{2} \pm 0.00 \times 10^{0}$ & $8.11 \times 10^{2} \pm 0.00 \times 10^{0}$ \\
        4000 & $9.68 \times 10^{1} \pm 0.00 \times 10^{0}$ & $3.22 \times 10^{2} \pm 5.23 \times 10^{-3}$ & $1.61 \times 10^{3} \pm 0.00 \times 10^{0}$ \\
        8000 & $1.93 \times 10^{2} \pm 0.00 \times 10^{0}$ & $6.42 \times 10^{2} \pm 0.00 \times 10^{0}$ & $3.21 \times 10^{3} \pm 0.00 \times 10^{0}$ \\
        16000 & $3.85 \times 10^{2} \pm 0.00 \times 10^{0}$ & $1.28 \times 10^{3} \pm 0.00 \times 10^{0}$ & $6.41 \times 10^{3} \pm 0.00 \times 10^{0}$ \\
        32000 & $7.69 \times 10^{2} \pm 0.00 \times 10^{0}$ & $2.56 \times 10^{3} \pm 0.00 \times 10^{0}$ & $1.28 \times 10^{4} \pm 0.00 \times 10^{0}$ \\
        64000 & $1.54 \times 10^{3} \pm 0.00 \times 10^{0}$ & $5.12 \times 10^{3} \pm 0.00 \times 10^{0}$ & $2.56 \times 10^{4} \pm 0.00 \times 10^{0}$ \\
        128000 & $3.07 \times 10^{3} \pm 0.00 \times 10^{0}$ & $1.02 \times 10^{4} \pm 0.00 \times 10^{0}$ & $5.12 \times 10^{4} \pm 0.00 \times 10^{0}$ \\
        256000 & $6.14 \times 10^{3} \pm 0.00 \times 10^{0}$ & $2.05 \times 10^{4} \pm 0.00 \times 10^{0}$ & $1.02 \times 10^{5} \pm 0.00 \times 10^{0}$ \\
        512000 & $1.23 \times 10^{4} \pm 0.00 \times 10^{0}$ & $4.10 \times 10^{4} \pm 0.00 \times 10^{0}$ & $2.05 \times 10^{5} \pm 0.00 \times 10^{0}$ \\
        1024000 & $2.46 \times 10^{4} \pm 0.00 \times 10^{0}$ & $8.19 \times 10^{4} \pm 0.00 \times 10^{0}$ & $4.10 \times 10^{5} \pm 0.00 \times 10^{0}$ \\
        \bottomrule
    \end{tabular}
\end{table}
 
\begin{table}[ht]
    \centering
    \caption{Sobol Memory Usage (KB)}\label{table: Sobol_memory}
    \begin{tabular}{cccc}
        \toprule
        \multirow{2}{*}{N} & \multicolumn{3}{c}{dim} \\
        \cmidrule(lr){2-4}
        & 3 & 10 & 50 \\
        \midrule
        100 & $2.68 \times 10^{0} \pm 0.00 \times 10^{0}$ & $8.28 \times 10^{0} \pm 0.00 \times 10^{0}$ & $4.03 \times 10^{1} \pm 0.00 \times 10^{0}$ \\
        2000 & $4.83 \times 10^{1} \pm 0.00 \times 10^{0}$ & $1.60 \times 10^{2} \pm 0.00 \times 10^{0}$ & $8.00 \times 10^{2} \pm 0.00 \times 10^{0}$ \\
        4000 & $9.63 \times 10^{1} \pm 0.00 \times 10^{0}$ & $3.20 \times 10^{2} \pm 0.00 \times 10^{0}$ & $1.60 \times 10^{3} \pm 0.00 \times 10^{0}$ \\
        8000 & $1.92 \times 10^{2} \pm 0.00 \times 10^{0}$ & $6.40 \times 10^{2} \pm 0.00 \times 10^{0}$ & $3.20 \times 10^{3} \pm 0.00 \times 10^{0}$ \\
        16000 & $3.84 \times 10^{2} \pm 0.00 \times 10^{0}$ & $1.28 \times 10^{3} \pm 0.00 \times 10^{0}$ & $6.40 \times 10^{3} \pm 0.00 \times 10^{0}$ \\
        32000 & $7.68 \times 10^{2} \pm 0.00 \times 10^{0}$ & $2.56 \times 10^{3} \pm 0.00 \times 10^{0}$ & $1.28 \times 10^{4} \pm 0.00 \times 10^{0}$ \\
        64000 & $1.54 \times 10^{3} \pm 0.00 \times 10^{0}$ & $5.12 \times 10^{3} \pm 0.00 \times 10^{0}$ & $2.56 \times 10^{4} \pm 0.00 \times 10^{0}$ \\
        128000 & $3.07 \times 10^{3} \pm 0.00 \times 10^{0}$ & $1.02 \times 10^{4} \pm 0.00 \times 10^{0}$ & $5.12 \times 10^{4} \pm 0.00 \times 10^{0}$ \\
        256000 & $6.14 \times 10^{3} \pm 0.00 \times 10^{0}$ & $2.05 \times 10^{4} \pm 0.00 \times 10^{0}$ & $1.02 \times 10^{5} \pm 0.00 \times 10^{0}$ \\
        512000 & $1.23 \times 10^{4} \pm 0.00 \times 10^{0}$ & $4.10 \times 10^{4} \pm 0.00 \times 10^{0}$ & $2.05 \times 10^{5} \pm 0.00 \times 10^{0}$ \\
        1024000 & $2.46 \times 10^{4} \pm 0.00 \times 10^{0}$ & $8.19 \times 10^{4} \pm 0.00 \times 10^{0}$ & $4.10 \times 10^{5} \pm 0.00 \times 10^{0}$ \\
        \bottomrule
    \end{tabular}
\end{table}
\section{Ablation Study}\label{appendix: ablation}
Except the $d=100$ experiments in \cref{table:ablation}, we also conducted the experiments of Poisson's equations \cref{eq: poisson} for $d=3$ and $d=100$ and the results are shown in \cref{table:ablation_3_10}. The results reveals that, replacing the sampling pool by low-discrepancy sequences is more accurate for RAD, the corresponding maximum promotion is 69.4\% when $\alpha=10$ and $d=3$. However, when implementing ACLE methods, low-discrepancy sequences are invalid and decrease the accuracy. We argue that randomly sampling the $P_{pool}$ is better to detect the convergence degree $\alpha$. Herein, utilizing low-discrepancy sequences will reduce the infinite $P_{pool}$ to a finite point set, although it can be alleviated by increasing $N_{scale}$.

\begin{table}[ht]
\caption{Ablation study $d=3$ and $d=10$}
\label{table:ablation_3_10}
\begin{center}
\begin{tabular}{lllll}
\multicolumn{1}{c}{\bf $d$ } & \multicolumn{1}{c}{\bf $\alpha$ } & \multicolumn{1}{c}{\bf Sampling Pool} & \multicolumn{1}{c}{\bf RAD} &\multicolumn{1}{c}{\bf ACLE}  \\ \toprule 
\multirow{9}{*}{3}     & \multirow{3}{*}{1}    & Vanilla & $2.00e-04\pm 9.83e-05$& \cellcolor{blue!25}$9.90e-05\pm 3.41e-05$ \\                       
                       &                       & Halton  & $1.77e-04\pm 7.85e-05$& $3.06e-04\pm 1.33e-04$ \\                       
                       &                       & Sobol   & \cellcolor{blue!25}$6.72e-05\pm 1.21e-05$& $2.40e-04\pm 1.41e-04$ \\ \cmidrule(lr){2-5}              
                       & \multirow{3}{*}{10}   & Vanilla & $3.11e-03\pm 1.84e-03$& \cellcolor{blue!25}$1.04e-03\pm 3.30e-04$ \\                       
                       &                       & Halton  & \cellcolor{blue!25}$9.49e-04\pm 5.74e-04$& $2.30e-03\pm 1.06e-02$ \\                       
                       &                       & Sobol   & $1.44e-03\pm 1.50e-04$& $4.37e-03\pm 2.87e-03$ \\ \cmidrule(lr){2-5}            
                       & \multirow{3}{*}{100}  & Vanilla & \cellcolor{blue!25}$5.51e-02\pm 1.22e-02$& \cellcolor{blue!25}$7.68e-02\pm 2.70e-02$ \\                       
                       &                       & Halton  & $6.12e-02\pm 1.69e-02$& $2.29e-01\pm 1.02e-01$ \\                       
                       &                       & Sobol   & $6.42e-02\pm 6.24e-03$& $2.19e-01\pm 4.60e-02$ \\     \midrule   
\multirow{9}{*}{10}    & \multirow{3}{*}{1}    & Vanilla & $1.04e-03\pm 3.27e-04$& \cellcolor{blue!25}$7.98e-04\pm 4.85e-05$ \\                       
                       &                       & Halton  & \cellcolor{blue!25}$8.18e-04\pm 4.55e-05$& $8.22e-04\pm 1.12e-04$ \\                       
                       &                       & Sobol   & $8.46e-04\pm 1.06e-04$& $8.23e-04\pm 5.63e-05$ \\ \cmidrule(lr){2-5}              
                       & \multirow{3}{*}{10}   & Vanilla & $3.31e-02\pm 1.80e-02$& \cellcolor{blue!25}$1.89e-02\pm 2.93e-03$ \\                       
                       &                       & Halton  & \cellcolor{blue!25}$1.88e-02\pm 3.02e-03$& $2.43e-02\pm 6.94e-03$ \\                       
                       &                       & Sobol   & $2.62e-02\pm 2.87e-03$& $2.85e-02\pm 1.19e-03$ \\ \cmidrule(lr){2-5}            
                       & \multirow{3}{*}{100}  & Vanilla & $1.86e+00\pm 7.17e-01$& $1.04e+00\pm 3.86e-02$ \\                       
                       &                       & Halton  & $9.63e-01\pm 2.52e-02$& $4.01e+00\pm 4.66e+00$ \\                       
                       &                       & Sobol   & $3.32e+01\pm 5.23e+01$& $4.55e+00\pm 2.81e+00$ \\
\bottomrule
\end{tabular}
\end{center}
\end{table}

\section{Details of stochastic Taylor derivative estimator}\label{appendix: stde}
\subsection{Introduction}

The Stochastic Taylor Derivative Estimator (STDE) is a novel framework designed to efficiently estimate arbitrary high-order or high-dimensional differential operators, addressing the critical computational bottlenecks of traditional automatic differentiation (AD) methods. STDE bridges the gap between univariate high-order AD (Taylor mode) and multivariate derivative tensor contractions, enabling scalable and accurate computation of high-dimensional and high-order differential operators that are intractable for conventional methods. 

Given a function \(u: \mathbb{R}^d \to \mathbb{R}\), a differentiable operator $\mathcal{L}$ can be expressed by: 
\begin{equation}\label{eq: stde differentiable operator}
\mathcal{L}u(x) = \sum_{\alpha \in \mathcal{I}(\mathcal{L})} C_{\alpha} \mathcal{D}^{\alpha} u(x),
\end{equation}

where $\alpha = (\alpha_1, \alpha_2, \ldots, \alpha_d)$ is a multi-index, $\mathcal{D}^{\alpha} = \frac{\partial^{|\alpha|}}{\partial x_1^{\alpha_1} \cdots \partial x_d^{\alpha_d}}$ denotes the partial derivative of order $|\alpha| = \sum \alpha_i$, and $C_{\alpha}$ are coefficients defining the operator $\mathcal{L}$. \cref{eq: stde differentiable operator} can be rewritten using tensor contractions:

\begin{equation}
\mathcal{L}u(x) = D_u^k(x) \cdot C(\mathcal{L}),
\end{equation}

where $D_u^k(x) \in \mathbb{R}^{d^k}$ is the $k$-th order derivative tensor of $u$ at $x$, $C(\mathcal{L}) \in \mathbb{R}^{d^k}$ is the coefficient tensor (with entries $C_{\alpha}$ at positions corresponding to $\alpha$), and $\cdot$ denotes tensor dot product.

To estimate $\mathcal{L}u(x)$, STDE constructs random jets such that the expectation of $\partial^l u(J_g^l)$ over jet samples equals the target contraction $D_u^k(x) \cdot C(\mathcal{L})$. Formally, for a distribution $p$ over $l$-jets, we require:

\begin{equation}
\mathbb{E}_{J_g^l \sim p} \left[ \partial^l u(J_g^l) \right] = D_u^k(x) \cdot C(\mathcal{L}).
\end{equation}

This is achieved by designing $p$ such that the expected value of the tangent product tensor matches $C(\mathcal{L})$:

\begin{equation}
\mathbb{E} \left[ \bigotimes_{i=1}^k v^{(v_i)} \right] = C(\mathcal{L}),
\end{equation}

where $v^{(v_i)}$ denotes the $v_i$-th tangent in the jet.

By averaging over multiple jet samples, STDE produces a stochastic estimator of $\mathcal{L}u(x)$ with computational complexity $O(k^2 d L)$ (where $L$ is the number of operations in the forward graph) and memory complexity $O(kd)$, avoiding the exponential scaling in $k$ and polynomial scaling in $d$ of traditional methods. This efficiency enables STDE to handle differential operators of high-order and high-dimension.
\subsection{Equations}
In STDE, we consider the following three PDEs:
\begin{itemize}
    \item Semi-linear Heat equation.
    \begin{equation}\label{eq: stde heat}
            \mathcal{L}u(\boldsymbol{x}, t) = \nabla^2 u(\boldsymbol{x}, t) + \frac{1 - u(\boldsymbol{x}, t)^2}{1 + u(\boldsymbol{x}, t)^2}, \quad (\boldsymbol{x},t) \in [-0.5,0.5]^d\times [0,T].
    \end{equation}
    with initial condition \( u(\boldsymbol{x},0) = 5/(10 + 2\|\boldsymbol{x}\|^2) \),
    \item Allen-Cahn equation
    \begin{equation}\label{eq: stde ac}
    \mathcal{L}u(\boldsymbol{x}, t) = \nabla^2 u(\boldsymbol{x}, t) + u(\boldsymbol{x}, t) - u(\boldsymbol{x}, t)^3, \quad (\boldsymbol{x},t) \in [-0.5,0.5]^d\times [0,T].
    \end{equation}
    with initial condition \( u(\boldsymbol{x},0) = \arctan(\max_i x_i) \),
    \item Sine-Gordon equation
    \begin{equation}\label{eq: stde sg}
    \mathcal{L}u(\boldsymbol{x}, t) = \nabla^2 u(\boldsymbol{x}, t) + \sin(u(\boldsymbol{x}, t)),\quad (\boldsymbol{x},t) \in [-0.5,0.5]^d\times [0,T].
    \end{equation}
    with initial condition \( u(\boldsymbol{x},0) = 5/(10 + 2\|\boldsymbol{x}\|^2) \),
\end{itemize}
All three equation uses the test point $\boldsymbol{x}_{\text{test}} = \boldsymbol{0}$ and terminal time $T = 0.3$. The used numerical solution $u(\boldsymbol{0},0.3)$ is listed in \cref{table: stde_numerical}.
\begin{table}[htbp]
    \centering
    \caption{Numerical solution $u(\boldsymbol{0},0.3)$}\label{table: stde_numerical}
    \begin{tabular}{l c c}  
        Equation & $d$ & $u(\boldsymbol{0},0.3)$ \\
       \toprule
        \multirow{3}{*}{\textbf{Allen-Cahn}} 
        & $10^2$ & $1.04510$ \\
        & $10^3$ & $1.09100$ \\
        & $10^4$ & $1.11402$ \\
        \midrule
        \multirow{3}{*}{\textbf{Heat}} 
        & $10^2$ & $0.31674$ \\
        & $10^3$ & $0.28753$ \\
        & $10^4$ & $0.28433$ \\
        \midrule
        \multirow{3}{*}{\textbf{Sine-Gordon}} 
        & $10^2$ & $0.0528368$ \\
        & $10^3$ & $0.0055896$ \\
        & $10^4$ & $0.0005621$ \\
        \bottomrule
    \end{tabular}

\end{table}

\end{document}